\newtheorem{lemma}{Lemma}
\newcommand\norm[1]{\left\lVert#1\right\rVert} 
\newcommand{\given}{\,\vert\,} 
\newtheorem{prop}{Proposition}
\newtheorem{corollary}{Case}[prop] 
\pgfplotsset{compat=1.17}
\begin{document}

\title{
Optimal Ensemble Construction \\ for Multi-Study Prediction with Applications to COVID-19 Excess Mortality Estimation
}

\author{Gabriel Loewinger,$^{\dagger,1}$  Rolando Acosta Nunez,$^{\dagger}$ Rahul Mazumder$^{\ddag}$  and Giovanni Parmigiani$^{\dagger, \ast}$ \\[4pt]
$^{\dagger}$\textit{Department of Biostatistics, Harvard School of Public Health, Boston, MA} \\[2pt]
$^{\ddag}$\textit{MIT Sloan School of Management, Operations Research Center and MIT Center for Statistics, Cambridge, MA}
\\[2pt]
$^{\ast}$\textit{Department of Data Science,
Dana Farber Cancer Institute, Boston, MA}}

\renewcommand{\shorttitle}{Optimal Ensemble Construction}

\maketitle
\footnotetext[1]{To whom correspondence should be addressed. \textit{gloewinger@gmail.com}}

\hypersetup{
pdfkeywords={First keyword, Second keyword, More},
}
\begin{abstract}{
It is increasingly common to encounter prediction tasks in the biomedical sciences for which multiple datasets are available for model training. Common approaches such as pooling datasets and applying standard statistical learning methods can result in poor out-of-study prediction performance when datasets are heterogeneous. Theoretical and applied work has shown \textit{multi-study ensembling} to be a viable alternative that leverages the variability across datasets in a manner that promotes model generalizability. Multi-study ensembling uses a two-stage \textit{stacking} strategy which fits study-specific models and estimates ensemble weights separately. This approach ignores, however, the ensemble properties at the model-fitting stage, potentially resulting in a loss of efficiency. We therefore propose \textit{optimal ensemble construction}, an \textit{all-in-one} approach to multi-study stacking whereby we jointly estimate ensemble weights as well as parameters associated with each study-specific model via a unified optimization formulation. We establish that limiting cases of our approach yield existing methods such as multi-study stacking and pooling datasets before model fitting. We propose an efficient block coordinate descent algorithm to optimize the proposed loss function. 
We compare our approach to standard methods by applying it to a multi-country COVID-19 dataset for baseline mortality prediction. We show that when little data is available for a country before the onset of the pandemic, leveraging data from other countries can substantially improve prediction accuracy. Importantly, our approach outperforms multi-study stacking and other standard methods in this application. We further characterize the method's performance in data-driven simulations and other numerical experiments. Our method remains competitive with or outperforms multi-study stacking and other earlier methods across a range of between-study heterogeneity.
}
\end{abstract}


\newpage
\section{Introduction}
\label{sec1}

\subsection{Background}
Statistical methods to leverage information from different studies and sources are critical for generating prediction algorithms that are replicable across populations and experimental settings. The increasing availability of biomedical databases in, for example, neuroimaging (OpenNeuro \citep{openneuro}), genetics (e.g., GWAS Catalogue \citep{gwas}), HIV (Stanford HIV database \citep{Ramon}) and cancer genomics \citep{ovarian, geo} has resulted in increased interest in the utilization of data from multiple sources when training prediction algorithms. This is further motivated by the observation that prediction performance assessed within-study (e.g., with cross validation) is often far more optimistic than the performance evaluated out-of-study \citep{Ma, Chang, Bernau}. 
Despite promising initial progress, much remains to be investigated on how to combine information across datasets or ``studies'' in a manner that promotes model generalizability. A simple, but at times powerful approach is to merge datasets before model fitting. This can fail to capture, however, the heterogeneity across datasets and can therefore perform poorly when studies are heterogeneous. In this case different models for each study may lead to better fits, and their ensemble to better out-of-study predictions \citep{Patil, Guan}. To account for the heterogeneity, one approach is to pre-process the studies with data harmonization techniques before model fitting \citep{combatMultiCenter, ZhangY}. While this is generally helpful, it can be insufficient, and potentially remove genuine signal \citep{Nygaard, combatLimits}. 

\subsection{Related Literature}\label{subs:previous}
Multiple related sub-fields of transfer learning \citep{DA_TL_review} contain a rich body of methodologies geared towards the multi-study problem specifically as well as the ``dataset shift'' issue more generally. Broadly speaking, these methods aim to deal with variation in the distribution of the training and test dataset(s). ``Concept shift'' refers to changes in the conditional probability of labels given predictors, while ``covariate shift,'' describes discrepancies in the joint distribution of the covariates between training and test data. ``Hybrid shift'' describes the case when both concept and covariate shift occur \citep{DA_TL_review, Yang}.

Our motivating application can be described as a multi-source transfer learning problem \citep{multiSourceTL, DA_TL_review}. These describe settings where the modeler has a small sample of labeled training data drawn from the distribution of interest, but seeks to enhance prediction performance through borrowing labeled data from other related studies, or ``domains.'' Typically these other auxiliary data sources share common covariates and outcomes. The distribution of these data are thought to be similar to the distribution of the data of the target study. This is closely related to multi-source domain adaptation which proposes strategies that use information from multiple datasets, but also leverage the covariates of the test set to tailor the model to the target domain \citep{DA_TL_review, domainAdptReview}. The domain generalization literature focuses on leveraging multiple datasets in model training to improve model generalizability, to enhance prediction performance on a new, unknown, but related, ``domain'' \citep{domGen_review}. Our methods are inspired both by this vast literature in transfer learning as well as related methods in ``multi-study'' statistics that aim to draw upon multiple data sources in inference \citep{jordan, ibrahim2020}, supervised prediction \citep{Loewinger, Guan, Ramchandran, ren}, and unsupervised learning \citep{multiStudyFactor, factorAnaly}. 

Previous work in this area proposed multi-study ensembling in conjunction with a generalization of stacking, an ensemble weight estimation method \citep{Breiman2}, as a flexible strategy to aggregate information from different studies \citep{Patil, Ramchandran, Loewinger, ren, Guan}. The approach involves two separate stages: A) training one or more models on each study separately, and B) constructing an ensemble prediction rule that is a weighted average of the predictions from each of the study-specific models. The ensemble weights are estimated in step B through ``multi-study stacking,'' (MSS) by regressing the outcome of all training studies against the predictions of each of these models, using all available training data sets. Heuristically, this method rewards cross-study prediction performance by determining the ensemble weight associated with a model fit on one study based upon how well that model predicts across all the training data sets. 

To formalize, we assume we have $K$ training studies and we denote the outcome vector of training study $k$ as $\mathbf{y}_k \in \mathbb{R}^{n_k}$, and its design matrix as $\mathbb{X}_k \in \mathbb{R}^{n_k \times (p + 1)}$. In addition to~$p$ predictors, $\mathbb{X}_k$ includes a column of ones for an intercept. We let $N = \sum_{k=1}^K n_k$. 
Multi-study stacking as originally proposed consists of two stages. In stage~A) we separately fit models $\mathbf{\hat{Y}}_k (\cdot)$ on data from the $k^{th}$ study, for every $k$. In stage~B) we compute predictions made by these models at the observed covariates of study $k'$ as $\mathbf{\hat{Y}}_k (\mathbb{X}_{k'})$ for every $k'$. Define $\mathbf{y}$ and $\hat{\mathbb{X}}$ as follows:
\[ \mathbf{y} = \begin{bmatrix}
\mathbf{y}_{1}\\
\vdots \\
\mathbf{y}_{K}\\
  \end{bmatrix}_{N \times 1}~~~~~~~~
  \mathbb{\hat{X}} = \begin{bmatrix}
{\mathbf{\hat{Y}}_1} (\mathbb{X}_{1}) & {\mathbf{\hat{Y}}_2} (\mathbb{X}_{1}) &  ... & {\mathbf{\hat{Y}}_K} (\mathbb{X}_{1})\\
\vdots & \vdots  &\ddots  & \vdots\\
\mathbf{\hat{Y}}_1 (\mathbb{X}_{K}) & {\mathbf{\hat{Y}}_2} (\mathbb{X}_{K}) &  ... & {\mathbf{\hat{Y}}_K} (\mathbb{X}_{K})\\
  \end{bmatrix}_{N \times K} \]
We then fit a stacking regression to predict $\mathbf{y}$ given $\hat{\mathbb{X}}$ using, for example, non-negative least squares (NNLS) or Ridge regression, typically with an intercept term.
The resulting coefficients serve as the weights of the ensemble prediction and are denoted by the $p+1$ dimensional vector~$\hat{\boldsymbol{w}}$. At data point $\mathbf{x}$ the prediction rule is $\mathbf{\hat{Y}}_{Stack} \left ( \mathbf{x} \right ) = \hat{w_0} + \sum_{k=1}^K \hat{w_k} \mathbf{\hat{Y}}_k \left ( \mathbf{x} \right)$.
    
Theoretical work has described settings in which this type of ensembling is expected to outperform merging data sets before model fitting here labeled ``Trained-on-the-Merged dataset'' (ToM). For example \cite{Guan} compare the ToM approach to ensembling when the learner is either linear or Ridge regression, under a linear mixed effects data generating model where true regression coefficients vary with $k$. They show that when between-study heterogeneity, as measured by the variance of the random regression coefficients, is sufficiently high, ensembling is expected to outperform~ToM. 

Subsequent work generalized multi-study ensembling and formalized two approaches: {\it generalist stacking} (described above) for domain generalization settings, in which we aim to train models that generalize to an unseen study, and {\it specialist stacking} for multi-source transfer learning settings, in which we aim to tailor the ensemble to predict well on a target study for which labeled data is available at the time of model training \citep{ren}. For example, say we wish to engineer the ensemble to predict well on future data that follows the same distribution as data from training study $k^*$, where $k^* \in [K]$ and $[K]$ denotes the set $\{1,2,...,K\}$. This can be accomplished by regressing $\mathbf{y}_{k^*}$ onto $\hat{\mathbb{X}}_{k^*}$ where $\hat{\mathbb{X}}_{k^*} = \begin{bmatrix}
{\mathbf{\hat{Y}}_1} (\mathbb{X}_{k^*}) & {\mathbf{\hat{Y}}_2} (\mathbb{X}_{k^*}) &  ... & {\mathbf{\hat{Y}}_K} (\mathbb{X}_{k^*})
  \end{bmatrix}_{n_{k^*} \times K}$. By setting up the stacking regression in this manner, specialist stacking estimates ensemble weights appropriate for data similar to training set $k^*$. Similar architecture for stacking in transfer learning settings has been proposed previously \citep{stacktrans}. 
  
\cite{ren} further proposed a ``no data reuse'' (NDR) specialist stacking procedure that excludes data used in stage~A from the stage~B stacking. Briefly, this is accomplished by dividing training data for the target study into $L$ folds. Taking $L=2$ for ease of explanation, let the data in the two folds be $\{\mathbf{y}_{k^*}^1, \mathbb{X}_{k^*}^1\}$ and $\{\mathbf{y}_{k^*}^2, \mathbb{X}_{k^*}^2\}$. Then train model $\mathbf{\hat{Y}}_{k^*}^1()$ on $\{\mathbf{y}_{k^*}^1, \mathbb{X}_{k^*}^1\}$ and a separate model, $\mathbf{\hat{Y}}_{k^*}^2()$, on $\{\mathbf{y}_{k^*}^2, \mathbb{X}_{k^*}^2\}$. Then when fitting the stacking regression, data reuse is avoided by using $\mathbf{\hat{Y}}_{k^*}^2()$ to make predictions on $\{\mathbf{y}_{k^*}^1, \mathbb{X}_{k^*}^1\}$ and vice-versa. Specifically, the stacking procedure involves regressing $\mathbf{y}_{k^*}$ onto $\hat{\mathbb{X}}_{k^*}$ where \[ \mathbf{y}_{k^*} = \begin{bmatrix}
\mathbf{y}_{k^*}^1\\
\mathbf{y}_{k^*}^2\\
\end{bmatrix}_{n_{k^*} \times 1}~~~~~~~~
  \mathbb{\hat{X}}_{k^*} = \begin{bmatrix}
{\mathbf{\hat{Y}}_1} (\mathbb{X}_{k^*}^1) & {\mathbf{\hat{Y}}_2} (\mathbb{X}_{k^*}^1) & ... & {\mathbf{\hat{Y}}_{k^*}}^2 (\mathbb{X}_{k^*}^1) & ... & {\mathbf{\hat{Y}}_K} (\mathbb{X}_{k^*}^1)\\
{\mathbf{\hat{Y}}_1} (\mathbb{X}_{k^*}^2) & {\mathbf{\hat{Y}}_2}
(\mathbb{X}_{k^*}^2) & ... &  {\mathbf{\hat{Y}}_{k^*}}^1 (\mathbb{X}_{k^*}^2) & ... & {\mathbf{\hat{Y}}_K} (\mathbb{X}_{k^*}^2)\\
  \end{bmatrix}_{n_{k^*} \times K} . \]
In practice this is implemented with more than two folds. After the estimation of ensemble weights, a third stage is implemented in which a final model for study $k^*$, $\mathbf{{Y}}_{k^*}()$, is fit using $\{ \mathbf{y}_{k^*}, \mathbb{X}_{k^*} \}$ (i.e., all available data for that study). This model is then used in the final prediction rule, $\mathbf{\hat{Y}}_{Stack} \left ( \mathbf{x} \right ) = \hat{w}_0 + \sum_{k=1}^K \hat{w}_k \mathbf{\hat{Y}}_k \left ( \mathbf{x} \right)$

In this paper we propose a simpler version of no data reuse specialist stacking, which does not require splitting the data into folds, and is amenable to a simple optimization formulation. In stage A), this algorithm trains single-study learners on all studies except the target study $k^*$. In stage B), it only uses the target training data $\{\mathbf{y}_{k^*}, \mathbb{X}_{k^*}\}$ by regressing $\mathbf{y}_{k^*}$ onto the \[\begin{bmatrix}
{\mathbf{\hat{Y}}_1} (\mathbb{X}_{k^*}) &  ... & \mathbf{\hat{Y}}_{k^* - 1} (\mathbb{X}_{k^*}) & \mathbf{\hat{Y}}_{k^* + 1} (\mathbb{X}_{k^*}) & ... & {\mathbf{\hat{Y}}_K} (\mathbb{X}_{k^*})
\end{bmatrix}_{n_{k^*} \times {K-1}} \] matrix. Stage A) builds an ensemble that captures the heterogeneity of data distributions. Stage B) specializes the weights of the ensemble to perform well on the target study. 
We refer to this variant as "no-data-reuse specialist stacking" throughout the manuscript.
 
\subsection{Optimal Ensemble Construction: \\ A general framework for simultaneous model parameter and model weight estimation}

The multi-study ensembling procedures described above are flexible and easy to implement, but one potential shortcoming is that ensemble weights and study-specific model parameters are estimated separately in each stage. These approaches may therefore lose efficiency because the ensemble properties are ignored in stage A). The main thrust of this work is to integrate both stages into a single and explicit optimization framework.

We propose a method based on multi-study ensembling and stacking that seeks to improve prediction performance by \emph{jointly} optimizing model parameters and ensemble weights. We term this ``all-in-one'' approach as \textit{optimal ensemble construction} (OEC). We formalize this below as a convex combination of the sum of study-specific model losses (e.g., study-specific mean square errors) and the stacking regression loss function. 
As an initial foray into this area, our work investigates linear models. We show that in this case the OEC yields standard stacking and the ToM as special cases of our framework. 
While we focus on linear models in this work, we note that our framework is general and may be applied to other predictive models as well. 

\subsection{Motivation: Counterfactual COVID-19 mortality estimation}
Our work was initially motivated by the estimation of COVID-19 related mortality. International comparisons of COVID-19 specific death counts suffer from important limitations. For example, due to health systems heterogeneity, the criteria for what constitutes a COVID-19 death is different across countries \citep{karanikolos2020comparable}. In part, these criteria depend on the testing capacity of the jurisdiction, which can be sub-optimal in underdeveloped countries \citep{owidcoronavirus}. Hence, excess mortality is regarded as the gold-standard method of estimating the direct and indirect effects of the COVID-19 pandemic \citep{goldstandard} and is calculated by subtracting the number of deaths that would have been expected to occur in a locality (e.g., based on historical trends) from the number of observed deaths.

In many countries, infrastructure to monitor mortality before and during a disaster is not yet fully established \citep{Karlinsky, fottrell2009dying}. Such countries may have limited data to estimate baseline mortality before the onset of a disaster, such as a pandemic, and hence the resulting estimates of excess mortality may be unreliable. For instance, \citet{Karlinsky} contacted officials in a host of countries to collect international mortality data to produce a database for the comparison and assessment of public health strategies. Many officials indicated that no data or very limited data was available prior to the onset of COVID-19. Although only a minority of nations world-wide provided any public data, \citet{Karlinsky} quoted responses from health officials in countries such as Argentina, China, India, Liberia and Vietnam to emphasize that many countries do not have the resources to collect sufficient mortality data. The following quote from a Liberian health official speaks volumes about this ongoing challenge~\citep{Karlinsky}: 
\begin{quote}
\emph{``Unfortunately, we do not have a mechanism in place at the moment to capture routine mortality data in-country nation-wide...As you may also be aware, death or mortality registration or
reporting is yet a huge challenge in developing countries...''}
\end{quote}
Even among countries that collected pre-pandemic data, samples were often limited to as little as one year of pre-pandemic data (e.g., South Africa) \citep{Karlinsky,financial}. Even further, some countries only provided quarterly or monthly data (e.g., Iran, Taiwan) which may not provide the temporal resolution needed for precise assessment of public health responses.

This motivates the implementation of multi-study methods to borrow information across countries to create better baseline mortality estimates. Leveraging information from other countries' data may improve baseline mortality estimates on the target country because shared latent factors (e.g., geographic, health systems, weather patterns) may have similar impacts on mortality trends across countries. While these factors may not be comparable across all observed countries, there may exist clusters of countries across which borrowing information is useful. Identifying $a~priori$ which countries to borrow information from may not be reliable, as even natural heuristics such as geographic proximity may be unreliable. Thus we sought to develop methods that could identify which countries would provide auxiliary data that could improve the estimation of baseline mortality for a target country in a data-driven fashion. To this end, we compare the prediction capabilities of the proposed methods against conventional approaches by using a multi-country dataset of weekly death counts for 38 countries (\textit{i.e.}, $K=38$ studies) \citep{hmd}, where we aim to predict baseline mortality in a country of interest.

\section{Methods}
\label{sec2}
    
\subsection{Notation and Problem Statement}

We next build on the notation introduced earlier and specify distributional assumptions. We assume we have $K$ observed studies available for model training. The pair of outcomes and covariates for study $k$ is $\{ \mathbf{y}_k, \mathbb{X}_k \}$.
The merged dataset has outcome variable $\mathbf{y} \in \mathbb{R}^N$ and design matrix $\mathbb{X} \in \mathbb{R}^{N \times (p+1)}$ where, 
$N = \sum_k n_k$.
We assume that the outcome ${y}_{k,i}$ for observation $i$ in study $k$ is conditionally distributed as $y_{k,i} \given \mathbf{x}_{k,i} \sim f_{y_k \given x_k}(\mathbf{y}_k\given\mathbb{X}_k)$, and that marginally the covariates $\mathbf{x}_{k,i}$ are distributed as $\mathbf{x}_{k,i} \sim f_{x_k}(\mathbb{X}_k)$. We assume that the conditional distributions of the outcome given the covariates may differ across studies. We explore both settings where the marginal distributions of the covariates, $f_{x_k}(\mathbb{X}_k)$, differ across studies, and where they are approximately the same. 

In the two-stage multi-study stacking method discussed in Section~\ref{subs:previous}, we distinguish between the model {\it parameters} (e.g., the coefficients in a linear model) and the ensemble {\it weights} used to aggregate predictions from different models. We refer to statistical learning methods (e.g., linear models, as this is our focus here) to be fit on a single study as single study learners (SSL). We denote the model parameters associated with a fit to the $k^{th}$ study as ${\boldsymbol{\gamma}}_k\in \mathbb{R}^{p+1}$. For example, in the squared $\ell_2$ penalized linear regression setting, $\hat{{\boldsymbol{\gamma}}}_k$ is the solution to the following
\begin{equation}
    \min_{\boldsymbol{\gamma}_k} \norm{ \mathbf{y}_k - \mathbb{X}_k \boldsymbol{\gamma}_k}^2_2 + \lambda \norm{\mathbb{D}_{p+1} \boldsymbol{\gamma}_k}_2^2,
     \label{eq:ssm}
\end{equation}
where $\mathbb{D}_{r} \in \mathbb{R}^{r \times r}$ is a diagonal matrix with the (1,1)-th location equal to zero and all other diagonal entries equal to one to prevent regularization of the intercept. 

We denote the estimates of model parameters obtained from a fit to the merged dataset as $\hat{{\boldsymbol{\gamma}}}^{\text{ToM}}$. For example, in the squared $\ell_2$ penalized linear regression setting, ${\boldsymbol{\gamma}}^{\text{ToM}}$ is a solution to:
\begin{equation}
    \min_{\boldsymbol{\gamma}^{\text{ToM}}} \norm{ \mathbf{y} - \mathbb{X} \boldsymbol{\gamma}^{\text{ToM}}}^2_2 + \lambda \norm{\mathbb{D}_{p+1} \boldsymbol{\gamma}^{\text{ToM}}}_2^2 .
     \label{eq:tom}
\end{equation}
We refer to variations of two-stage multi-study methods as ``generalist'' or ``specialist'', depending on the goal. For specialist methods, we further indicate whether we used a ``no data reuse'' variation.
We abbreviate the multi-study stacking generalist as MSS$^{\text{G}}$, the specialist as MSS$^{\text{S}}$ and the no data reuse specialist as MSS$^{\text{SN}}$. We denote the corresponding ensemble (stacking) weights associated with a model fit to study $k$ as ${w}_k^{\text{G}}$, ${w}_k^{\text{S}}$ or ${w}_k^{\text{SN}}$.
We do not need to distinguish between model parameters for different MSS procedures because they are equal for all MSS methods; only ensemble weights, estimated in stage B), differ. We refer to the two-stage multi-study stacking methods collectively as MSS methods. Similarly, we collectively refer to the optimal ensemble approaches as OEC methods. 

\subsection{Optimal Ensemble Construction}

We now introduce the OEC counterparts of MSS$^{\text{G}}$, MSS$^{\text{S}}$ and MSS$^{\text{SN}}$, which we refer to as OEC$^{\text{G}}$, OEC$^{\text{S}}$ and OEC$^{\text{SN}}$, respectively. We consider a single SSL per study. To avoid confusion with the two-stage multi-study stacking of section~\ref{subs:previous}, we denote the model parameters associated with the SSL fit to data from study $k$ as $\boldsymbol{\beta}_k^{\text{G}}$, $\boldsymbol{\beta}_k^{\text{S}}$, $\boldsymbol{\beta}_k^{\text{SN}} \in \mathbb{R}^{p+1}$ and the ensemble weights by $\alpha^{\text{G}}_k$, $\alpha^{\text{S}}_k$ and $\alpha^{\text{SN}}_k$, respectively. We also include ensemble intercepts that we do not constrain to be non-negative which we denote $\alpha_0^{\text{G}}$, $\alpha_0^{\text{S}}$, $\alpha_0^{\text{SN}}$. We denote the corresponding vectors, which include both the study-specific ensemble weights and the intercept, by $\boldsymbol{\alpha}^{\text{G}}, \boldsymbol{\alpha}^{\text{S}} \in \mathbb{R}^{p+1}$, and $\boldsymbol{\alpha}^{\text{SN}}\in \mathbb{R}^{p}$. We denote $\mathbb{B}^{\text{G}}$, $\mathbb{B}^{\text{S}} \in \mathbb{R}^{p+1 \times K}$ and $\mathbb{B}^{\text{SN}} \in \mathbb{R}^{p \times K}$ to be matrices of study-specific model coefficients. For example, column $k$ of $\mathbb{B}^{\text{G}}$ is equal to $\boldsymbol{\beta}_k^{\text{G}}$. The SN parameters  $\boldsymbol{\alpha}^{\text{SN}}$ and $\mathbb{B}^{\text{SN}} \in \mathbb{R}^{p \times K}$ have dimension $p$ rather than $p+1$ as they do not contain components corresponding to study $k^*$, as we will further clarify later. Here we focus on the linear-linear OEC,
where the optimal coefficients for SSL and the ensemble weights are given by solutions to the following joint optimization formulations.

\paragraph{Generalist OEC~ (\text{OEC$^{\text{G}}$}):}
We begin with the generalist OEC loss function, defined in the linear-linear case as
    \begin{align*}
    &\underset{ \boldsymbol{\alpha}^{\text{G}} \geq \mathbf{0}, ~\alpha_0^{\text{G}} } {\mbox{min}} ~~~ \underset{\mathbb{B}^{\text{G}}} {\mbox{min }} ~ \left\{ \eta~ \left[ \frac{1}{2N} \norm{\boldsymbol{y} - \alpha_0^{\text{G}} \mathbbm{1} - \sum_{k=1}^K \alpha_k^{\text{G}} \mathbb{X} \boldsymbol{\beta}_k^{\text{G}} }_2^2  + \frac{\mu}{2} \norm{\mathbb{D}_{K+1} \boldsymbol{\alpha}^{\text{G}} }_2^2 \right] \right. + \notag \\
    & ~~~~~~~~~~~~ \qquad \qquad \quad
    (1-\eta) \left. \left[ \sum_{k=1}^K \frac{1}{2n_k} \norm{\boldsymbol{y}_k - \mathbb{X}_k \boldsymbol{\beta}_k^{\text{G}}}_2^2 + \frac{1}{2} \sum_{k=1}^K \lambda_k  \norm{\mathbb{D}_{p+1} \boldsymbol{\beta}_k^{\text{G}} }_2^2 \right ] \right\} \tag{1}
    \label{eq:gen}
\end{align*}
where dimensions are as specified above, and
where $\mu$ and $\lambda_k$'s are regularization parameters for the corresponding regression models. The top term in \eqref{eq:gen} is the loss for the (penalized) non-negative least squares regression used to calculate ensemble weights, and the bottom term in \eqref{eq:gen} is the sum of study-specific Ridge-regularized least squares loss functions. 
The overall objective function is a convex combination of these two terms---the weight $\eta$ controls whether the optimization procedure will seek better study-specific SSL fits (low $\eta$) or fits of the SSLs that better contribute to the ensemble as a whole (high $\eta$). 

\paragraph{Specialist OEC~ (\text{OEC$^{\text{S}}$}):}
The OEC$^{\text{S}}$ formulation is similar except that only data from the target study, $k^*$, is included in the portion of the objective function involving ensemble weights. The resulting optimization formulation is
    \begin{align}
    &\underset{ \boldsymbol{\alpha}^{\text{S}} \geq \mathbf{0}, ~\alpha_0^{\text{S}} } {\mbox{min}} ~~~ \underset{\mathbb{B}^{\text{S}}} {\mbox{min }} ~ \left\{ \eta~ \left[ \frac{1}{2n_{k^*}} \norm{\boldsymbol{y}_{k^*} - \alpha_0^{\text{S}} \mathbbm{1} - \sum_{k=1}^K \alpha_k^{\text{S}} \mathbb{X}_{k^*} \boldsymbol{\beta}_k^{\text{S}} }_2^2  + \frac{\mu}{2} \norm{\mathbb{D}_{p+1} \boldsymbol{\alpha}^{\text{S}} }_2^2 \right] \right. + \notag \\
    & ~~~~~~~~~~~~ \qquad \qquad \quad
    (1-\eta) \left. \left[ \sum_{k=1}^K \frac{1}{2n_k} \norm{\boldsymbol{y}_k - \mathbb{X}_k \boldsymbol{\beta}_k^{\text{S}}}_2^2 + \frac{1}{2} \sum_{k=1}^K \lambda_k  \norm{\mathbb{D}_{K+1} \boldsymbol{\beta}_k^{\text{S}} }_2^2 \right ] \right\} \tag{2}
    \label{eq:spec}
\end{align}

\paragraph{No Data Reuse Specialist OEC~(\text{OEC$^{\text{SN}}$}):}
Formulation~\eqref{eq:spec} can be modified to yield the OEC$^{\text{SN}}$ which does not include a study-specific learner for the target study, $k^*$ and determines parameters and weights by solving:
   \begin{align}
    &\underset{ \boldsymbol{\alpha}^{\text{SN}} \geq \mathbf{0}, ~\alpha_0^{\text{SN}} } {\mbox{min}} ~~~ 
    \underset{ 
    \boldsymbol{\mathbb{B}^{\text{SN}}}} {\mbox{min} }~~~ 
\left\{ \eta~ \left[ \frac{1}{2n_{k^*}} \norm{\boldsymbol{y}_{k^*} - \alpha_0^{\text{SN}} \mathbbm{1} - \sum_{k \neq k^*}^K \alpha_k^{\text{SN}} \mathbb{X}_{k^*} \boldsymbol{\beta}_k^{\text{SN}} }_2^2  + \frac{\mu}{2} \norm{\mathbb{D}_{K} \boldsymbol{\alpha}^{\text{SN}} }_2^2 \right] \right. + \notag \\
    & ~~~~~~~~~~~~ \qquad \qquad \quad
    (1-\eta) \left. \left[ \sum_{k \neq k^*}^K \frac{1}{2n_k} \norm{\boldsymbol{y}_k - \mathbb{X}_k \boldsymbol{\beta}_k^{\text{SN}}}_2^2 + \frac{1}{2} \sum_{k \neq k^*}^K \lambda_k  \norm{\mathbb{D}_{p} \boldsymbol{\beta}_k^{\text{SN}} }_2^2 \right ] \right\} \tag{3}
    \label{eq:zero}
\end{align}
The parameters $\boldsymbol{\beta}^{\text{SN}}_{k^*}$ and $\alpha^{\text{SN}}_{k^*}$ are not included in the objective function or the decision variables. 

Throughout, we denote by $\hat{\boldsymbol{\alpha}}$ and $\hat{\boldsymbol{\beta}}$, with appropriate subscripts and superscripts, solutions of these optimization problems. 
We obtain these estimates by applying the block coordinate descent method described in Supplemental Figure~\ref{suppFig:opt} to the corresponding
optimization problem. 

The loss function associated with the OEC$^{\text{SN}}$ differs from the no data reuse specialist stacking objective proposed in \cite{ren}. We implemented this modification because the algorithm proposed by \cite{ren} does not lend itself to an OEC form that rules out data reuse. While the approach does properly exclude data reuse when implemented in a two-stage procedure, a direct generalization as a joint optimization approach would not prevent reuse because the different folds of the target study data, and the corresponding model coefficients, would be tied through the shared estimation of the ensemble weight $w_{k^*}$. 


Throughout the paper, we distinguish between $generalist$ and $specialist$, algorithms. For this reason, when fitting a model or ensemble to make predictions on some new, unseen dataset, (denoted here as study $K+1$), we compare the performances of MSS$^{\text{G}}$, OEC$^{\text{G}}$, and ToM. When we aim to make predictions on a new observation in a target study for which we have limited training data (study $k^*$), we compare the MSS$^{\text{S}}$ with the OEC$^{\text{S}}$ and the MSS$^{\text{SN}}$ with the OEC$^{\text{SN}}$. We also compare these last four methods to SSM, a model fit only to the training data of the target study, to examine whether borrowing information is beneficial for prediction performance. 

The main methodological question we examine here is whether jointly optimizing ensemble weights and linear model parameters improves prediction performance above two-stage multi-study stacking. We examine this for generalist and specialist multi-study stacking. The possible ensembling methods can be described by a $2\times 2$ table: OEC/MSS $\times$ specialist/generalist. For specialist stacking, we also examine performance with and without data reuse. 
\subsection{Connections between OEC and earlier methods}  
The following observations provide insight into the properties of the OEC framework, as they show that, if both the SSL and the ensemble weight component of the loss are linear models with no regularization on model parameters or ensemble weights, then the ToM, SSM and two-stage multi-study stacking methods are special cases. We present the results in the OEC$^{\text{G}}$, OEC$^{\text{S}}$ and OEC$^{\text{SN}}$ cases specifically. Proofs can be found in Supplemental section \ref{proofs}.

The optimization problems corresponding to the OEC$^{\text{G}}$, OEC$^{\text{S}}$ and OEC$^{\text{SN}}$ with no regularization on model parameters or ensemble weights are:

\paragraph{\text{OEC$^{\text{G}}$} ~Without~ Regularization}
\begin{align}
& \min_{\substack{\boldsymbol{\alpha}^{\text{G}} \geq \mathbf{0}, ~\alpha_0^{\text{G}} \\\mathbb{B}^{\text{G}}  }} \left\{ \frac{\eta}{2N} \norm{\boldsymbol{y} - \alpha_0^{\text{G}} \mathbbm{1} - \sum_{k=1}^K \alpha_k^{\text{G}} \mathbb{X} \boldsymbol{\beta}_k^{\text{G}} }_2^2  +  
    (1-\eta) \sum_{k=1}^K \frac{1}{2n_k} \norm{\boldsymbol{y}_k - \mathbb{X}_k \boldsymbol{\beta}_k^{\text{G}}}_2^2 \right\} \tag{4} \label{eq:genProp}
    \end{align}
\paragraph{\text{OEC$^{\text{S}}$} ~Without~ Regularization}
\begin{align}
& \min_{\substack{\boldsymbol{\alpha}^{\text{S}} \geq \mathbf{0}, ~\alpha_0^{\text{S}} \\\mathbb{B}^{\text{S}}  }} \left\{ \frac{\eta}{2n_{k^*}} \norm{\boldsymbol{y}_{k^*} - \alpha_0^{\text{S}} \mathbbm{1} - \sum_{k=1}^K \alpha_k^{\text{S}} \mathbb{X}_{k^*} \boldsymbol{\beta}_k^{\text{S}} }_2^2  +  
    (1-\eta) \sum_{k=1}^K \frac{1}{2n_k} \norm{\boldsymbol{y}_k - \mathbb{X}_k \boldsymbol{\beta}_k^{\text{S}}}_2^2 \right\}  \tag{5} \label{eq:specProp}
    \end{align}

\paragraph{\text{OEC$^{\text{SN}}$} ~Without~ Regularization}
\begin{align}
&  \min_{\substack{\boldsymbol{\alpha}^{\text{SN}} \geq \mathbf{0}, ~\alpha_0^{\text{SN}} \\\mathbb{B}^{\text{SN}}  }} \left\{ \frac{\eta}{2n_{k^*}} \norm{\boldsymbol{y}_{k^*} - \alpha_0^{\text{SN}} \mathbbm{1} - \sum_{k \neq k^*} \alpha_k^{\text{SN}} \mathbb{X}_{k^*} \boldsymbol{\beta}_k^{\text{SN}} }_2^2  +  
    (1-\eta) \sum_{k \neq k^*} \frac{1}{2n_k} \norm{\boldsymbol{y}_k - \mathbb{X}_k \boldsymbol{\beta}_k^{\text{SN}}}_2^2 \right\}   \tag{6} \label{eq:snProp}
    \end{align}

Proposition \ref{eta1_minus_prop} provides insights into the optimal solutions of the OEC by characterizing how these solutions relate to parameter estimates from earlier methods. 
\begin{prop}\label{eta1_minus_prop}


\begin{corollary}\label{eta1_minus_prop_Gen} (OEC$^{\text{G}}$) Let $(\hat{\boldsymbol{\alpha}}^{\text{G}}(\eta), \hat{\mathbb{B}}^{\text{G}}(\eta))$ be optimal solutions to optimization problem \eqref{eq:genProp}.
Then as $\eta \rightarrow 1-$, 
$(\hat{\boldsymbol{\alpha}}^{\text{G}}(\eta), \hat{\mathbb{B}}^{\text{G}}(\eta))$ will converge to the solutions of:
$$\min_{\boldsymbol{\alpha}^{\text{G}}, \mathbb{B}^{\text{G}}}  \sum_{k = 1}^K \frac{1}{2n_k} \norm{\boldsymbol{y}_k - \mathbb{X}_k \boldsymbol{\beta}_k^{\text{G}}}_2^2 ~~~~~~~~~\text{s.t.}~~~  \frac{1}{2N} \norm{\boldsymbol{y} -\alpha_0^{\text{G}} \mathbbm{1} - \mathbb{X}  \sum_{k = 1}^K \alpha_k^{\text{G}} \boldsymbol{\beta}_k^{\text{G}}}_2^2 = f^*$$
where, $f^* = \min_{\boldsymbol{\alpha}^{\text{G}} \geq 0, \alpha_0^{\text{G}}, \mathbb{B}^{\text{G}}} \frac{1}{2N} \norm{\boldsymbol{y} -\alpha_0^{\text{G}} \mathbbm{1} - \mathbb{X}  \sum_{k = 1}^K \alpha_k^{\text{G}} \boldsymbol{\beta}_k^{\text{G}}}_2^2$.
\end{corollary}

\begin{corollary}\label{eta1_minus_prop_Spec} (OEC$^{\text{S}}$) Let $(\hat{\boldsymbol{\alpha}}^{\text{S}}(\eta), \hat{\mathbb{B}}^{\text{S}}(\eta))$ be optimal solutions to optimization problem \eqref{eq:specProp}.
Then as $\eta \rightarrow 1-$, 
$(\hat{\boldsymbol{\alpha}}^{\text{S}}(\eta), \hat{\mathbb{B}}^{\text{S}}(\eta))$ will converge to the solutions of:
$$\min_{\boldsymbol{\alpha}^{\text{S}}, \mathbb{B}^{\text{S}}}  \sum_{k = 1}^K \frac{1}{2n_k} \norm{\boldsymbol{y}_k - \mathbb{X}_k \boldsymbol{\beta}_k^{\text{S}}}_2^2 ~~~~~~~~~\text{s.t.}~~~  \frac{1}{2n_{k^*}} \norm{\boldsymbol{y}_{k^*} -\alpha_0^{\text{S}} \mathbbm{1} - \mathbb{X}_{k^*}  \sum_{k = 1}^K \alpha_k^{\text{S}} \boldsymbol{\beta}_k^{\text{S}}}_2^2 = f^*$$
where, $f^* = \min_{\boldsymbol{\alpha}^{\text{S}} \geq 0, \alpha_0^{\text{S}}, \mathbb{B}^{\text{S}}} \frac{1}{2n_{k^*}} \norm{\boldsymbol{y}_{k^*} -\alpha_0^{\text{S}} \mathbbm{1} - \mathbb{X}_{k^*}  \sum_{k = 1}^K \alpha_k^{\text{S}} \boldsymbol{\beta}_k^{\text{S}}}_2^2$.
\end{corollary}

\begin{corollary}\label{eta1_minus_prop_Zero} (OEC$^{\text{SN}}$) Let $(\hat{\boldsymbol{\alpha}}^{\text{SN}}(\eta), \hat{\mathbb{B}}^{\text{SN}}(\eta))$ be optimal solutions to optimization problem \eqref{eq:snProp}

Then as $\eta \rightarrow 1-$, 
$(\hat{\boldsymbol{\alpha}}^{\text{SN}}(\eta), \hat{\mathbb{B}}^{\text{SN}}(\eta))$ will converge to the solutions of:
$$\min_{\boldsymbol{\alpha}^{\text{SN}}, \mathbb{B}^{\text{SN}}}  \sum_{k \neq k^*} \frac{1}{2n_k} \norm{\boldsymbol{y}_k - \mathbb{X}_k \boldsymbol{\beta}_k^{\text{SN}}}_2^2 ~~~~~~~~~\text{s.t.}~~~  \frac{1}{2n_{k^*}} \norm{\boldsymbol{y}_{k^*} -\alpha_0^{\text{SN}} \mathbbm{1} - \mathbb{X}_{k^*}  \sum_{k \neq k^*} \alpha_k^{\text{SN}} \boldsymbol{\beta}_k^{\text{SN}}}_2^2 = f^*$$
where, $f^* = \min_{\boldsymbol{\alpha}^{\text{SN}} \geq 0, \alpha_0^{\text{SN}}, \mathbb{B}^{\text{SN}}} \frac{1}{2n_{k^*}} \norm{\boldsymbol{y}_{k^*} -\alpha_0^{\text{SN}} \mathbbm{1} - \mathbb{X}_{k^*}  \sum_{k = 1}^K \alpha_k^{\text{SN}} \boldsymbol{\beta}^{\text{SN}}_k}_2^2$.
\end{corollary}

\end{prop}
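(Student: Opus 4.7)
The three corollaries share an identical algebraic structure, so a single argument suffices; the only differences are what plays the role of the stacking data (the full $\mathbf{y}, \mathbb{X}$ versus $\mathbf{y}_{k^*}, \mathbb{X}_{k^*}$) and which indices appear in the SSL sum. Denote the bracketed stacking term in each of \eqref{eq:genProp}, \eqref{eq:specProp}, \eqref{eq:snProp} by $L_1(\boldsymbol{\alpha}, \mathbb{B})$ and the bracketed SSL term by $L_2(\mathbb{B})$. Both are convex non-negative quadratics, the feasible set $\{\boldsymbol{\alpha} \geq \mathbf{0}\}$ (together with the free intercept $\alpha_0$) is closed convex, and the overall objective is $F_\eta = \eta L_1 + (1-\eta) L_2$. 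Let $f^* = \min L_1$ on the feasible set and let $(\hat{\boldsymbol{\alpha}}(\eta), \hat{\mathbb{B}}(\eta))$ be a minimizer of $F_\eta$.

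The plan is the standard limit-of-weighted-sum argument familiar from parametric/penalty optimization. First, pick any feasible $(\boldsymbol{\alpha}^\circ, \mathbb{B}^\circ)$ attaining $L_1 = f^*$; by optimality of $(\hat{\boldsymbol{\alpha}}(\eta), \hat{\mathbb{B}}(\eta))$,
$$\eta \bigl[L_1(\hat{\boldsymbol{\alpha}}(\eta), \hat{\mathbb{B}}(\eta)) - f^*\bigr] \leq (1-\eta)\bigl[L_2(\mathbb{B}^\circ) - L_2(\hat{\mathbb{B}}(\eta))\bigr] \leq (1-\eta) L_2(\mathbb{B}^\circ),$$
which forces $L_1(\hat{\boldsymbol{\alpha}}(\eta), \hat{\mathbb{B}}(\eta)) \to f^*$ as $\eta \to 1-$. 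Second, let $(\boldsymbol{\alpha}', \mathbb{B}')$ be any minimizer of the constrained problem, that is, any feasible point with $L_1 = f^*$ achieving the smallest value of $L_2$ among such points. From $F_\eta(\hat{\boldsymbol{\alpha}}(\eta), \hat{\mathbb{B}}(\eta)) \leq F_\eta(\boldsymbol{\alpha}', \mathbb{B}') = \eta f^* + (1-\eta) L_2(\mathbb{B}')$ together with $L_1(\hat{\boldsymbol{\alpha}}(\eta), \hat{\mathbb{B}}(\eta)) \geq f^*$, one obtains $L_2(\hat{\mathbb{B}}(\eta)) \leq L_2(\mathbb{B}')$ after dividing by $(1-\eta)$. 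Passing to the limit along any convergent subsequence and invoking continuity of $L_1$ and $L_2$, the cluster point is feasible for the constrained problem and attains an $L_2$-value no larger than the constrained optimum, and is therefore itself optimal.

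The main obstacle is the existence of cluster points: in the unregularized linear setting, $L_1$ and $L_2$ are convex but typically not strictly convex, so optimal solutions may be non-unique or the solution set may be unbounded. I would handle this in one of two ways. The cleanest formulation states the conclusion at the level of solution sets, namely that the solution set of $F_\eta$ converges in the Kuratowski--Painlev\'e sense to the solution set of the constrained problem as $\eta \to 1-$; the inequalities above then establish both the inner and outer set-limit inclusions. Alternatively, one may restrict to a specific bounded selection of minimizers for each $\eta$ (for instance, the minimum-norm optimal solution, which is well defined by convexity and closedness) and argue that every cluster point of this selection solves the constrained problem. The unconstrained intercept $\alpha_0$ enters only through the quadratic $L_1$ and can be profiled out in closed form, so it adds no essential complication.
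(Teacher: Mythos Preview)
Your approach is essentially the same as the paper's: the paper factors the argument through a general lemma on $\min_{\boldsymbol{\theta}}\{h_1(\boldsymbol{\theta}) + \delta h_2(\boldsymbol{\theta})\}$ as $\delta \to 0+$, applied with $\delta = (1-\eta)/\eta$, $h_1 = L_1$, $h_2 = L_2$, and proves that lemma via the same squeeze inequality you wrote down. You are in fact more careful than the paper, which only explicitly shows convergence of objective values and does not spell out the passage to cluster points or the boundedness issue you flag.
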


Case \ref{eta1_minus_prop_Gen} implies that in the generalist setting, when $\eta \rightarrow 1-$, optimal OEC$^{\text{G}}$ estimates will yield fitted values equal to those produced by the ToM linear model (without regularization): $\mathbb{X} \hat{\boldsymbol{\gamma}}^{\text{ToM}} = \hat{\alpha}_0^{\text{G}} \mathbbm{1} + \mathbb{X} \sum_{k} \hat{\alpha}_k^{\text{G}} \hat{\boldsymbol{\beta}}_k^{\text{G}}$ where $\hat{\boldsymbol{\gamma}}^{\text{ToM}} = \min_{\boldsymbol{\gamma}^{\text{ToM}}} \norm{\mathbf{y} - \mathbb{X} \boldsymbol{\gamma}^{\text{ToM}}}_2^2$. When $\mathbb{X}$ is of full rank, the equivalence can be stated in terms of the parameter estimates directly (and thus predictions made using these solutions): $\hat{\boldsymbol{\gamma}}^{\text{ToM}} = \hat{\alpha}_0^{\text{G}} + \sum_k \hat{\alpha}_k^{\text{G}} \hat{\boldsymbol{\beta}}^{\text{G}}$. Case \ref{eta1_minus_prop_Spec} states that in specialist settings, when $\eta \rightarrow 1-$, optimal OEC$^{\text{S}}$ solutions will yield fitted values equal to those produced by a linear model fit only to the target study, (i.e., study $k^*$): $\mathbb{X}_{k^*} \hat{\boldsymbol{\gamma}}_{k^*} = \hat{\alpha}_0^{\text{S}} \mathbbm{1} + \mathbb{X}_{k^*} \sum_{k} \hat{\alpha}_k^{\text{S}} \hat{\boldsymbol{\beta}}_k^{\text{S}}$ where $\hat{\boldsymbol{\gamma}}_{k^*} = \min_{\boldsymbol{\gamma}_{k^*}} \norm{\mathbf{y}_{k^*} - \mathbb{X}_{k^*} \boldsymbol{\gamma}_{k^*}}_2^2$. Similarly, Case \ref{eta1_minus_prop_Zero} states that  $\mathbb{X}_{k^*} \hat{\boldsymbol{\gamma}}_{k^*} = \hat{\alpha}_0^{\text{SN}} \mathbbm{1} + \mathbb{X}_{k^*} \sum_{k \neq k^*} \hat{\alpha}_k^{\text{SN}} \hat{\boldsymbol{\beta}}_k^{\text{SN}}$. When $\mathbb{X}_{k^*}$ is of full rank it follows that $\hat{\boldsymbol{\gamma}}_{k^*} = \hat{\alpha}_0^{\text{S}} + \sum_k \hat{\alpha}_k^{\text{S}} \hat{\boldsymbol{\beta}}^{\text{S}} = \hat{\alpha}_0^{\text{SN}} + \sum_k \hat{\alpha}_k^{\text{SN}} \hat{\boldsymbol{\beta}}^{\text{SN}}$.  

We next characterize the optimal OEC solutions when $\eta$ is taken to the opposite extreme. 
\begin{prop}\label{eta0_prop} 

\begin{corollary}\label{eta0_prop_Gen} 
   (OEC$^{\text{G}}$) 
    Let $(\hat{\boldsymbol{\alpha}}^{\text{G}}(\eta), \hat{\mathbb{B}}^{\text{G}}(\eta))$ be optimal solutions to optimization problem \eqref{eq:genProp}.
Then as $\eta \rightarrow 0+$, $(\hat{\boldsymbol{\alpha}}^{\text{G}}(\eta), \hat{\mathbb{B}}^{\text{G}}(\eta))$ will converge to a solution to the following:
$$\min_{ \boldsymbol{\alpha}^{\text{G}} \geq \boldsymbol{0}, \alpha_0^{\text{G}}, \mathbb{B}^{\text{G}}} \norm{\boldsymbol{y} - \alpha_0^{\text{G}} \mathbbm{1} -\mathbb{X} \sum_{k = 1}^K \alpha_k^{\text{G}} \boldsymbol{\beta}_k^{\text{G}}}_2^2 ~~~\text{s.t.}~~ \sum_{k =1}^K \frac{1}{2n_k} \norm{\boldsymbol{y}_k - \mathbb{X}_k \boldsymbol{\beta}_k^{\text{G}}}_2^2 = g^*$$
where, $g^* = \min_{\mathbb{B}^{\text{G}}} \sum_{k=1}^K \frac{1}{2n_k} \norm{\boldsymbol{y}_k - \mathbb{X}_k \boldsymbol{\beta}_k^{\text{G}}}_2^2$.
\end{corollary}
\begin{corollary}\label{eta0_prop_Spec} 
   (OEC$^{\text{S}}$) 
    Let $(\hat{\boldsymbol{\alpha}}^{\text{S}}(\eta), \hat{\mathbb{B}}^{\text{S}}(\eta))$ be optimal solutions to to optimization problem \eqref{eq:specProp}.
Then as $\eta \rightarrow 0+$, $(\hat{\boldsymbol{\alpha}}^{\text{S}}(\eta), \hat{\mathbb{B}}^{\text{S}}(\eta))$ will converge to a solution to the following:
$$\min_{ \boldsymbol{\alpha}^{\text{S}} \geq \boldsymbol{0}, \alpha_0^{\text{S}}, \mathbb{B}^{\text{S}}} \norm{\boldsymbol{y}_{k^*} - \alpha_0^{\text{S}} \mathbbm{1} - \mathbb{X}_{k^*} \sum_{k = 1}^K \alpha_k^{\text{S}} \boldsymbol{\beta}_k^{\text{S}}}_2^2 ~~~\text{s.t.}~~ \sum_{k =1}^K \frac{1}{2n_k} \norm{\boldsymbol{y}_k - \mathbb{X}_k \boldsymbol{\beta}_k^{\text{S}}}_2^2 = g^*$$
where, $g^* = \min_{\mathbb{B}^{\text{S}}} \sum_{k=1}^K \frac{1}{2n_k} \norm{\boldsymbol{y}_k - \mathbb{X}_k \boldsymbol{\beta}_k^{\text{S}}}_2^2$.
\end{corollary}
\begin{corollary}\label{eta0_prop_zero} 
   (OEC$^{\text{SN}}$) 
   Let $(\hat{\boldsymbol{\alpha}}^{\text{SN}}(\eta), \hat{\mathbb{B}}^{\text{SN}}(\eta))$ be optimal solutions to optimization problem \eqref{eq:genProp}.
Then as $\eta \rightarrow 0+$, $(\hat{\boldsymbol{\alpha}}^{\text{SN}}(\eta), \hat{\mathbb{B}}^{\text{SN}}(\eta))$ will converge to a solution to the following:
$$\min_{ \boldsymbol{\alpha}^{\text{SN}} \geq \boldsymbol{0}, \alpha_0^{\text{SN}}, \mathbb{B}^{\text{SN}}} \norm{\boldsymbol{y}_{k^*} - \alpha_0^{\text{SN}} \mathbbm{1} - \mathbb{X}_{k^*} \sum_{k \neq k^*}^K \alpha_k^{\text{SN}} \boldsymbol{\beta}_k^{\text{SN}}}_2^2 ~~~\text{s.t.}~~ \sum_{k \neq k^*} \frac{1}{2n_k} \norm{\boldsymbol{y}_k - \mathbb{X}_k \boldsymbol{\beta}_k^{\text{SN}}}_2^2 = g^*$$
where, $g^* = \min_{\mathbb{B}^{\text{SN}}} \sum_{k \neq k^*} \frac{1}{2n_k} \norm{\boldsymbol{y}_k - \mathbb{X}_k \boldsymbol{\beta}_k^{\text{SN}}}_2^2$.
\end{corollary}

\end{prop}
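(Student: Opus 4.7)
The three cases of Proposition~\ref{eta0_prop} differ only in which data index the stacking loss and the sum of single-study losses, so I would prove them in parallel. Write each OEC objective as
$$F_\eta(\boldsymbol{\alpha}, \mathbb{B}) = \eta\, f(\boldsymbol{\alpha}, \mathbb{B}) + (1-\eta)\, g(\mathbb{B}),$$
where $f$ is the corresponding stacking loss (top bracket in the OEC formulation) and $g$ is the sum of study-specific least-squares losses (bottom bracket). Let $g^* = \min_{\mathbb{B}} g(\mathbb{B})$ and let $f^{**}$ denote the optimal value of the constrained problem stated in the proposition. The argument is the standard penalty-method limit: as $\eta \to 0^+$ the $(1-\eta)g$ term dominates and forces $g(\hat{\mathbb{B}}(\eta)) \to g^*$, and among iterates with $g \approx g^*$ the $\eta f$ term selects minimizers of $f$.

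First I would show $g(\hat{\mathbb{B}}(\eta)) \to g^*$ together with a uniform upper bound on $f$. Fix any competitor $(\tilde{\boldsymbol{\alpha}}, \tilde{\mathbb{B}})$ feasible for the constrained problem (so $g(\tilde{\mathbb{B}}) = g^*$) with finite $f(\tilde{\boldsymbol{\alpha}}, \tilde{\mathbb{B}})$; a concrete choice takes $\tilde{\mathbb{B}}$ to be a tuple of study-by-study OLS fits and $\tilde{\boldsymbol{\alpha}} = \mathbf{0}$ with $\tilde{\alpha}_0$ the sample mean of the relevant outcome. Optimality of $(\hat{\boldsymbol{\alpha}}(\eta), \hat{\mathbb{B}}(\eta))$ against this competitor together with $g(\hat{\mathbb{B}}(\eta)) \ge g^*$ yields
$$\eta f(\hat{\boldsymbol{\alpha}}(\eta), \hat{\mathbb{B}}(\eta)) + (1-\eta)\bigl[g(\hat{\mathbb{B}}(\eta)) - g^*\bigr] \le \eta f(\tilde{\boldsymbol{\alpha}}, \tilde{\mathbb{B}}).$$
Using $f \ge 0$ this gives simultaneously $g(\hat{\mathbb{B}}(\eta)) - g^* = O(\eta)$ and, dividing by $\eta$, the bound $f(\hat{\boldsymbol{\alpha}}(\eta), \hat{\mathbb{B}}(\eta)) \le f(\tilde{\boldsymbol{\alpha}}, \tilde{\mathbb{B}})$. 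Taking the infimum over constrained-feasible competitors produces $f(\hat{\boldsymbol{\alpha}}(\eta), \hat{\mathbb{B}}(\eta)) \le f^{**}$ for every $\eta > 0$.

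The second step closes the matching lower bound by a cluster-point argument. The fitted vectors $\mathbb{X}_k \hat{\boldsymbol{\beta}}_k(\eta)$ are bounded because $g$ (approaching $g^*$) controls them, and the fitted stacking prediction is bounded because $f$ is uniformly bounded by $f^{**}$. Along any sequence $\eta_n \to 0^+$ I can therefore extract a sub-subsequence along which these fitted vectors converge; by continuity of $f$ and $g$, the limit satisfies $g = g^*$ (hence is feasible) and has $f$-value equal to $\lim f(\hat{\boldsymbol{\alpha}}(\eta_n), \hat{\mathbb{B}}(\eta_n))$. Feasibility forces this limiting value to be at least $f^{**}$, and combined with step~1 this pins it at exactly $f^{**}$. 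Hence every cluster point solves the constrained problem, and since the argument applies to every subsequence, the full sequence converges in fitted values to a constrained minimizer.

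The main obstacle is compactness. The unregularized OEC objective is invariant under the rescaling $(\alpha_k, \boldsymbol{\beta}_k) \mapsto (c\alpha_k, c^{-1}\boldsymbol{\beta}_k)$ inside the stacking term, and although the rescaling of $\boldsymbol{\beta}_k$ does enter $g$, it does so only through $\mathbb{X}_k \boldsymbol{\beta}_k$. Individual coefficient vectors therefore need not converge even when their fitted-value combinations do. I would accordingly phrase the conclusion in terms of the identifiable quantities -- the fitted values $\mathbb{X}_k \hat{\boldsymbol{\beta}}_k(\eta)$ and the overall stacking prediction -- and interpret ``converges to a solution'' in that sense, consistent with how Proposition~\ref{eta1_minus_prop} is applied in the surrounding text (where the equalities are stated at the level of fitted values unless the design matrix has full rank).
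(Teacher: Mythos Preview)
Your approach is essentially the same as the paper's: the paper packages the argument into a short Lemma~1 about $\min_{\boldsymbol{\theta}}\{h_1(\boldsymbol{\theta})+\delta h_2(\boldsymbol{\theta})\}$ and then invokes it with $\delta=\eta/(1-\eta)$ (so $h_1=g$, $h_2=f$), using exactly your sandwich inequality with a competitor $\boldsymbol{\theta}^0$ satisfying $h_1(\boldsymbol{\theta}^0)=\min h_1$. Your Step~1 reproduces this bound and, in fact, goes a bit further by extracting the sharper conclusion $f(\hat{\boldsymbol{\alpha}}(\eta),\hat{\mathbb{B}}(\eta))\le f^{**}$ directly.

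Where you diverge usefully is Step~2 and the identifiability discussion. The paper's proof of Lemma~1 stops at convergence of \emph{optimal values}; it does not supply the cluster-point/compactness argument you give to upgrade this to convergence of \emph{minimizers}, nor does it discuss the rescaling degeneracy $(\alpha_k,\boldsymbol{\beta}_k)\mapsto(c\alpha_k,c^{-1}\boldsymbol{\beta}_k)$. Your observation that convergence should be phrased at the level of fitted values (and only lifted to coefficients under full rank) is more careful than the paper's treatment and is consistent with how the results are ultimately used in the text. So: same core idea, but your write-up is more complete on the analytic side.
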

Proposition \ref{eta0_prop} shows that as $\eta \rightarrow 0+$, the optimal OEC solution yields fitted values equal to those of its MSS counterpart. For example, in the generalist case, as $\eta \rightarrow 0+$, we have $\hat{w}_0 \mathbbm{1} + \mathbb{X} \sum_k \hat{w}_k^{\text{G}} \hat{\boldsymbol{\gamma}} = \hat{\alpha}_0^{\text{G}} \mathbbm{1} + \mathbb{X} \sum_{k} \hat{\alpha}_k^{\text{G}} \hat{\boldsymbol{\beta}}_k^{\text{G}}$. If each $\mathbb{X}_k$ is full rank for $k \in [K]$, Proposition \ref{eta0_prop} states that MSS$^{\text{G}}$, MSS$^{\text{S}}$ and MSS$^{\text{SN}}$ are special cases of their corresponding OEC counterparts that arise when $\eta \rightarrow 0+$. For example, in the generalist case, if each of the $\mathbb{X}_k$ are full rank, then $\hat{\boldsymbol{w}}^{\text{G}} = \hat{\boldsymbol{\alpha}}^{\text{G}}$ and  $\hat{\boldsymbol{\gamma}}_k = \hat{\boldsymbol{\beta}}_k^{\text{G}} ~\forall k \in [K]$. 

Together, these propositions demonstrate that in the linear-linear setting, without regularization, the OEC encompasses important special cases and earlier multi-study methods: MSS$^{\text{G}}$ is a special case of OEC$^{\text{G}}$, arising when $\eta \rightarrow 0+$, while pooling datasets with the ToM approach is a special case arising when $\eta \rightarrow 1-$. Similarly, MSS$^{\text{S}}$ is a special case of the OEC$^{\text{S}}$, arising when $\eta \rightarrow 0+$, while a study-specific model is obtained when $\eta \rightarrow 1-$. These propositions examine behavior at the extremes, and suggest that, for intermediate values of $\eta$, the OEC can be viewed as a trade-off between MSS$^{\text{G}}$ and pooling datasets together before model fitting (i.e., the ToM) in the generalist case. In the specialist case, the propositions suggest that the OEC can be framed as a trade-off between MSS$^{\text{S}}$ and only fitting a model on the target study. In our motivating application, we visualize this trade-off in the no data reuse specialist case by showing how predicted values of the OEC smoothly vary between the predictions of the study-specific model and the MSS$^{\text{SN}}$ as a function of $\eta \in (0,1)$. Across the range of $\eta$, the OEC allows for a spectrum of solutions that range in both the manner and degree to which information is borrowed from other sources.

\section{Application: Counterfactual Mortality Prediction}

\subsection{Data}
We consider a data source recently developed by the Human Mortality Database, called the \textit{Short-term Mortality Fluctuations} (STMF) data series \citep{hmd}. The STMF data series provides weekly death counts and death rates for 38 countries, as outlined in Supplementary Table \ref{table:mortWeeks}. We denote the reported death rates for country $k$ at week $t$ by $\tilde{Y}_{t,k}$. STMF calculates it as the observed weekly death count, $C_{t,k}$ divided by an annual measure of population size $N_{l,k}$, that is: $\tilde{Y}_{t,k} = C_{t,k} / N_{l,k}$ when week $t$ occurs during year $l$. As a result, these death rates exhibit artificial discontinuities at the start of each calendar year, that are an artifact of changing the annual population size from $N_{l,k}$ to $N_{l+1,k}$. To avoid this, we followed the strategy outlined in \cite{us2012methodology}. We calculated the mid-year population size, $N_{l,k}$, (i.e., observed population counts taken on July 1 of year $l$) from the death rates and death counts provided by STMF. We then linearly interpolated population size by regressing the annual population estimates, $N_{l,k}$ against year, $l$, and used the regression fit to calculate weekly estimates of population size: $\hat{P}_{t,k} = \hat{\beta_0}^{\dagger} + \hat{\beta_1}^{\dagger} t$, where $\hat{\beta_0}^{\dagger}$ and $\hat{\beta_1}^{\dagger}$ were linear model coefficient estimates from this regression. Finally we calculated new annual death rates (per 1,000 individuals) as $Y_{k_t} = 1,000 \times 52 \times C_{t,k} / \hat{P}_{k,t}$.

We only used test countries in the Northern hemisphere since the model was based on seasonal effects modeled by calendar date. There are only three countries from the southern hemisphere in STMF (Australia, New Zeland and Chile) a set too small to fit a model borrowing information across countries, thereby resulting in poor performance. 
We did, however, allow the OEC methods to leverage information from countries in the Southern hemisphere when training models tailored to countries in the Northern hemisphere. For any given test year, we required countries to have at least 2 years of data at that point to serve as auxiliary sources (i.e., countries from which we can borrow information for a given test country/year). We started testing at 2003 because prior to 2003 there was little data and excessive variability year-to-year in the number of training countries.


\subsection{Study-specific models}
It is common to model expected death counts as a function of seasonal and secular trends in mortality. 
We modeled annual death rates, $Y_{k,t}$, with the following linear model:
\begin{equation}
    \mathbb{E}[{y}_{k,t} \given \mathbf{x}_{k,t}]
    = \gamma_{k,0} + \gamma_{k,1} t + \sum_{j=1}^2 \left[\gamma_{k,j + 1}\sin \left(\frac{2\pi j t}{52}\right) + \gamma_{k,j+3}\cos \left(\frac{2\pi j t}{52}\right)\right]
     \label{eq:lm}
\end{equation}
where ${\gamma}_{k,1}$ represents a linear effect of time to capture secular trends in mortality, and $\boldsymbol{\gamma}_{k,2:p}$ represents the effect of the Fourier basis function to capture seasonal changes. Model \eqref{eq:lm} can be represented in the familiar linear form, $\mathbb{E}[{y}_{k,t}\given \mathbf{x}_{k,t}] = \mathbf{x}_{k,t}^T \boldsymbol{\gamma}$, where the elements of $\mathbf{x}_{k,t}$ simply contain basis expansions of time. One could adopt, for example, a Poisson likelihood for the conditional distribution of the death counts given the covariates (i.e., basis expansions of time). However, weekly death counts were in the hundreds for all countries except Luxembourg and Iceland and most countries recorded weekly counts in the thousands. Since the counts were high, we expect a Gaussian to provide a good approximation to a Poisson distribution. 
Since we sought to borrow information across countries, we modeled rates instead of counts to make the outcome more comparable across the countries in the database.
We opted for model \eqref{eq:lm} assuming ${y}_{k,t}\given \mathbf{x}_{k,t}$ follows a Gaussian distribution instead of a Poisson for computational convenience. In Supplementary Figures \ref{suppFig:mortalityOECLin_vs_countryNoLin}-\ref{suppFig:mortalityOECnoLin_vs_stackNoLin} we show the results of sensitivity analyses where we omit the linear effect of time $\gamma_{k,1} t$ from model~\eqref{eq:lm}.

\subsection{Application: Problem Statement and Assessment of Methods}
Excess mortality in country $k$, at week $t$, can be defined as the observed death count minus the rate expected in the counterfactual world in which COVID-19 never existed. Estimating excess deaths therefore requires a prediction of mortality in the counterfactual world where a disaster did not occur. In practice, this prediction is the expected death counts in a particular week given historical data (i.e., all data before the onset of the disaster). Unfortunately, prediction performance in such settings is unverifiable. Hence, to assess if our methods are useful for counterfactual prediction, we trained models and tested their performance on historical data where no pandemic occurred, so that a ground-truth was known. Heuristically, the rationale is that if the baseline model has high prediction accuracy on historical data, we expect it would also predict counterfactual mortality~well. 

We tested predictors generated using the OEC$^{\text{S}}$ and the OEC$^{\text{SN}}$ against their MSS$^{\text{S}}$ and MSS$^{\text{SN}}$ counterparts. We also implemented a SSM (which we refer to interchangeably as a ``country-specific model'' here), in which we did not borrow information from other countries. In all approaches, we used model \eqref{eq:lm} as the single-study learner. 

We sought to evaluate prediction performance for all countries meeting the inclusion criteria in the database across a range of training/testing years. We assessed our question first in the STMF dataset using a time-series hold-one-country-out testing procedure. That is, we trained models on a subset of historical data (i.e., a subset of years), made predictions on held out historical data and computed performance metrics (e.g., RMSE) to quantify the quality of the predictions. We tailored our approach to assess whether our method improved performance when there exists little pre-COVID-19 training data for the target country. Specifically, for each target country and year (from 2003-2019) we artificially assumed only having one year of training data from that target country in the database. We then made predictions of mortality in that country for the following year and compared those predictions to what was observed.


Our approach is illustrated in Figure \ref{fig:cvFig1}. To fix ideas, say we are interested in making predictions for the Country~1 in 2014. We would use data from Country~1 in 2014 as a test set and trained a country-specific model on US data from only 2013 (i.e., simulating only having one year's worth of data). To analyze the performance when we borrowed information, we allowed the MSS$^{\text{S}}$, MSS$^{\text{SN}}$, OEC$^{\text{S}}$ and OEC$^{\text{SN}}$ to use data from any other country that had at least 100 weeks of training data prior to 2014 (e.g., 2011, 2012 and 2013). All data after the start of the test period was discarded and any country that did not have enough training data was not used as an auxiliary source. All approaches were compared based upon their prediction performance on US data from 2014. We repeated this procedure for every feasible year between 2003 and 2019 and for all countries (assuming the above inclusion criteria are met). The test period was set to an entire year to assess the performance of the method's estimation of both secular and seasonal trends as well as to mimic the long term predictions required for COVID-19 excess mortality estimation.


\begin{figure}[h]
	\centering
	\begin{subfigure}[t]{1\textwidth}
		\centering
		\includegraphics[scale = 0.3]{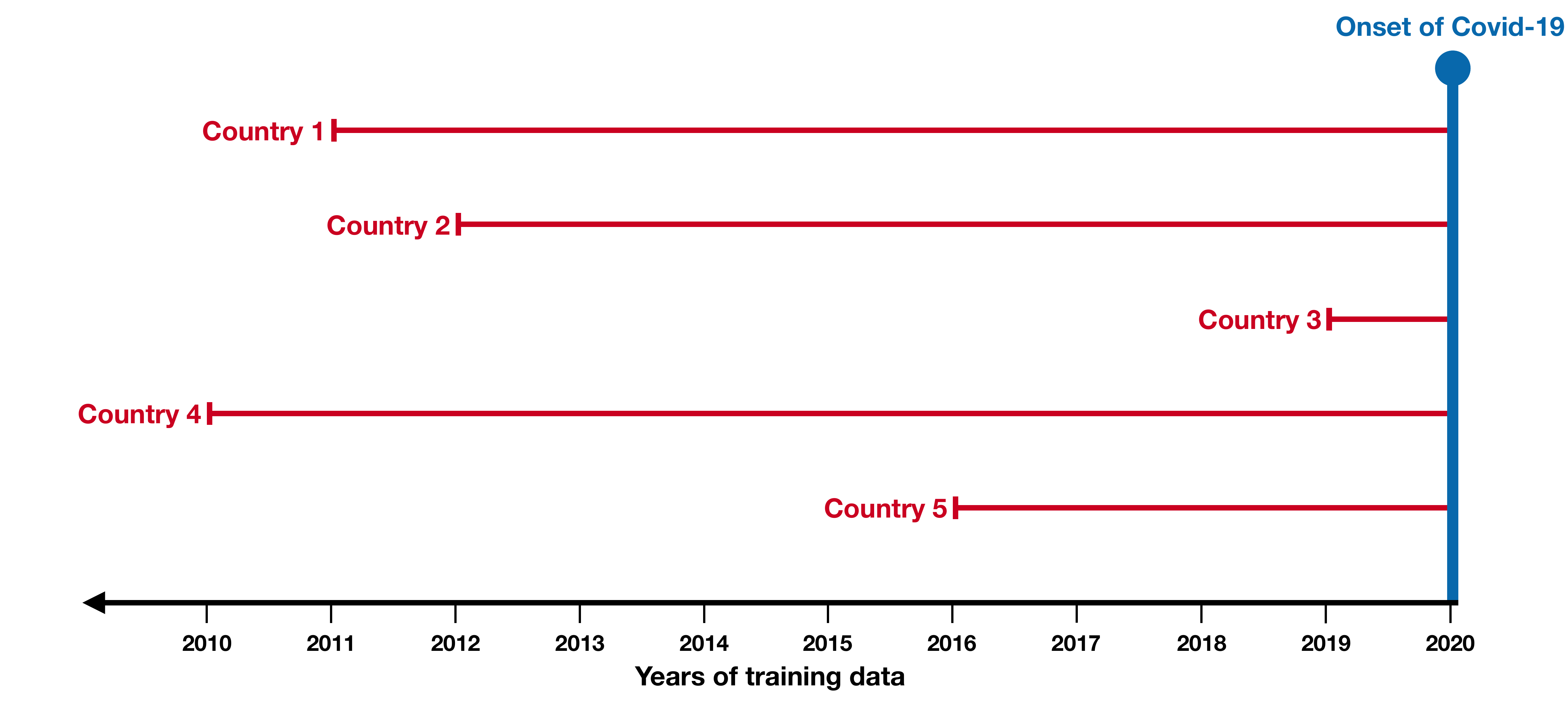}
		\caption{ Example observed data. Countries have different amounts of data before the onset of COVID-19 that could be used for training and validation.}
	\end{subfigure}
	\hfill
    \vspace*{10mm}
	\begin{subfigure}[t]{1\textwidth}
		\centering
		\includegraphics[scale = 0.3]{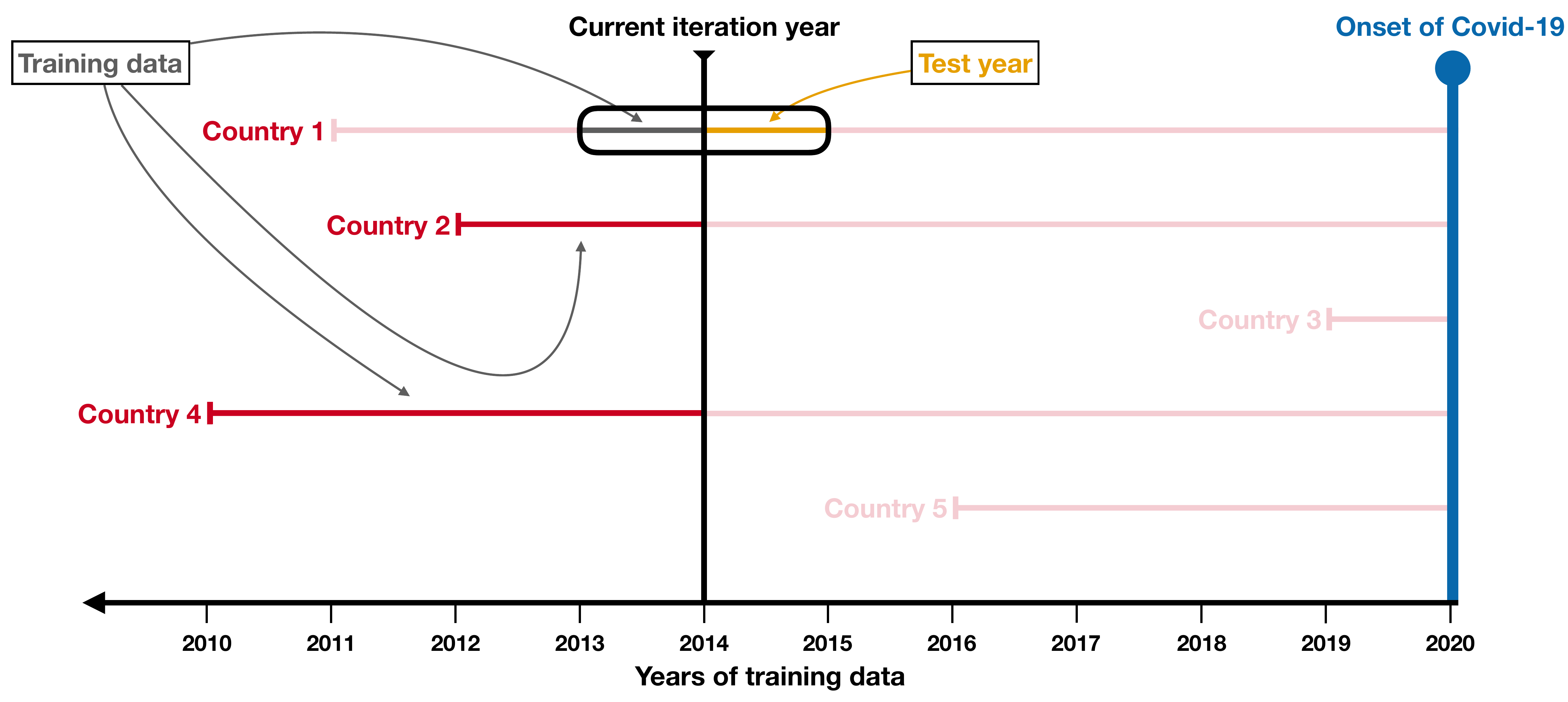}
		\caption{Architecture of leave-one-country-out prediction experiments.}
	\end{subfigure}
	\caption{Illustration of the validation scheme. Country 1 is used as a test country. Year 2013 from country 1 is utilized for training and year 2014 is taken as a test set; other data from country 1 is discarded for training or testing purposes. Country 2 and country 4 are used in multi-study versions. Data after the test year is discarded. Country 5 and Country 3 are not utilized since they do not have data before the test year 2014. This prediction is repeated for each year and country that meet the criteria for inclusion.}
	\label{fig:cvFig1}
\end{figure}

\subsection{Multi-study Modeling and Parameter Tuning}

 We ran analyses both with and without a Ridge (squared $\ell_2$) penalty term on the country-specific model parameters and the ensemble weight parameters. The inclusion of a penalty was kept consistent across all methods that we implemented. This was to ensure that any gains in performance achieved through the OEC could not have been attained through simple regularization.

We centered and scaled covariates prior to model fitting. We tuned $\eta$ in expression \eqref{eq:gen} using cross validation procedures specific to generalist and specialist settings. For the OEC$^{\text{S}}$ and OEC$^{\text{SN}}$, we tuned $\eta$ specifically for the target study. We divided the training set for study $k^*$ into $K$ folds (i.e., we set the number of folds to $K$, the number of training studies) and used each fold as a validation set. For OEC$^{\text{G}}$, we tuned $\eta$ with a $K$-fold cross validation procedure in which the folds were constructed to be study-balanced in order to encourage generalizability to an unseen study. The $\ell_2$ regularization parameter associated with ensemble weights, $\mu$, was tuned using a $K$-fold, study-balanced, cross validation scheme separately using the OEC$^{\text{G}}$ algorithm. We kept this tuning parameter fixed across all OEC algorithms. Similarly, we tuned the $\ell_2$ regularization parameter associated with the stacking regression again using a $K$-fold cross validation. We tuned study-specific model regularization parameters associated with the Ridge penalty using a within-study (i.e., within-country) $K$-fold cross validation scheme and kept these fixed across all methods implemented (i.e., ToM, SSM, and all OEC and MSS approaches). The $\ell_2$ regularization parameter associated with ToM algorithm was tuned using a hold-one-study-out cross validation (e.g., the $K^{th}$ fold merged all studies, $1,...,K - 1$, fit a single model and validated on study  $K$).

\subsection{Application Results: COVID-19}

We present results by test-year for both the the OEC$^{\text{S}}$ and OEC$^{\text{SN}}$ in Figure~\ref{fig:covid_box1} and Supplemental Figure~\ref{suppFig:mortalityTogether100_OLS}. We present results from models fit both with study-specific Ridge penalties (Supplemental Table~\ref{table:covid_Ridge}) and without them (Table~\ref{table:covid_OLS}). Table~\ref{table:covid_OLS} only contains a subset of years (2010-2019) for purposes of visual presentation but a version of the table including all years (2003-2019) can be found in Supplemental Table~\ref{table:Fullcovid_OLS}. Although we fit the generalist methods ToM, OEC$^{\text{G}}$ and MSS$^{\text{G}}$, their performance was uniformly poor. This is not surprising given that this application involves transfer learning. As a result, we do not present these results here.

The figures and tables demonstrate that borrowing information through two-stage stacking procedures (MSS$^{\text{S}}$ or MSS$^{\text{SN}}$) or their OEC counterparts (OEC$^{\text{S}}$ and OEC$^{\text{SN}}$) has strong benefits with or without regularization. Importantly, the OEC$^{\text{S}}$ substantially outperformed both a country-specific model and the MSS$^{\text{S}}$. The OEC$^{\text{SN}}$ outperformed the MSS$^{\text{SN}}$ but improvements were more modest than those comparing the OEC$^{\text{S}}$ and the MSS$^{\text{S}}$. The OEC$^{\text{SN}}$ and MSS$^{\text{SN}}$ substantially outperformed the OEC$^{\text{S}}$ and the MSS$^{\text{S}}$, respectively. We also present the results relative to a country-specific model to show the relative performance of the OEC and MSS methods in Supplemental Figure \ref{suppFig:mortalityTogether100_OLS}.

\begin{figure}[h]
	\centering
	\includegraphics[width=0.95\linewidth]{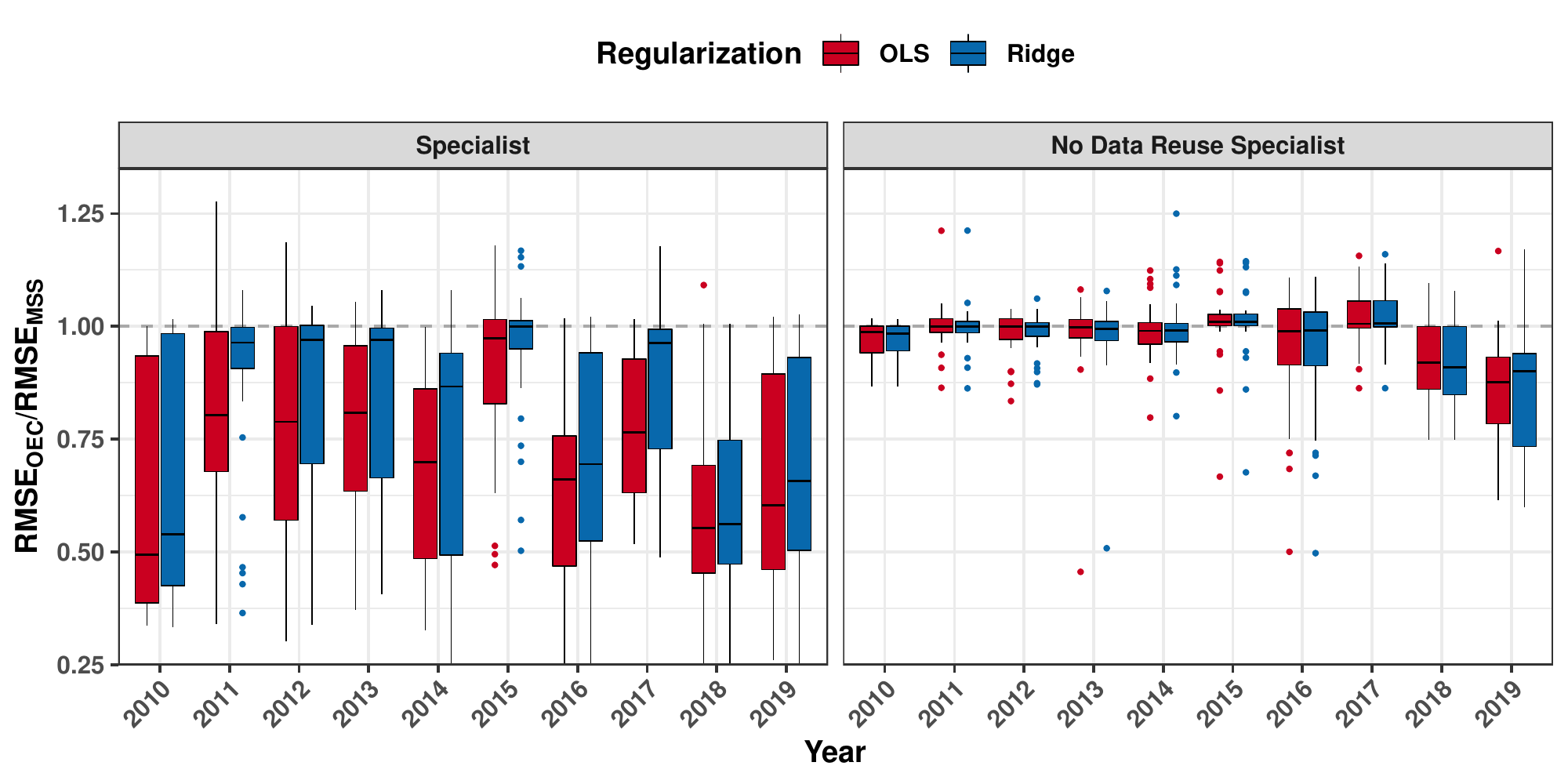}
	\caption{$RMSE_{OEC} / RMSE_{MSS}$. Performance of OEC methods relative to their multi-study stacking counterparts: values below 1.00 indicate OEC method produces superior performance relative to MSS version. The x-axis indicates the year of the last day in the test-set.}
	\label{fig:covid_box1}
\end{figure}

\begin{table}[h]
 \centering
\begin{minipage}[t]{1.0\linewidth} 
\centering
\begin{tabular}{l|rrrrrrrrrr}
\toprule
\toprule
\multicolumn{1}{c}{\bfseries Method} \vline & \multicolumn{10}{c}{\bfseries Test Year} \\
  & 2010 & 2011 & 2012 & 2013 & 2014 & 2015 & 2016 & 2017 & 2018 & 2019\\
\midrule
\midrule
MSS$^{\text{S}}$ & 0.98 & 0.92 & 0.97 & 0.91 & 0.92 & 0.94 & 0.95 & 0.90 & 0.97 & 0.92\\
OEC$^{\text{S}}$ & 0.60 & 0.72 & 0.76 & 0.70 & 0.62 & 0.84 & 0.63 & 0.72 & 0.59 & 0.61\\
MSS$^{\text{SN}}$ & 0.54 & 0.71 & 0.74 & 0.69 & 0.60 & 0.86 & 0.58 & 0.69 & 0.40 & 0.58\\
OEC$^{\text{SN}}$ & 0.53 & 0.71 & 0.74 & 0.68 & 0.60 & 0.86 & 0.57 & 0.70 & 0.38 & 0.52\\
\bottomrule
\end{tabular}
\end{minipage}
\hfill
\caption{Average $RMSE_{Method} / RMSE_{SSM}$. Performance of ensembling methods relative to a country-specific model that was fit without a country-specific Ridge penalty. Columns indicate test year.}
\label{table:covid_OLS}
\end{table}


When training data is limited, including a linear term for time to capture secular trends in a country-specific model may introduce identifiability issues. This is because the model may struggle to distinguish between seasonal and secular trends when trained on a single year of data, even with regularization. While we showed that borrowing information alleviates this issue, we wanted to ensure the multi-study approach was still beneficial compared to a simpler model with no linear effect of time. In the supplemental section we present results from analyses using a model with no linear term \eqref{eq:lmSecular}. We are not aware of past analyses that only model mortality with seasonal trends, but we explored these results as a sensitivity analysis to further characterize how the OEC methods improve performance over MSS methods. First, we compared an OEC with a linear trend to a country-specific model with no linear trend (Supplemental Figure \ref{suppFig:mortalityOECLin_vs_countryNoLin}). The OEC$^{\text{SN}}$ consistently outperformed the simpler country-specific model, but the OEC$^{\text{S}}$ and two-stage specialist stacking did not. This may suggest that the country-specific model struggled with identifiability problems and that the OEC$^{\text{S}}$ and MSS$^{\text{S}}$ may place too large an ensemble weight on that target country's model. The MSS$^{\text{SN}}$ and OEC$^{\text{SN}}$ avoid this problem by only using the target data to calculate ensemble weights. This is consistent with previous reports about limitations of the data re-use approach MSS$^{\text{S}}$ \citep{ren}. 

We next compared the OEC$^{\text{S}}$ and OEC$^{\text{SN}}$ models fit with no linear trend to their MSS counterparts. In both, the design matrix only included Fourier basis terms. Results from this analysis can be found in Supplemental Figure \ref{suppFig:mortalityOECnoLin_vs_stackNoLin}. In this simpler modeling scheme as well, the OEC$^{\text{S}}$ and OEC$^{\text{SN}}$ approaches outperformed their MSS counterparts, suggesting that the OEC approach can improve the estimation of both seasonal and secular trends. Importantly, the OEC$^{\text{SN}}$ and OEC$^{\text{S}}$ performed comparably. This may suggest that the OEC$^{\text{SN}}$ yields the greatest benefit over OEC$^{\text{S}}$ when the modeling scheme is susceptible to problems related to identifiability or over-fitting. 
Finally, we compared OEC$^{\text{SN}}$ and OEC$^{\text{S}}$ models fit with no linear term for time to a country-specific model without a linear term for time (Supplemental Figure \ref{suppFig:mortalityOECnoLin_vs_countryNoLin}). These results demonstrate that also in the simpler modeling case, there are strong benefits to borrowing information with the OEC approach. 

The present work seeks to develop methods to improve prediction performance when training data for a target study (country) is limited. We conducted a sensitivity analysis to characterize how prediction performance varies as a function of the quantity of training data available for the target study. Supplemental Figure \ref{suppFig:mortalityTrainingMonths} shows that OEC$^{\text{S}}$ was associated with superior prediction performance compared to both a SSM and a MSS$^{\text{S}}$ when the target study had as much as three years of training data, although the benefit monotonically decreased as number of training observations increases. Importantly, borrowing information was never detrimental, on average, in any of the settings explored.

We conclude the application with an example to demonstrate that the country-specific model and the MSS$^{\text{SN}}$ are special cases of the OEC$^{\text{SN}}$ that arise from limiting values of $\eta \in (0, 1]$ (Figure \ref{fig:convexCombo}). We selected the country and test year presented in the figure to show an example that yielded country-specific estimates and OEC$^{\text{SN}}$ estimates that were different enough to visually depict the effect of $\eta$ but did not overstate the utility of borrowing information with either MSS$^{\text{SN}}$ or OEC$^{\text{SN}}$. This test year exhibit a spike in mortality counts for three consecutive outlying weeks early in the year. This type of pattern occurred commonly across countries and years and is difficult for any modeling approach to capture. As $\eta \rightarrow 0$, the OEC predictions approach that of the MSS$^{\text{SN}}$. As $\eta \rightarrow 1$, the OEC$^{\text{SN}}$ predictions approach that of the country-specific model. 


\begin{figure}[H]
	\centering
		\centering	\includegraphics[width=1\linewidth]{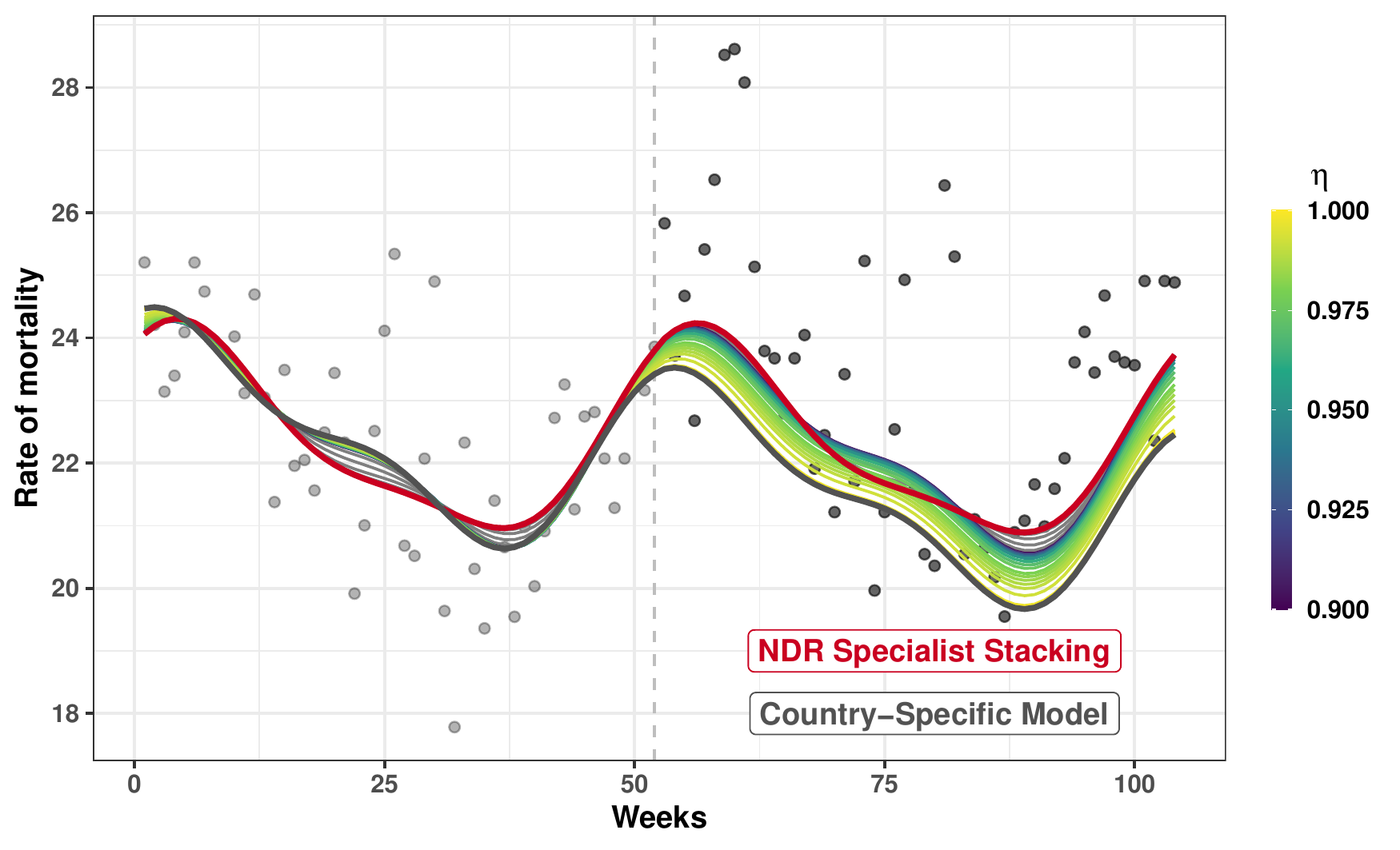}
\caption{Fitted values and predictions of OEC$^{\text{SN}}$ as a function of $\eta$. As $\eta \rightarrow 1$, the OEC$^{\text{SN}}$ predictions approach those of the country-specific model (SSM). When $\eta \rightarrow 0$, the OEC$^{\text{SN}}$ predictions approach those made by NDR Specialist Stacking (MSS$^{\text{SN}}$). We show $\eta \in [0,0.9]$ as a color gradient to zoom in on the most sensitive part of the tuning curve in this example. Gray lines indicate predictions for $\eta < 0.9$. Light (dark) gray points to the left (right) of the dotted vertical line are training (test) samples. Lines indicate fitted values (predictions) to the left (right) of the dotted vertical line.}
\label{fig:convexCombo}
\end{figure}

\section{Data-driven simulations based on COVID-19 Mortality Application}

\subsection{Design of Data-Driven Simulation Experiments}

We next conducted data-driven simulations to explore the COVID-19 application in a setting where we could compare predictions against a ground-truth. We simulated data from a linear mixed effects model: $$y_{k,t} = \mathbf{x}_{k,t}^T {\boldsymbol{\theta}}_k + \epsilon_{k,t}$$
where ${\boldsymbol{\theta}}_k \perp \!\!\! \perp \boldsymbol{\epsilon}_k$, ${\boldsymbol{\theta}}_k \sim N_{p}(\boldsymbol{\mu}_{\theta}, \sigma^2_{\theta} {\Sigma}_{\theta})$ and $\mathbf{x}_{k,t} \in \mathbb{R}^{p+1}$ which includes a column of ones for an intercept. $\mathbf{x}_{k,t}$ includes only basis expansions of time and was constructed as described above in the COVID-19 application. We chose the parameters of the random effects distribution based on estimates from the COVID-19 data. Denoting by $\hat{\boldsymbol{\gamma}}_k$ the coefficient estimates from an OLS fit to all observed data (i.e., all years prior to 2020) from country $k$, we define $\boldsymbol{\mu}_{\theta}= \bar{\hat{\boldsymbol{\gamma}}} = \frac{1}{K} \sum_k \hat{\boldsymbol{\gamma}_k}$ to be the sample mean of each coefficient, averaged across country-specific estimates. To explore the impact of heterogeneity in model coefficients on prediction performance we choose multiple scenarios for $\sigma^2_{\theta}$. For ${\Sigma}_{\theta}$, we used the empirical covariance matrix from the application by setting ${\Sigma}_{\theta} = \hat{\Sigma}_{\gamma}$ where $\hat{\Sigma}_{\gamma} = \frac{1}{K-1}\sum_k (\hat{\boldsymbol{\gamma}}_k \hat{\boldsymbol{\gamma}}_k^T - K \bar{\hat{\boldsymbol{\gamma}}} \bar{\hat{\boldsymbol{\gamma}}}^T)$. We assumed that the residuals of study $k$ are distributed as $\boldsymbol{\epsilon}_{k} \sim N_{n_k}(\mathbf{0}, {\sigma}^{2}_{\epsilon_k} \mathbb{I})$. We selected the variance of the residuals, ${\sigma}^{2}_{\epsilon_k}$, by sampling with replacement from the vector of estimated residual variances from each of the country-specific models $[$ $\hat{\sigma}^2_{\epsilon_1},...,\hat{\sigma}^2_{\epsilon_K} ]^T$. This allowed us to realistically emulate the variability in noise associated with the observed COVID-19 data from each of the countries.

The design matrix for study $k$, $\mathbb{X}_k$ is constructed as in the COVID-19 application from time (in weeks): it includes a linear effect for time and 4 terms that are Fourier basis expansions of time for seasonal trends. Including the column of ones for the intercept, $\mathbb{X}_k \in \mathbb{R}^{n_k \times 6}$. The test country was set to have 52 weeks of (weekly) training data ($n_{k^*} = 52$) and 52 weeks of test data. We drew $n_k$, the number of weeks of training data for $k \in [K] \setminus k^*$, from a discrete uniform: $n_k \sim \mbox{Unif}(104, 517)$. The parameters of the uniform were selected based upon the observed sample size in the real dataset and to ensure each study, for $k \in [K] \setminus k^*$, had at least twice as many weekly observations as the target country, $k^*$. 
Although the errors $\boldsymbol{\epsilon}_{k}$ are i.i.d, the basis functions used in the simulation scheme induces autocorrelation in the outcome $\mathbf{y}_k$, which we explored by inspection of the empirical autocorrelation function (ACF). We included a Ridge penalty in the stacking regression for the MSS methods. We included a Ridge penalty in the ensemble weighting component of the loss in the OEC methods as well. We assessed performance of the MSS and OEC methods with and without study-specific Ridge penalties. We present results with study-specific Ridge penalties in the supplement (Supplemental Figure \ref{fig:mortSims_Ridge}; Supplemental Table \ref{tab:mortSims_Ridge}). Results were comparable with and without the study-specific penalties. 
\subsection{Data-Driven Simulations Results}

In almost all settings explored, the OEC and MSS approaches substantially outperformed simply using a country-specific model. The OEC$^{\text{S}}$ appears to perform better compared to the MSS$^{\text{S}}$ when the magnitude of between-study heterogeneity in true model coefficients, ($\sigma^2_{\theta}$) is lower. Conversely, the OEC$^{\text{SN}}$ performed better relative to the MSS$^{\text{SN}}$ approach when the $\sigma^2_{\theta}$ was higher. Both the OEC$^{\text{S}}$ and OEC$^{\text{SN}}$ performed better relative to their MSS counterparts for lower $K$, although the effect of $K$ on performance varied as a function of the magnitude of $\sigma^2_{\theta}$. Importantly, the OEC$^{\text{S}}$ and OEC$^{\text{SN}}$ almost never yielded worse average performance than their MSS counterparts and they often conferred substantial benefit. In the rare instances where the OEC exhibited worse average performance than its MSS counterpart, it was only by roughly $1 \%$. These results demonstrate 1) borrowing information is almost uniformly beneficial and 2) the OEC approaches almost always outperformed using a MSS method.

\begin{figure}[!htbp]
	\centering
	\begin{subfigure}[t]{1\textwidth}
		\centering
		\includegraphics[width=1.0\linewidth]{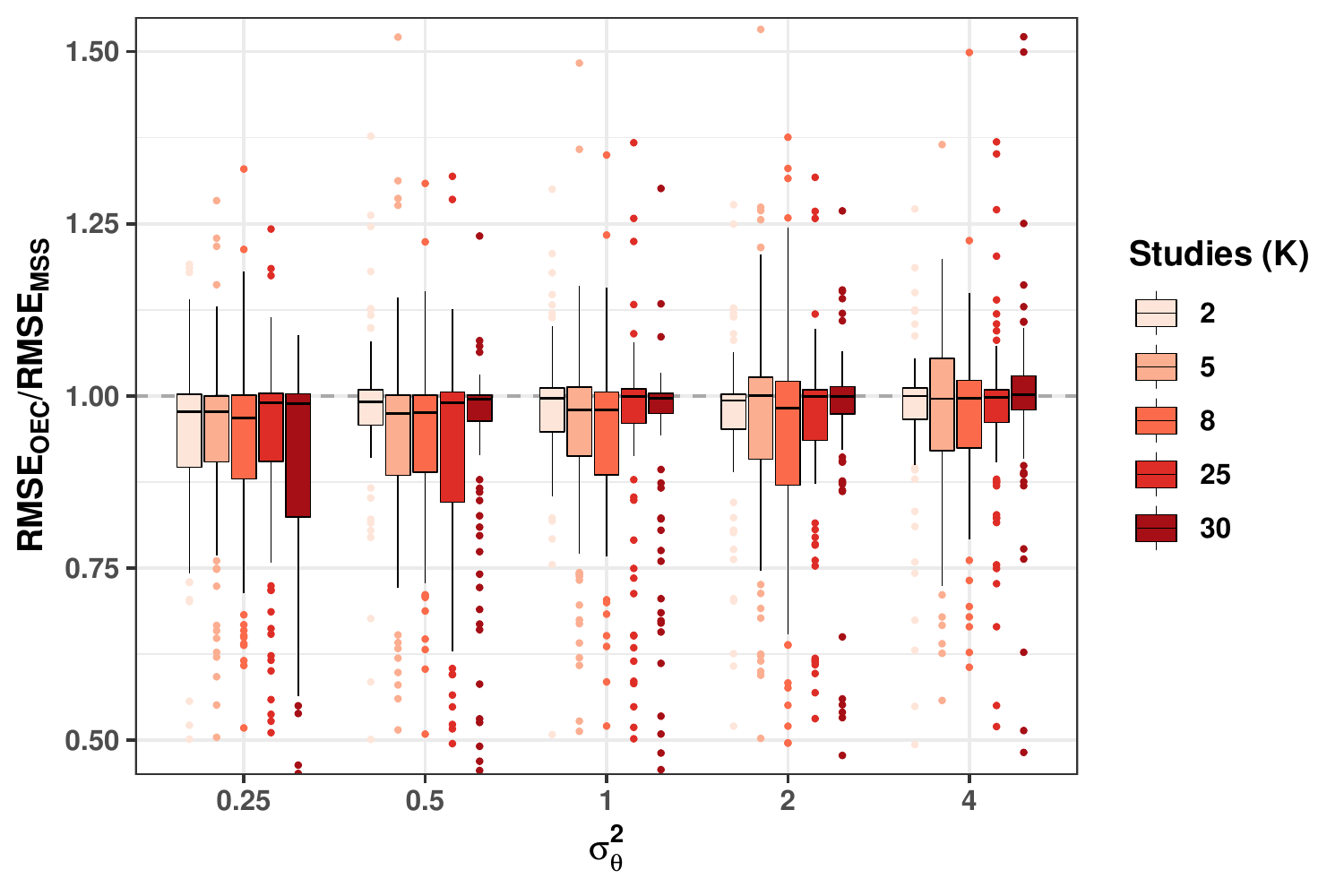}
		\caption{Specialist}
	\end{subfigure}
	\hfill
	\begin{subfigure}[t]{1\textwidth}
		\centering
		\includegraphics[width=1.0\linewidth]{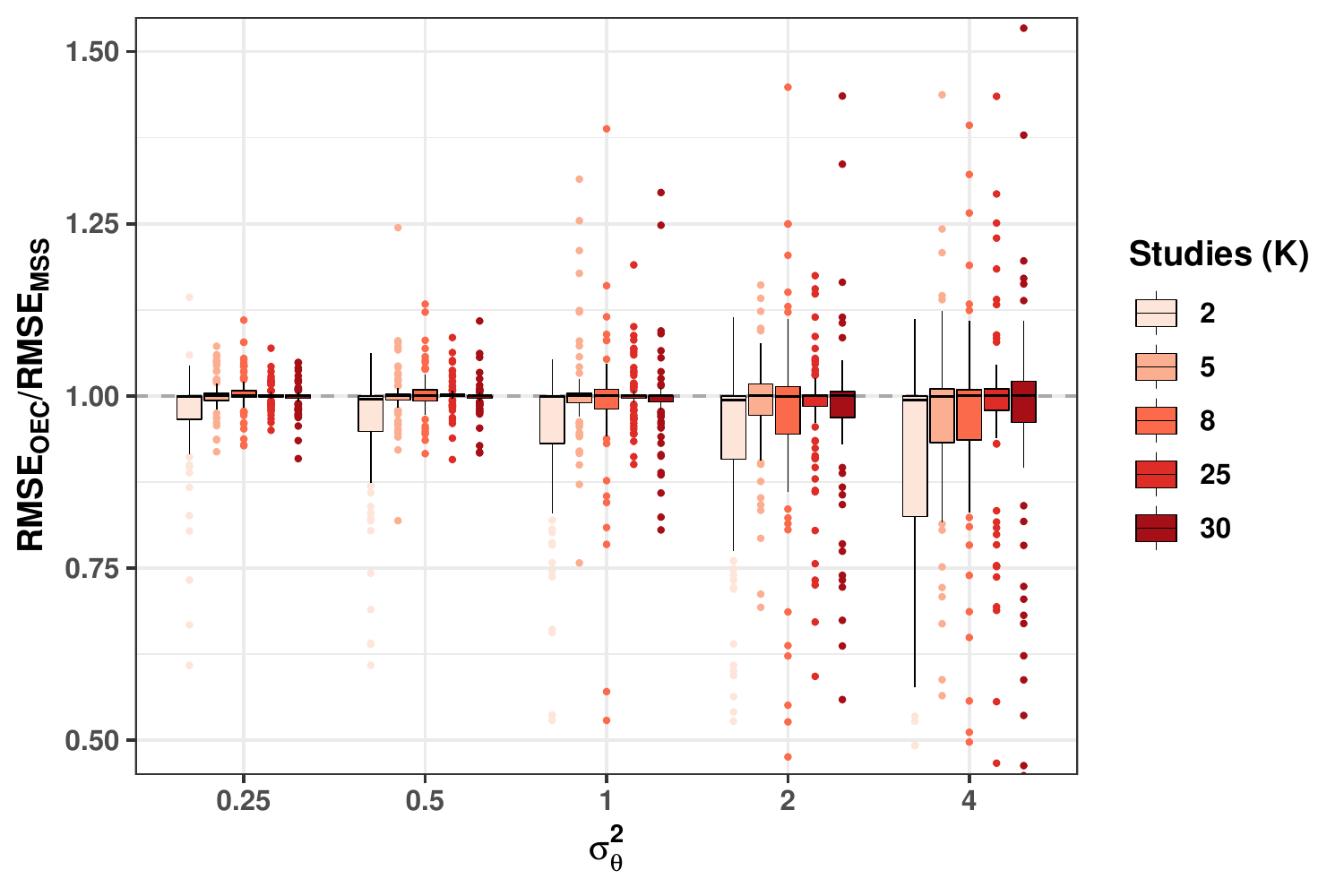}
		\caption{No Data Reuse}
	\end{subfigure}
	\caption{Performance of the OEC$^{\text{S}}$ and OEC$^{\text{SN}}$ compared to their MSS counterparts in data-driven simulations. Figures are zoomed in to easily visualize differences.}
\end{figure}

\begin{table}[h]
 \centering
\begin{minipage}[t]{1.0\linewidth} 
\centering
\begin{tabular}{r|rrrrr|rrrrr}
\toprule
\toprule
 \multicolumn{1}{c}{\footnotesize\bfseries Parameters} \vline & 
\multicolumn{5}{c}{\footnotesize\bfseries $\mathbf{OEC^{\text{S}}}$ vs. $\mathbf{MSS^{\text{S}}}$} \vline &
    
\multicolumn{5}{c}{\footnotesize\bfseries $\mathbf{OEC^{\text{SN}}}$ vs. $\mathbf{MSS^{\text{SN}}}$}  \\
  &
  \multicolumn{5}{c}{\footnotesize\bfseries $\sigma^2_{\theta}$} \vline  &
   \multicolumn{5}{c}{\footnotesize\bfseries $\sigma^2_{\theta}$}  \\
 $K$ & 0.25 & 0.5 & 1.0 & 2.0 & 4.0 & 0.25 & 0.5 & 1.0 & 2.0 & 4.0\\
\midrule
\midrule
2 & 0.94 & 0.99 & 0.98 & 0.97 & 0.97 & 0.97 & 0.95 & 0.94 & 0.92 & 0.89\\
5 & 0.93 & 0.94 & 0.95 & 0.97 & 0.99 & 1.00 & 1.00 & 1.00 & 1.00 & 0.98\\
8 & 0.93 & 0.93 & 0.94 & 0.94 & 0.99 & 1.00 & 1.00 & 0.99 & 0.97 & 0.98\\
25 & 0.94 & 0.92 & 0.95 & 0.95 & 0.98 & 1.00 & 1.00 & 1.00 & 0.98 & 0.98\\
30 & 0.90 & 0.95 & 0.95 & 0.97 & 1.00 & 1.00 & 1.01 & 0.99 & 0.98 & 0.98\\
\bottomrule
\end{tabular}
\end{minipage}
\caption{Average data-driven simulation performance: $RMSE_{OEC} / RMSE_{MSS}$. Columns indicate $\sigma^2_{\theta}$ and rows indicate $K$, number of studies. Monte carlo error was at most 0.002.}
\label{tab:mortSims1}
\end{table}

\begin{table}[h]
  \centering
\begin{minipage}[t]{0.75\linewidth} \centering
\begin{tabular}{rr|rrrr}
\toprule
\toprule
\multicolumn{2}{c}{\footnotesize\bfseries ~~~Parameters}  \vline  & 
     \multicolumn{1}{c}{\footnotesize\bfseries $\mathbf{OEC^{\text{S}}}$} 
     &
     \multicolumn{1}{c}{\footnotesize\bfseries $\mathbf{MSS^{
     \text{S}}}$} &
     \multicolumn{1}{c}{\footnotesize\bfseries $\mathbf{OEC^{\text{SN}}}$} 
      &
      \multicolumn{1}{c}{\footnotesize\bfseries $\mathbf{MSS^{
     \text{SN}}}$ } \\
  \multicolumn{1}{c}{\footnotesize\bfseries $K$} &
   \multicolumn{1}{c}{\footnotesize\bfseries ~~~~~$\sigma^2_{\theta}$}  \vline &
\multicolumn{4}{c}{\footnotesize  ~~~~~vs. Study-Specific Model   ~~~~~} \\
   
\midrule
\midrule
2 & 0.25 & 0.91 & 0.96 & 0.82 & 0.85\\
2 & 1.00 & 0.96 & 0.98 & 0.96 & 1.05\\
2 & 4.00 & 0.96 & 0.98 & 1.06 & 1.24\\
\addlinespace
8 & 0.25 & 0.86 & 0.94 & 0.80 & 0.80\\
8 & 1.00 & 0.88 & 0.94 & 0.81 & 0.82\\
8 & 4.00 & 0.90 & 0.92 & 0.89 & 0.93\\
\addlinespace
30 & 0.25 & 0.83 & 0.92 & 0.81 & 0.81\\
30 & 1.00 & 0.85 & 0.90 & 0.84 & 0.84\\
30 & 4.00 & 0.90 & 0.89 & 0.89 & 0.91\\
\bottomrule
\end{tabular}
\end{minipage}
\caption{Average data-drive simulation performance without regularization at different $K$ and $\sigma^2_{\theta}$. Performance of each method is $RMSE_{OEC^{\text{S}}} / RMSE_{SSM}$, where $SSM$ denotes a study-specific model, fit only on the target study. Monte carlo error was at most 0.0073.
}
\label{table:mortSims2_zero}
\end{table}

\section{General Simulations}

\subsection{Design of Simulation Experiments}

We next report on a second set of simulations whose goal is to characterize the performance of both the ``generalist'' and ``specialist'' algorithms, outside of a time series setting. We simulated datasets to characterize the effect of three features of multi-study settings on the performance of our proposed methods: 1) covariate-shift (heterogeneity in $f_{x_k}(\mathbb{X}_k)$ across studies), 2) concept-shift (heterogeneity in $f_{y_k \given x_k } (\mathbf{y}_k \given \mathbb{X}_k)$ across studies), and 3) study clusters across which (1) and (2) vary. We generated clusters as in \citep{Loewinger}: groups of studies that shared similar distributions, $f_{\mathbb{X}_k}(\mathbb{X}_k)$ and $f_{\mathbf{y}_k\given\mathbb{X}_k}(\mathbf{y}_k \given \mathbb{X}_k)$. To control both within and between-cluster heterogeneity in $f_{\mathbf{y}_k\given\mathbb{X}_k}(\mathbf{y}_k \given \mathbb{X}_k)$, we simulated $f_{y_k \given x_k }(\mathbf{y}_k \given \mathbb{X}_k)$ from a linear mixed effects model that included both cluster-specific and study-specific random effects:
$$\mathbf{y}_{k} = \mathbb{X}_k (\boldsymbol{\theta}_k + \boldsymbol{\delta}_{c}) + \boldsymbol{\epsilon}_{k},$$ 
where study $k$ is in cluster $c$, $\boldsymbol{\delta}_{c} \in \mathbb{R}^{p+1}$ is a cluster-specific random effect and ${\boldsymbol{\theta}}_k \in \mathbb{R}^{p+1}$ is a study-specific random effect. We independently drew $\boldsymbol{\delta}_{c} \given \boldsymbol{\mu}_{\delta} \sim N_{p+1}(\boldsymbol{\mu}_{\delta},  \sigma^2_{\delta} \mathbb{I})$ and $\theta_{k,j} \sim \mbox{Unif}(-\sigma^2_{{\delta}}/20,~ \sigma^2_{{\delta}}/20)$ so that that random effects varied across studies within a cluster by a degree proportional to the between-cluster heterogeneity. The vector of random effects ${\boldsymbol{\delta}}_c$ is centered at the fixed effects, $\boldsymbol{\mu}_{\delta} \in \mathbb{R}^{p+1}$, where we independently drew $\mu_{\delta,j} \sim \mbox{Unif}(-2,2)$. We simulated the random effects to be independent of the error term: ${\boldsymbol{\theta}}_k \perp \!\!\! \perp \boldsymbol{\epsilon}_k$ and ${\boldsymbol{\delta}}_c \perp \!\!\! \perp \boldsymbol{\epsilon}_k$. As above, $\mathbb{X}_k$ includes a column of ones so that the model includes an intercept. We conditionally drew $\boldsymbol{\epsilon}_{ki} \given \sigma^2_{\epsilon_k} \sim N_{n_k}(0, \sigma^2_{\epsilon_k} \mathbb{I}_{n_k})$ where $\sigma^2_{\epsilon_k} \sim \mbox{Unif}(1,~2)$, inducing heterogeneity between studies in the variance of the residuals. We drew $n_k \sim \mbox{Unif}(150,~300)$ (discrete uniform). $n_{k^*} = 50$ (the training set for the target study). $n_{test} = 100$ (test set for each iteration). We selected these values to ensure that  $n_k > n_{k^*}$. These sample size values were motivated by the ratio of $n_k / n_{k^*}$ observed in the application. We selected $K = 5$ since many multi-study settings encountered in practice have few training studies available. We selected $p = 20$ and set $10$ coefficients to 0, to model sparsity in the true model coefficients, as relevant in many prediction settings. 

We simulated the covariates so that the marginal distribution of the covariates, $f_{\mathbb{X}_k}(\mathbb{X}_k)$, differed across studies and clusters. We drew a vector of covariates for observation $i$ of study $k$ as $\mathbf{x}_{k,i} \given \boldsymbol{\mu}_{X_k} \sim N_p (\boldsymbol{\mu}_{X_k}, \Sigma_X)$ where $\Sigma_X$ is a covariance matrix that was randomly generated (i.e., varied across simulation iterations) but was held fixed across studies within an iteration and $\boldsymbol{\mu}_{X_k}$ was a study specific vector of covariate means. To induce covariate shift, we modeled the means of the covariates in study $k$ and cluster $c$ as 
$$\boldsymbol{\mu}_{X_k} = \boldsymbol{\zeta}_c + \boldsymbol{\tau}_k$$ where $ \boldsymbol{\zeta}_c, \boldsymbol{\tau}_k \in \mathbb{R}^{p}$. We drew cluster-specific covariate means, $\boldsymbol{\zeta}_c \given \tilde{\boldsymbol{\mu}} \sim  N_p(\tilde{\boldsymbol{\mu}}, \sigma^2_X \mathbb{I})$ and we independently drew $\boldsymbol{\tau}_k \sim \mbox{Unif}(-0.05, 0.05)$. Therefore $\sigma^2_X$ controls the magnitude of heterogeneity across clusters in the means of the covariates. We independently drew $\tilde{\mu}_j \sim N(5, 10)$. Thus studies differed within and across clusters in the covariate distributions. 

We simulated sets of studies both with and without study clusters. In the ``no cluster'' case, we drew each training and test study to be in a separate cluster ($C=6$). In the ``cluster'' case, we generated three clusters with two studies per cluster ($C=3$). The test study was also generated as belonging to one of the same three clusters.



    
We simulated 100 iterations for each set of simulation parameters, each iteration consisting of a set of training studies and a test set. Each iteration therefore produced an RMSE for each method. We show the distribution of these RMSEs across the 100 iterations in figures below.


We used both Ridge regression and OLS as the single-study learners. We tuned model parameters as in the COVID-19-driven simulations. We present figures and results without study-specific Ridge penalties in Supplemental Section \ref{sims23_supplement}.

\subsection{Results}

We present results comparing each OEC approach to the corresponding two-stage stacking approach, to investigate whether jointly training an ensemble in this framework is superior to training the ensemble with MSS. We included a subset of the results in Table \ref{table:generalSims_v2} and the full version in Supplemental Table \ref{table:generalSims_v2_supplement_cvCF}. We also present the results relative to a common baseline in Supplemental Table~\ref{table:generalSims_full_cvCF}.


\begin{figure}[h]
	\centering
	\begin{subfigure}[t]{1\textwidth}
		\centering
		\includegraphics[width=1.0\linewidth]{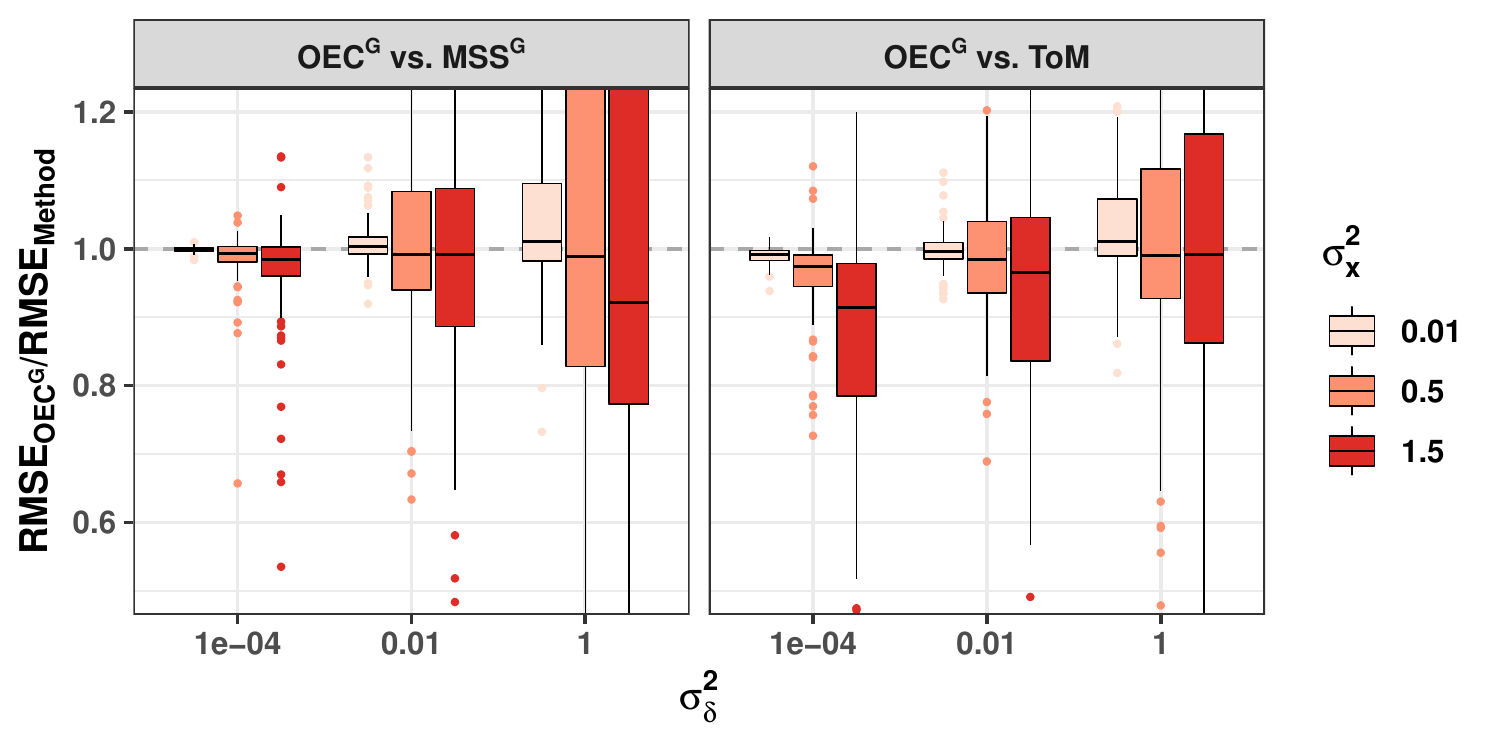}
		\caption{No clusters}
	\end{subfigure}
	\hfill
	\begin{subfigure}[t]{1\textwidth}
		\centering
		\includegraphics[width=1.0\linewidth]{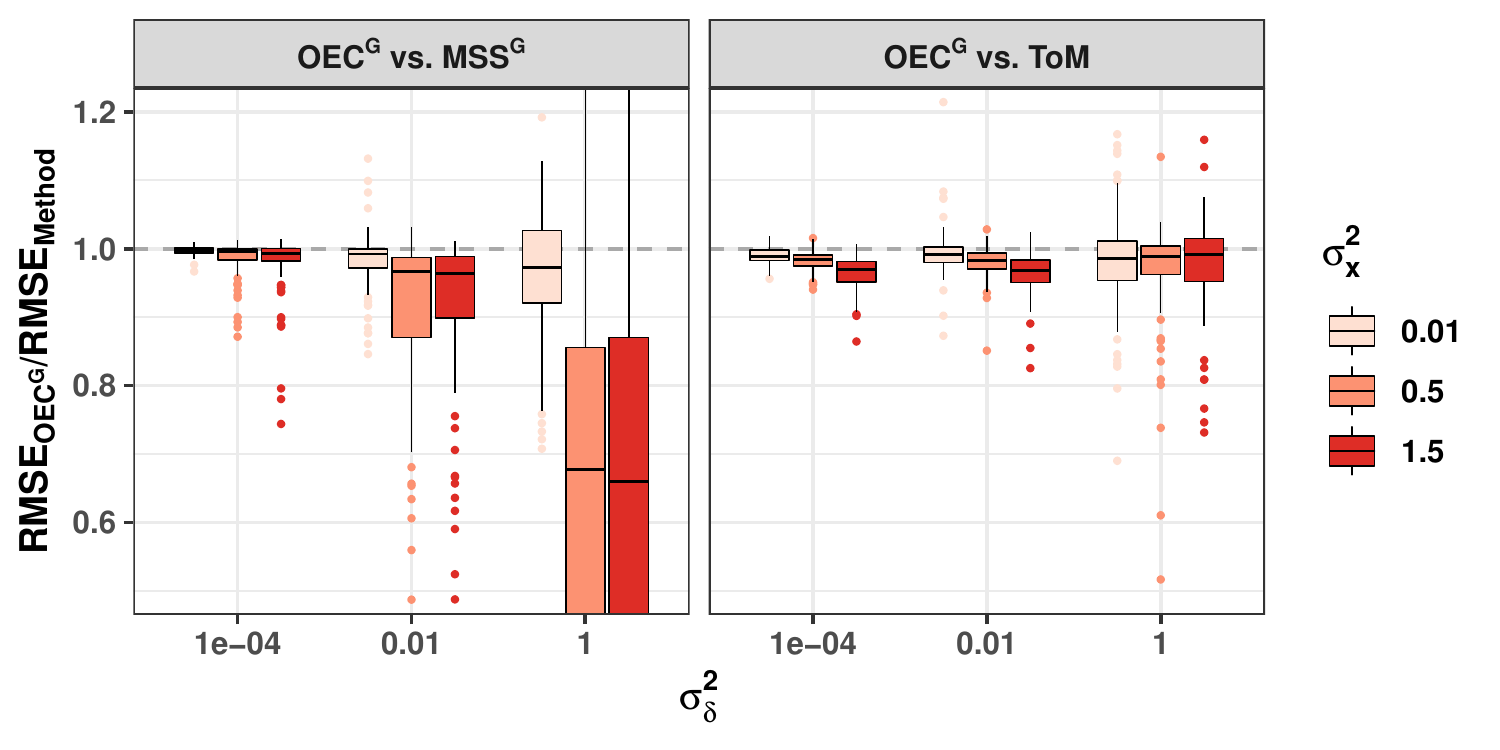}
		\caption{Clusters}
	\end{subfigure}
	\caption{Performance of the OEC$^{\text{S}}$ compared to MSS$^{\text{G}}$ and the ToM algorithm. Figures are zoomed in to easily visualize important differences.}
\end{figure}

\begin{table}[h]
 \centering
\begin{minipage}[t]{1.0\linewidth} 
\centering
\begin{tabular}{rrr|rr|rrrr}
\toprule
\toprule
\multicolumn{3}{c}{\footnotesize\bfseries ~~~Parameters} \vline &
\multicolumn{1}{c}{\footnotesize\bfseries $\mathbf{OEC^{\text{G}}}$} &
    \multicolumn{1}{c}{\footnotesize\bfseries  $\mathbf{MSS^{\text{G}}}$} \vline   &
     \multicolumn{1}{c}{\footnotesize\bfseries $\mathbf{OEC^{\text{S}}}$} 
     &
     \multicolumn{1}{c}{\footnotesize\bfseries  $\mathbf{MSS^{\text{S}}}$} &
     \multicolumn{1}{c}{\footnotesize\bfseries
     $\mathbf{OEC^{\text{SN}}}$} 
      &
      \multicolumn{1}{c}{\footnotesize\bfseries  $\mathbf{MSS^{\text{SN}}}$ } \\
 \multicolumn{1}{c}{\footnotesize\bfseries $C$} &
  \multicolumn{1}{c}{\footnotesize\bfseries $\sigma^2_X$} &
   \multicolumn{1}{c}{\footnotesize\bfseries $\sigma^2_{\delta}$}  \vline &
      \multicolumn{2}{c}{\footnotesize vs. ToM~~~~~}
\vline &
\multicolumn{4}{c}{\footnotesize  ~~~~~vs. Study-Specific Model   ~~~~~} \\
   
\midrule
\midrule
3 & 0.01 & 0.01 & 0.99 & 1.00 & 0.87 & 0.99 & 0.84 & 0.84\\
3 & 1.50 & 0.01 & 0.96 & 1.07 & 0.85 & 0.99 & 0.85 & 0.83\\
3 & 0.01 & 1.00 & 0.98 & 0.99 & 0.94 & 0.99 & 0.89 & 0.85\\
3 & 1.50 & 1.00 & 1.02 & 1.85 & 1.01 & 0.99 & 0.99 & 0.85\\
\addlinespace
6 & 0.01 & 0.01 & 1.00 & 0.99 & 0.89 & 0.98 & 0.86 & 0.85\\
6 & 1.50 & 0.01 & 0.97 & 1.00 & 0.89 & 0.98 & 0.86 & 0.85\\
6 & 0.01 & 1.00 & 1.06 & 0.99 & 1.01 & 0.99 & 1.03 & 1.75\\
6 & 1.50 & 1.00 & 1.12 & 1.15 & 1.03 & 0.99 & 1.06 & 1.74\\
\bottomrule
\end{tabular}
\end{minipage}
\caption{Average simulation performance with ($C = 3$) and without ($C = 6$) clustering at varying degrees of covariate-shift ($\sigma^2_{X}$) and variance of random effects ($\sigma^2_{\delta}$). Each section indicates the performance of the method (e.g., OEC$^{\text{G}}$) relative to a baseline of the ToM algorithm or a study-specific model respectively (e.g., RMSE$_{\text{OEC}^{\text{G}}}$ / RMSE$_{\text{ToM}}$, RMSE$_{\text{OEC}^{\text{S}}}$ / RMSE$_{\text{MSS}^{\text{S}}}$). Monte Carlo error was at most 0.003.}
\label{table:generalSims_v2}
\end{table}

The OEC$^{\text{G}}$ outperformed the MSS$^{\text{G}}$ across all values of both $\sigma^2_X$ and $\sigma^2_{\delta}$, when there was clustering in the studies. The OEC$^{\text{G}}$ outperformed the ToM  across all values of $\sigma^2_X$ and $\sigma^2_{\delta}$ except for very high levels of $\sigma^2_{\delta}$, where the two algorithms were comparable. When there was no clustering, the OEC$^{\text{G}}$ outperformed both the ToM and the MSS$^{\text{G}}$ at lower values of $\sigma^2_{\delta}$ and was comparable to both algorithms at higher values of $\sigma^2_{\delta}$.

The OEC$^{\text{S}}$ exhibited superior performance compared to the MSS$^{\text{S}}$ in most settings explored. However, when at high levels of $\sigma^2_{\delta}$ and when the data exhibited no clustering, the algorithms performed comparably. The OEC$^{\text{SN}}$ tended to perform better when there was no clustering and there was higher variance of the random effects. When the data exhibited clustering, the MSS$^{\text{SN}}$ performed comparably or slightly better than its OEC counterpart in all settings explored.
    
While covariate shift appeared to influence the relative performance of the methods in the generalist setting, it did not appear to explain much of the variability in performance of the OEC in the specialist case (i.e., OEC$^{\text{S}}$ and OEC$^{\text{SN}}$). At a fixed value of $\sigma^2_{\delta}$, the performance of the OEC$^{\text{G}}$ relative to the MSS$^{\text{G}}$ varied as a function of $\sigma^2_X$. It appeared the relative performance of the OEC$^{\text{G}}$ improved as a function of $\sigma^2_X$. However, it appears the OEC$^{\text{S}}$ performed better relative to its MSS counterpart when there were lower levels of covariate shift.

These results demonstrate the benefits of an all-in-one approach. The OEC$^{\text{G}}$ often outperformed both MSS$^{\text{G}}$ and the ToM. While MSS$^{\text{G}}$ often performed well compared to the ToM, it suffers from cases where it is vastly outperformed by the ToM. For example, when $\sigma^2_{\delta}$ and $\sigma^2_{X}$ were high and there was clustering in the studies ($C = 3$), the generalist approach exhibited an average RMSE about $80\%$ higher than the ToM. This echos the frequent empirical observation as well as insights from analytical work comparing ToM and ensembling \citep{ren, Guan}. The OEC$^{\text{G}}$ approach however, does not suffer from cases where it is vastly outperformed by the standard benchmarks; indeed, the OEC$^{\text{G}}$ was comparable to the ToM in that simulation setting. The ToM modestly outperformed the OEC$^{\text{G}}$ on average when $\sigma^2_{\delta}$ was high and the studies did not exhibit clustering, but this appeared to be driven by outliers.

The performance of the OEC$^{\text{S}}$ and OEC$^{\text{SN}}$ further demonstrate the utility of jointly estimating model parameters and ensemble weights. The OEC$^{\text{S}}$ strongly outperforms the study-specific model and the MSS$^{\text{S}}$ (by as much as about $15\%$). While the OEC$^{\text{SN}}$ performs comparably or slightly worse than the MSS$^{\text{SN}}$ method in some settings, it importantly never performs worse than a study-specific model. When studies do not cluster and $\sigma^2_X$ and $\sigma^2_{\delta}$ are high, the MSS$^{\text{SN}}$ method exhibits an RMSE about $75\%$ higher than a simple study-specific model. In these cases, OEC$^{\text{SN}}$ still remains superior to or competitive with the study-specific model and the MSS$^{\text{SN}}$. 

Taken together, these simulations demonstrate that the performance of the OEC was often comparable or superior to MSS in both generalist and specialist settings. Additionally, while MSS methods sometimes exhibited very poor performance compared to benchmark approaches, the OEC methods consistently outperformed these methods in these cases, highlighting the robustness of these approaches.

\begin{figure}[H]
	\centering
	\begin{subfigure}[t]{1\textwidth}
		\centering
		\includegraphics[width=1.0\linewidth]{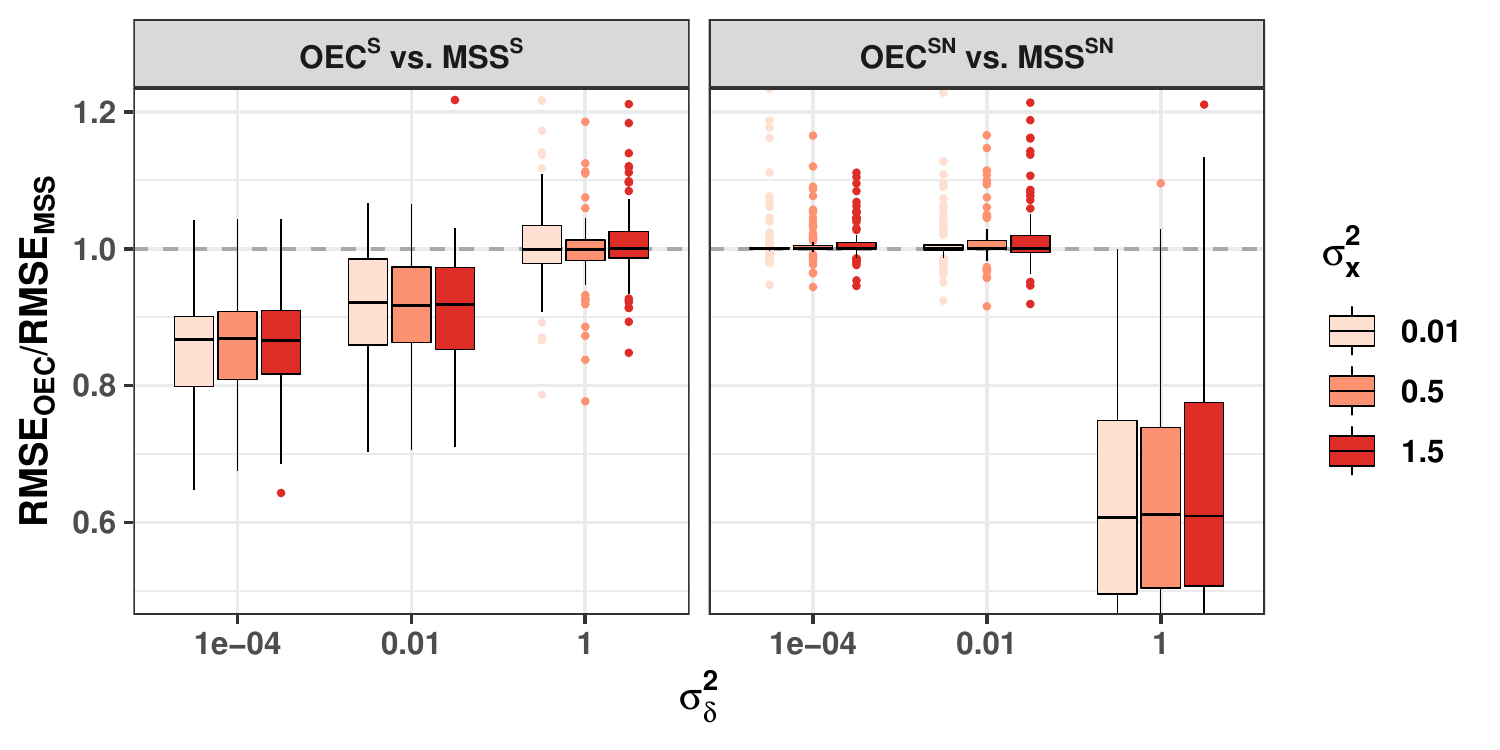}
		\caption{No clusters}
	\end{subfigure}
	\hfill
	\begin{subfigure}[t]{1\textwidth}
		\centering
		\includegraphics[width=1.0\linewidth]{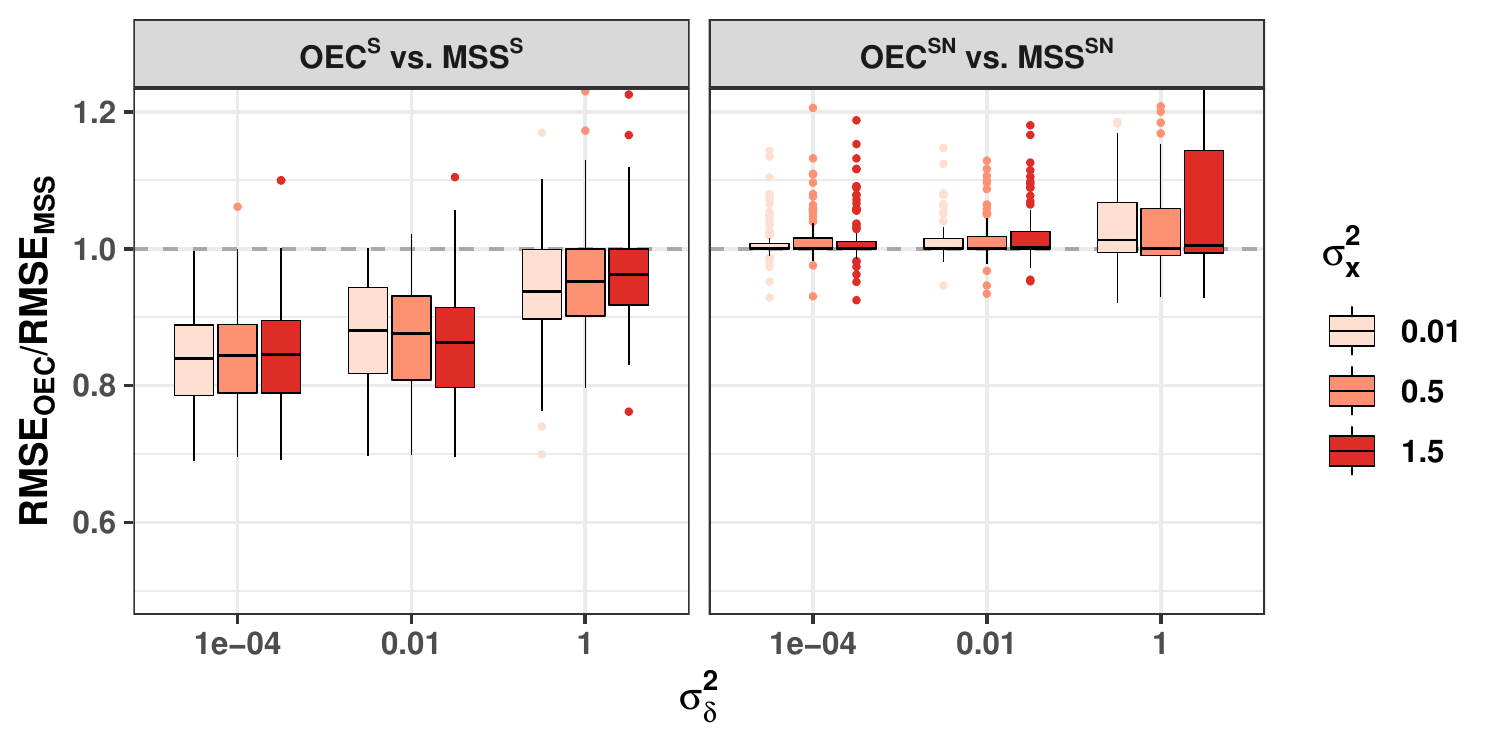}
		\caption{Clusters}
	\end{subfigure}
	\caption{ Performance of the OEC$^{\text{S}}$ and OEC$^{\text{SN}}$ compared to their MSS counterparts. Figures are zoomed in to easily visualize differences.}
\end{figure}

\section{Discussion}
\label{discussion}
We propose and evaluate Optimal Ensemble Construction (OEC), a flexible approach that can be used to construct ensemble learners in domain generalization and transfer learning. OEC generalizes two-stage multi-study stacking in cases where the individual learners are finitely parameterized, by specifying an explicit loss function. OEC improved prediction performance of both specialist multi-study stacking (for transfer learning) and generalist multi-study stacking (for domain generalization). We observed the most consistent gains in the transfer learning setting.

In our application, we showed how leveraging external sources of data through the OEC can be used to improve the accuracy of excess mortality estimation when countries do not have sufficient pre-disaster data. Even for countries that have sufficient training data, our method may prove useful when analyzing mortality (or any other vital statistics outcome) stratified by demographic indicators, or subregions of a country \citep{islam2021excess} since it is common to encounter small counts for the outcome when conducting analyses within substrata. In such cases, it may prove beneficial to borrow information from neighboring regions, or other demographic groups. More broadly, using multi-study methods in counterfactual estimation may be a promising area of future research in settings where data sources for counterfactual estimation are limited. The work also adds to the growing body of literature exploring the application of ensembling methods in COVID-19-related prediction problems. For example, ensembling techniques have proven useful for predicting cases \citep{covidEnsemble1, covidEnsemble2}. 

Recent work has proposed gradient boosted trees to directly estimate excess mortality during the COVID-19 pandemic based upon a large set of covariates such as COVID-19 case counts and demographic variables \citep{economist}. The authors proposed to use data from many countries during model training and motivated their methods by the need for excess mortality predictions in countries which have not made estimates available. They propose to pool data from different countries before fitting one global model to predict \textit{excess} mortality given a set of country-specific covariates. This differed from our method which generated an ensemble of country-specific models to predict \textit{baseline} mortality based upon historical mortality trends. 


Our method is not without its limitations. A disadvantage of combining the study-specific and ensemble weighting loss functions is that non-convexity arises from the resulting bilinear terms. As a result, the optimization approach implemented here is not guaranteed to converge to the global minimum and requires careful initialization. We found that initializing at MSS estimates yields consistently high prediction performance in practice and appeared to outperform other heuristics such as random restarts. Thus, the optimization procedure will always be more computationally expensive than multi-study stacking. However, in all the settings explored, the optimization procedure converged within a couple seconds and allowed for tuning over large grids of hyperparameter values. It is possible to obtain global solutions to the problems by using modern techniques in mixed integer programming~\citep{bertsimas2017certifiably}. Such techniques can deliver optimality certificates for the associated optimization problems but would likely be much slower compared to the methods we present here. 

The OEC method that we proposed here lend itself to future extensions. For example, we have introduced the OEC in the regularized linear-linear setting, but one could easily replace the linear models with other specifications (e.g., support vector machines) at either the SSL stage, the ensembling stage or both. Also, loss functions could be replaced by penalized negative log-likelihoods reflecting distributions other than the Gaussian. Similarly, while the propositions focus on the linear models case, analogous results can be shown for the broader class of generalized linear models. However, we keep the analysis focused on the linear case, empirically explored in the present work. In addition, some of the results above can be recast to account for models with regularization, but these generalizations require additional regularity conditions and may not provide further insight into the connections between the OEC and earlier multi-study methods. 

The methods presented here can complement many methods proposed in the rich literature of transfer learning, domain generalization and multi-task learning \citep{Zhang, Zhuang, Farahani}. A number of methods proposed in these bodies of work seek to jointly train ensembles using matrix decomposition methods of model parameters and matrix regularization schemes. Indeed, such methods can be used in conjunction with our framework since they focus on improving prediction performance through the model parameter estimation procedure, not the ensemble weighting scheme. In this sense, our work complements many cutting edge methods for multi-source data integration.

In summary we propose a flexible generalization of multi-study stacking that yields improvements in prediction performance in both ``generalist'' and ``specialsit'' implementations. We hope the present work will be beneficial both methodologically, by complementing existing methods, as well as in improving excess mortality estimation in low counts settings.


\section{Software and Reproducibility}
\label{sec5}

Code and instructions to reproduce analyses, figures and tables are available at: \url{https://github.com/gloewing/OEC}. This contains code to tune and fit penalized linear regression problems with all the methods assessed here. 


\section*{Acknowledgments}

GCL was supported by the NIH, F31DA052153; T32 AI 007358. RA was supported by the NIH, T32ES007142. GP was supported by NSF-DMS grants 1810829 and 2113707. RM acknowledges partial research support from NSF-IIS-1718258.
{\it Conflict of Interest}: None declared.

\bibliographystyle{biorefs}
\bibliography{refs}

\newpage
\section{Supplementary Material: Optimization Algorithm}

Denote $f(\mathbb{B}, \boldsymbol{w})$ as the optimization function for the OEC$^{\text{G}}$ method:
    \begin{align*}
    f(\mathbb{B}, \boldsymbol{w}) &= \underset{ \boldsymbol{\alpha}^{\text{G}} \geq \mathbf{0}, ~\alpha_0^{\text{G}} } {\mbox{min}} ~~~ \underset{\mathbb{B}^{\text{G}}} {\mbox{min }} ~ \left\{ \eta~ \left[ \frac{1}{2N} \norm{\boldsymbol{y} - \alpha_0^{\text{G}} \mathbbm{1} - \sum_{k=1}^K \alpha_k^{\text{G}} \mathbb{X} \boldsymbol{\beta}_k^{\text{G}} }_2^2  + \frac{\mu}{2} \norm{\mathbb{D}_{K+1} \boldsymbol{\alpha}^{\text{G}} }_2^2 \right] \right. + \notag \\
    & ~~~~~~~~~~~~ \qquad \qquad \quad
    (1-\eta) \left. \left[ \sum_{k=1}^K \frac{1}{2n_k} \norm{\boldsymbol{y}_k - \mathbb{X}_k \boldsymbol{\beta}_k^{\text{G}}}_2^2 + \frac{1}{2} \sum_{k=1}^K \lambda_k  \norm{\mathbb{D}_{p+1} \boldsymbol{\beta}_k^{\text{G}} }_2^2 \right ] \right\}
    \label{eq:gen}
\end{align*}

We describe the optimization algorithm for this case, with the understanding that the other cases will be similar.
\begin{figure}[H]
    \centering
    \includegraphics[scale = 0.65]{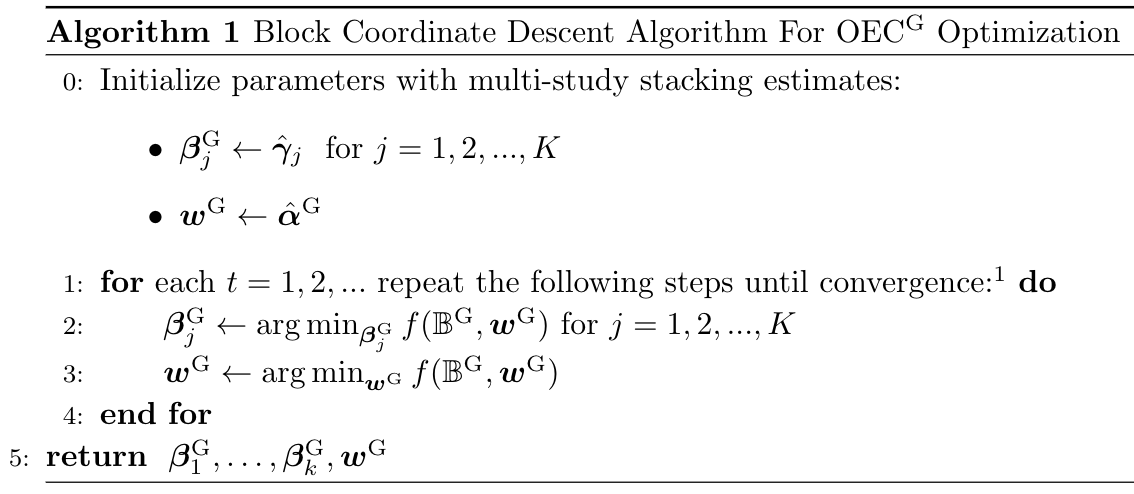}
    \caption{Block coordinate descent algorithm for optimization of the OEC loss. $^1$We terminate the algorithm when the relative difference in objective values across two successive updates in $(\boldsymbol{\beta}_{1}, \ldots, \boldsymbol{\beta}_{k}, \boldsymbol{w}^{\text{G}})$ is smaller than a threshold or the number of iterations is larger than 1000, whichever is sooner.}
    \label{suppFig:opt}
\end{figure}

\section{Supplementary Material: COVID-19 Figures}

\begin{table}[H]
\centering
\resizebox{1.05 \textwidth}{!}{\begin{tabular}{l|rrrrrrrrrrrrrrrrr}
\toprule
\toprule
\multicolumn{1}{c}{\bfseries Method} \vline & \multicolumn{17}{c}{\bfseries Test Year} \\
  & 2003 & 2004 & 2005 & 2006 & 2007 & 2008 & 2009 & 2010 & 2011 & 2012 & 2013 & 2014 & 2015 & 2016 & 2017 & 2018 & 2019\\
\midrule
\midrule
MSS$^{\text{S}}$  & 0.96 & 0.99 & 0.95 & 0.99 & 0.97 & 0.94 & 0.97 & 0.98 & 0.92 & 0.97 & 0.91 & 0.92 & 0.94 & 0.95 & 0.90 & 0.97 & 0.92\\
OEC$^{\text{S}}$ & 0.67 & 0.68 & 0.61 & 0.69 & 0.72 & 0.70 & 0.66 & 0.60 & 0.72 & 0.76 & 0.69 & 0.62 & 0.84 & 0.63 & 0.72 & 0.58 & 0.61\\
MSS$^{\text{SN}}$  & 0.63 & 0.62 & 0.51 & 0.64 & 0.66 & 0.66 & 0.63 & 0.54 & 0.71 & 0.74 & 0.69 & 0.60 & 0.86 & 0.58 & 0.69 & 0.40 & 0.58\\
OEC$^{\text{SN}}$ & 0.63 & 0.64 & 0.49 & 0.62 & 0.66 & 0.65 & 0.63 & 0.53 & 0.71 & 0.74 & 0.68 & 0.60 & 0.86 & 0.57 & 0.70 & 0.38 & 0.52\\
\bottomrule
\end{tabular}}
\caption{Average $RMSE / RMSE_{SSM}$ Performance of ensembling methods relative to a country-specific model (no country-specific Ridge penalty).  Columns indicate test year.}
\label{table:Fullcovid_OLS}
\end{table}

\begin{table}[H]
\centering
\resizebox{1.05 \textwidth}{!}{\begin{tabular}{l|rrrrrrrrrrrrrrrrr}
\toprule
\toprule
\multicolumn{1}{c}{\bfseries Method} \vline & \multicolumn{17}{c}{\bfseries Test Year} \\
  & 2003 & 2004 & 2005 & 2006 & 2007 & 2008 & 2009 & 2010 & 2011 & 2012 & 2013 & 2014 & 2015 & 2016 & 2017 & 2018 & 2019\\
\midrule
\midrule
MSS$^{\text{S}}$ & 0.99 & 1.01 & 1.01 & 1.03 & 0.99 & 1.00 & 0.99 & 1.01 & 1.00 & 1.00 & 1.00 & 1.02 & 0.99 & 1.00 & 1.01 & 1.03 & 1.01\\
OEC$^{\text{S}}$ & 0.77 & 0.88 & 0.71 & 0.82 & 0.87 & 0.83 & 0.79 & 0.67 & 0.88 & 0.86 & 0.86 & 0.75 & 0.94 & 0.72 & 0.88 & 0.64 & 0.71\\
MSS$^{\text{SN}}$ & 0.75 & 0.84 & 0.62 & 0.78 & 0.81 & 0.80 & 0.76 & 0.62 & 0.87 & 0.84 & 0.88 & 0.74 & 0.96 & 0.68 & 0.85 & 0.48 & 0.73\\
OEC$^{\text{SN}}$ & 0.75 & 0.86 & 0.60 & 0.76 & 0.82 & 0.79 & 0.76 & 0.61 & 0.87 & 0.84 & 0.87 & 0.73 & 0.96 & 0.66 & 0.88 & 0.44 & 0.63 \\
\bottomrule
\end{tabular}}
\caption{Average $RMSE / RMSE_{SSM}$ Performance of ensembling methods relative to a country-specific model with a study-specific Ridge penalty.  Columns indicate test year.}
\label{table:covid_Ridge}
\end{table}

\begin{table}[H]
\centering
\begin{tabular}{ll}
\toprule
\toprule
\textbf{Country} & \textbf{Weeks}\\
\midrule
\midrule
Austria & 217\\
Belgium & 261\\
Chile & 547\\
Denmark & 529\\
Ecuador & 179\\
France & 125\\
Germany & 223\\
Iceland & 171\\
Israel & 261\\
Italy & 96\\
Netherlands & 523\\
Norway & 534\\
Peru & 183\\
Portugal & 536\\
Portugal & 536\\
South Africa & 77\\
Spain & 229\\
Sweden & 280\\
Switzerland & 536\\
UK & 536\\
USA & 380\\
\bottomrule
\end{tabular}
\caption{Number of weeks in dataset for each country (including 2020 data)}
\label{table:mortWeeks}
\end{table}

\begin{figure}[h]
	\centering
	\includegraphics[width=0.95\linewidth]{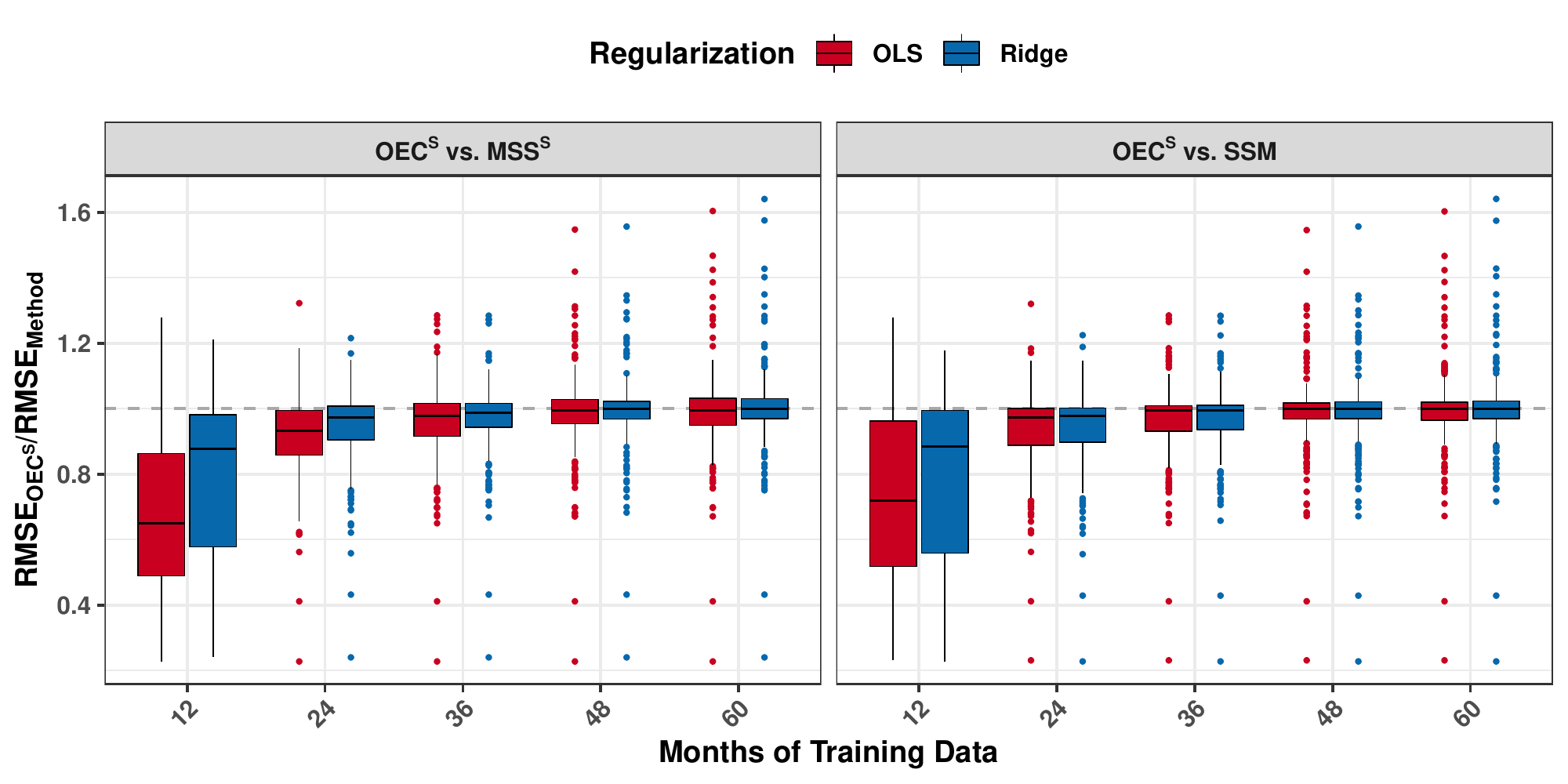}
	
	\caption{$RMSE_{OEC^S} / RMSE_{Method}$. Performance of the OEC$^{\text{S}}$ relative to the SSM and the MSS$^{\text{S}}$ as a function of number of months of training data for target study.}
	\label{suppFig:mortalityTrainingMonths}
\end{figure}

\begin{figure}[h]
	\centering
	\includegraphics[width=0.95\linewidth]{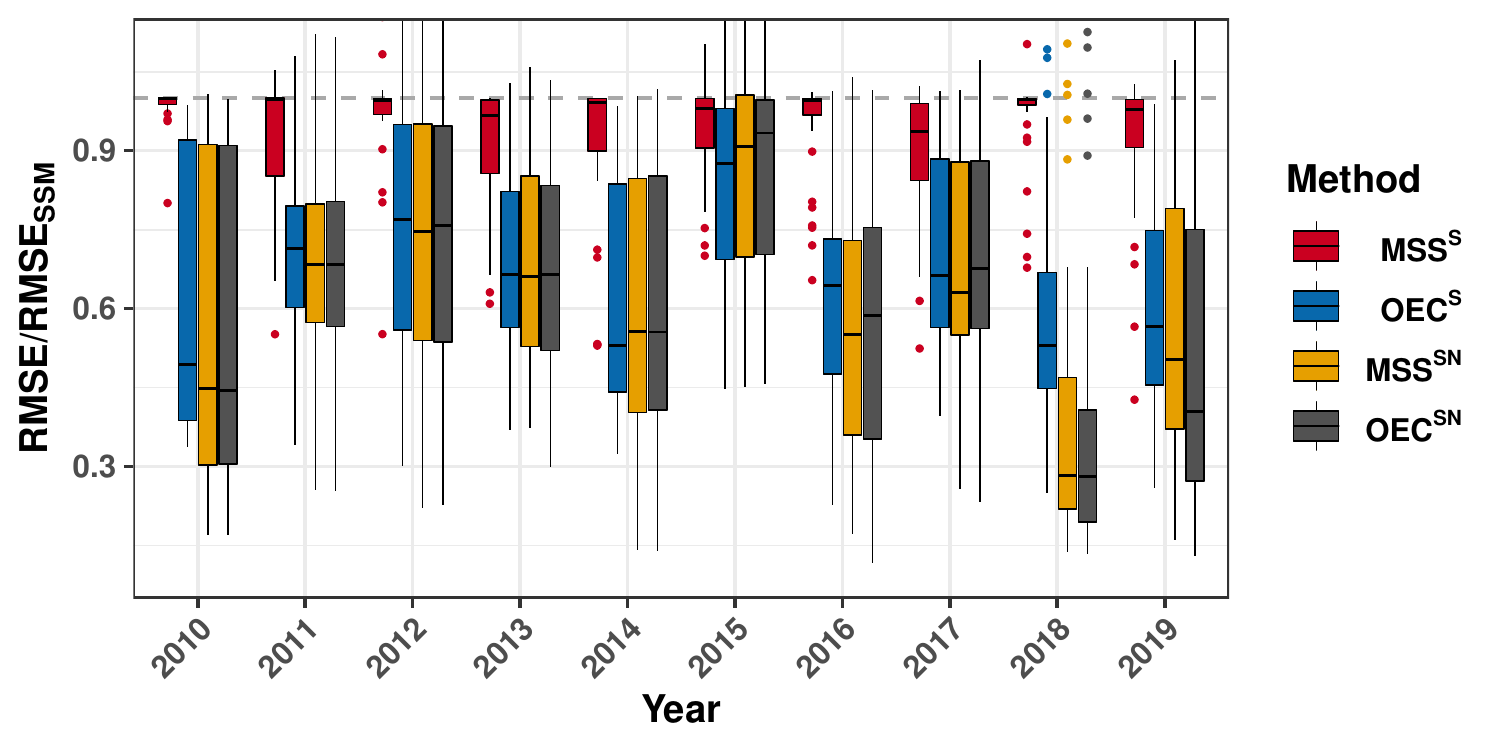}
\caption{$RMSE / RMSE_{SSM}$ Performance of MSS or OEC method relative to a country-specific model (SSM) using an OLS fit.}
	\label{suppFig:mortalityTogether100_OLS}
\end{figure}

\subsection{Comparison with model without linear term for time}

The model with only a seasonal trend and no secular trend (i.e., no linear term for time) is:
\begin{equation}
     Y_{k,t} = \gamma_{k,0} + \sum_{j=1}^2 \left[\gamma_{k,j }\sin \left(\frac{2\pi j t}{52}\right) + \gamma_{k,j+2}\cos \left(\frac{2\pi j t}{52}\right)\right]
    \label{eq:lmSecular}
\end{equation}

\begin{figure}[h]
	\centering
	\includegraphics[width=0.95\linewidth]{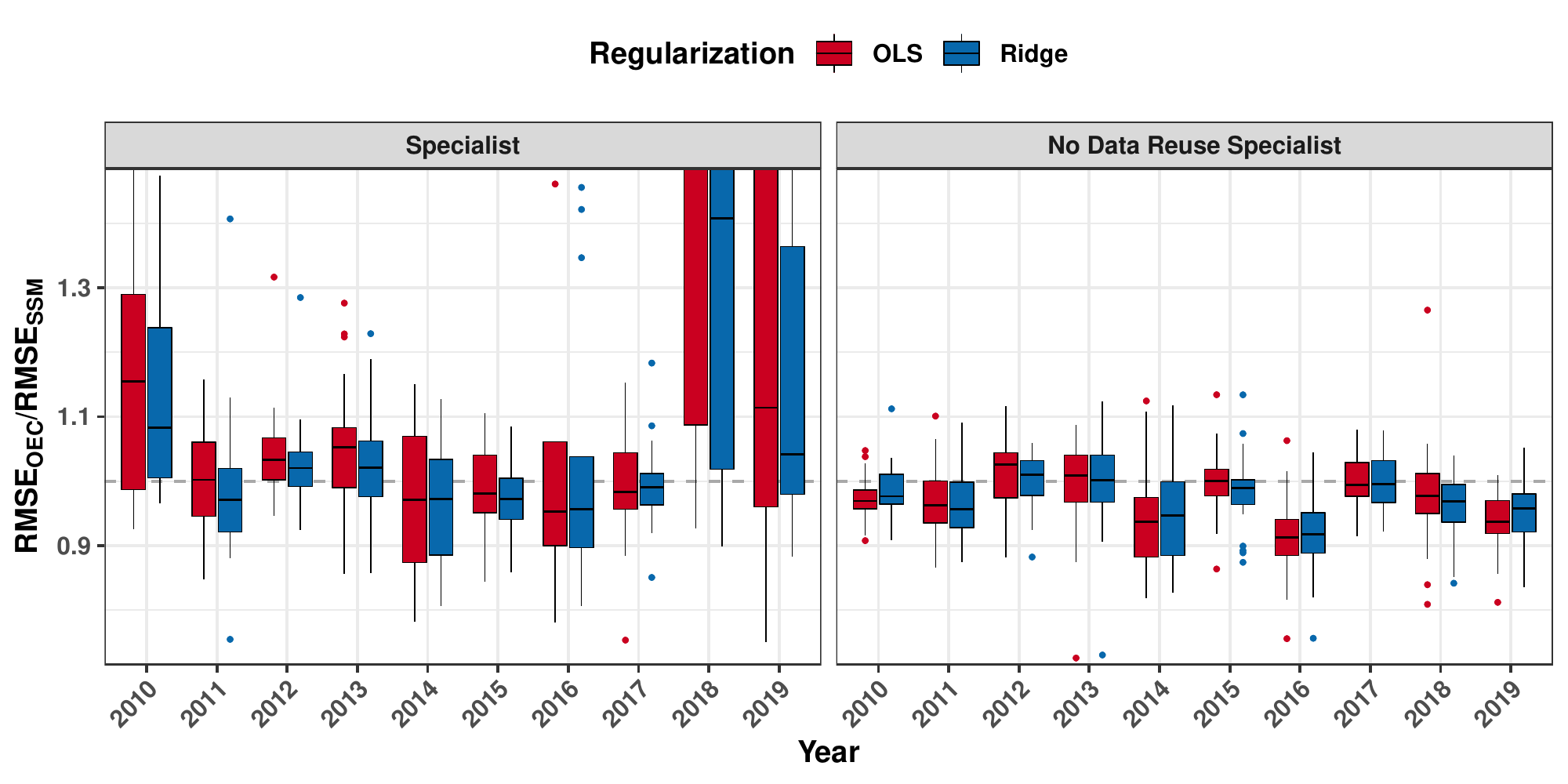}
	\caption{$RMSE_{OEC} / RMSE_{SSM}$ Performance of the OEC method fit with a linear term relative to a study-specific (country-specific) model with no linear term.}
		\label{suppFig:mortalityOECLin_vs_countryNoLin}
\end{figure}

\begin{figure}[h]
	\centering
	\includegraphics[width=0.95\linewidth]{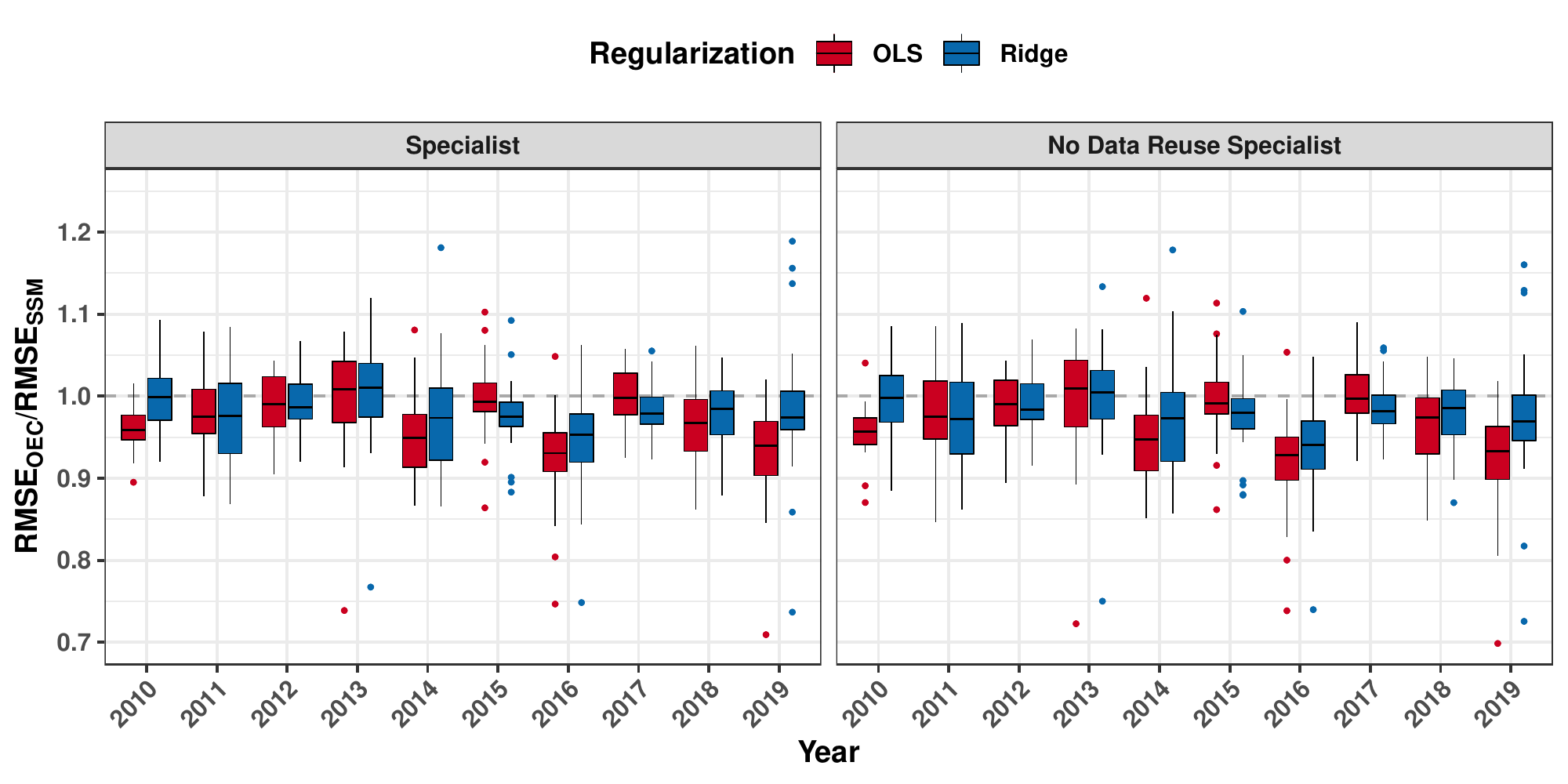}
	\caption{$RMSE_{OEC} / RMSE_{SSM}$ Performance of the OEC method fit with no linear term relative to a study-specific (country-specific) model with no linear term.}
		\label{suppFig:mortalityOECnoLin_vs_countryNoLin}
\end{figure}

\begin{figure}[!htbp]
	\centering
	\includegraphics[width=0.95\linewidth]{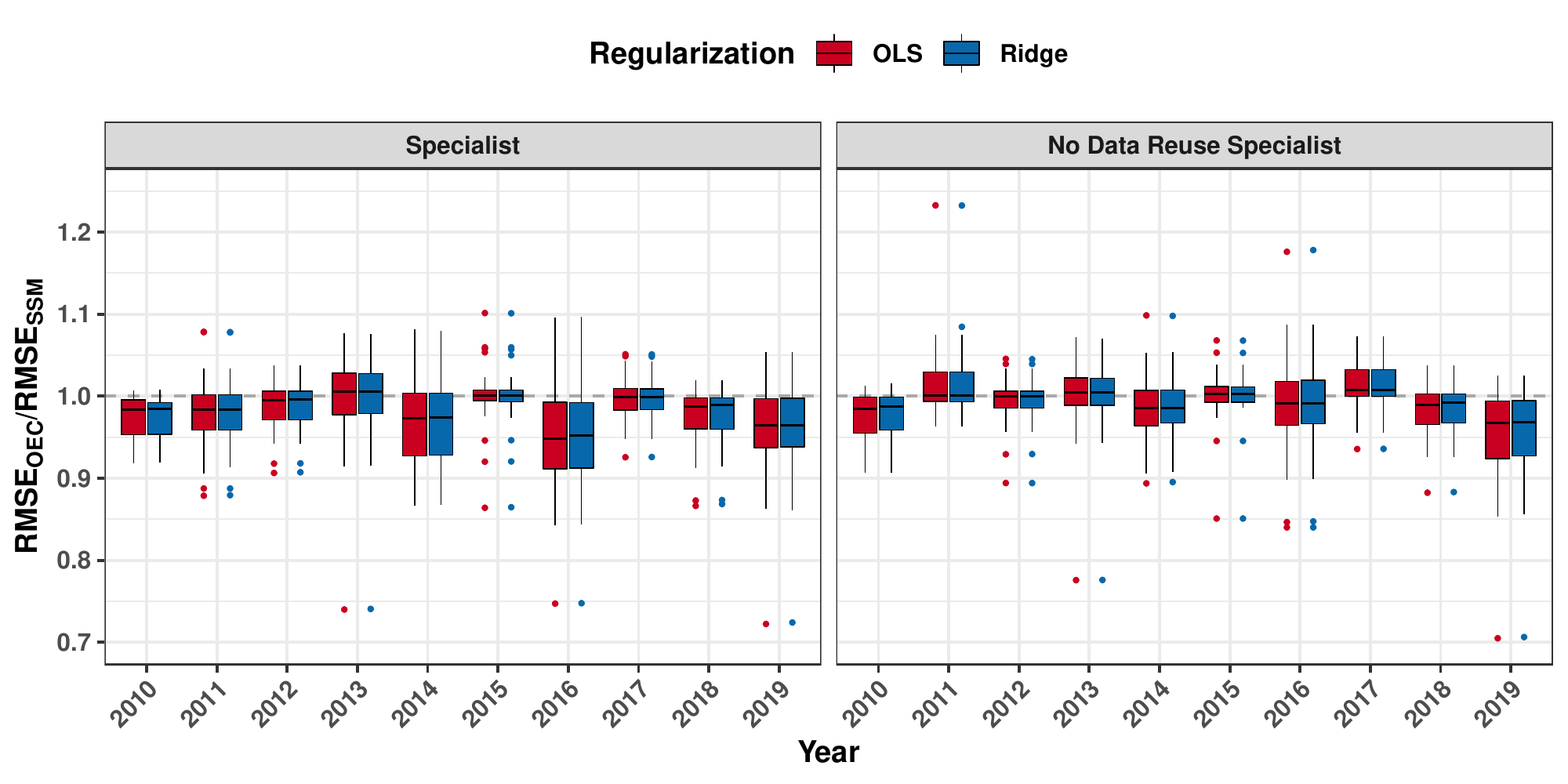}
	\caption{$RMSE_{OEC} / RMSE_{SSM}$. Performance of the OEC method fit with no linear term relative to stacking with no linear term.}
		\label{suppFig:mortalityOECnoLin_vs_stackNoLin}
\end{figure}

\FloatBarrier
\section{Supplementary Material: Data-Driven Simulations}

\begin{figure}[!htbp]
	\centering
	\begin{subfigure}[t]{0.9\textwidth}
		\centering
		\includegraphics[width=1.0\linewidth]{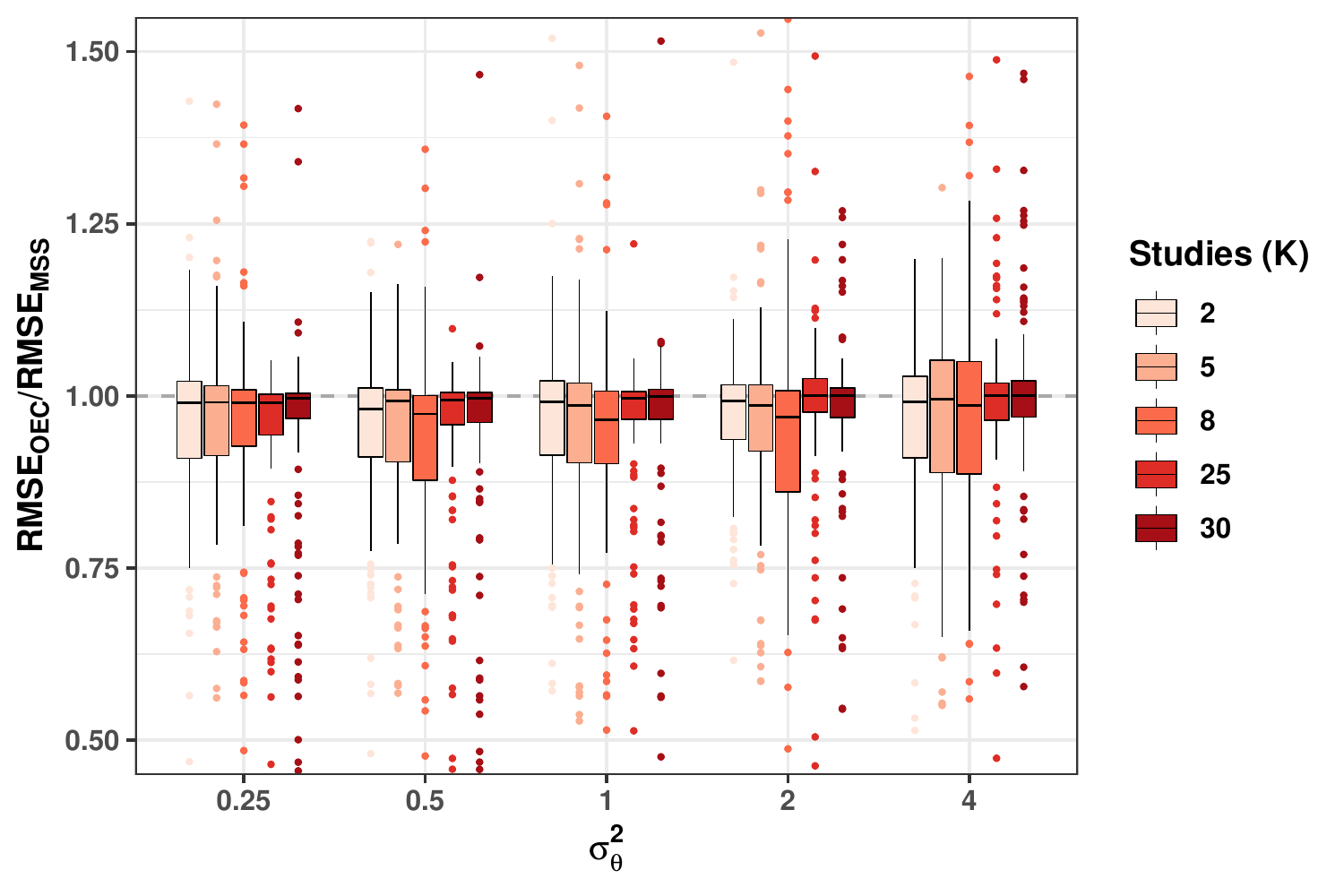}
		\caption{Specialist}
	\end{subfigure}
	\hfill
	\begin{subfigure}[t]{0.9\textwidth}
		\centering
		\includegraphics[width=1.0\linewidth]{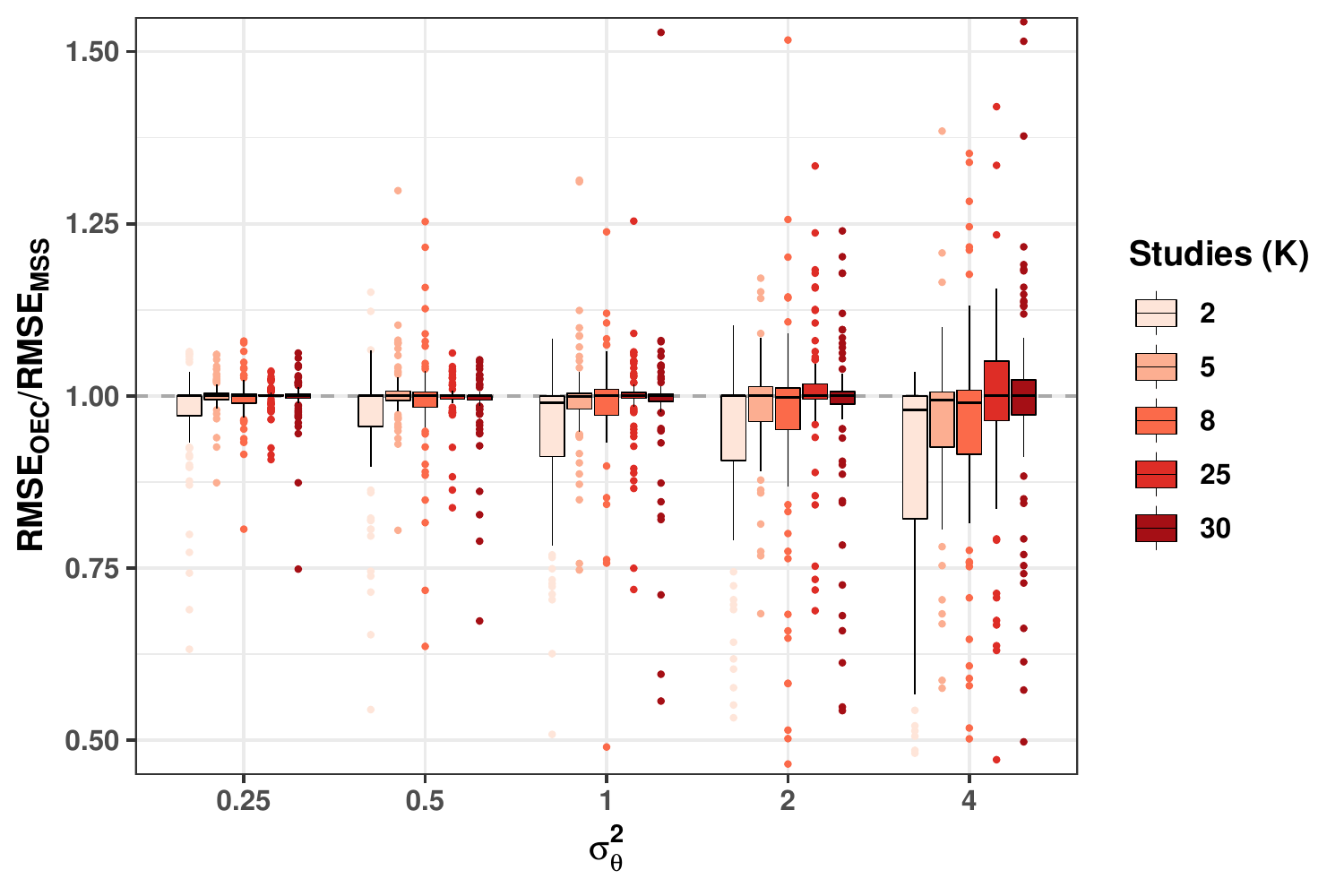}
		\caption{No Data Reuse Specialist}
	\end{subfigure}
	\caption{Performance of the OEC$^\text{S}$ and OEC$^\text{SN}$ compared to their MSS counterparts in data-driven simulations using a Ridge penalty on each study-specific model. Figures are zoomed in to easily visualize differences.}
	\label{fig:mortSims_Ridge}
\end{figure}

\begin{table}[h]
 \centering
\begin{minipage}[t]{1.0\linewidth} 
\centering
\begin{tabular}{r|rrrrr|rrrrr}
\toprule
\toprule
 \multicolumn{1}{c}{\bfseries\footnotesize Parameters} \vline & 
\multicolumn{5}{c}{\bfseries\footnotesize $\mathbf{OEC^{\text{S}}}$ vs. $\mathbf{MSS^{\text{S}}}$} \vline &
    
\multicolumn{5}{c}{\bfseries\footnotesize $\mathbf{OEC^{\text{SN}}}$ vs. $\mathbf{MSS^{\text{SN}}}$}  \\
  &
  \multicolumn{5}{c}{\bfseries\footnotesize $\sigma^2_{\theta}$} \vline  &
   \multicolumn{5}{c}{\bfseries\footnotesize $\sigma^2_{\theta}$}  \\
$K$ & 0.25 & 0.5 & 1.0 & 2.0 & 4.0 & 0.25 & 0.5 & 1.0 & 2.0 & 4.0\\
\midrule
\midrule
2 & 0.97 & 0.95 & 0.96 & 0.98 & 0.96 & 0.98 & 0.96 & 0.93 & 0.92 & 0.88\\
5 & 0.96 & 0.94 & 0.96 & 0.96 & 0.98 & 1.00 & 1.00 & 1.00 & 1.00 & 0.98\\
8 & 0.96 & 0.94 & 0.95 & 0.97 & 1.01 & 1.00 & 0.99 & 0.99 & 0.96 & 0.98\\
25 & 0.93 & 0.94 & 0.96 & 0.98 & 0.99 & 1.00 & 1.00 & 1.00 & 1.01 & 0.98\\
30 & 0.94 & 0.95 & 0.96 & 0.98 & 1.00 & 1.00 & 0.99 & 0.99 & 0.98 & 1.00\\
\bottomrule
\end{tabular}
\end{minipage}
\caption{Average data-driven simulation performance: $RMSE_{OEC} / RMSE_{MSS}$. Columns indicate $\sigma^2_{\theta}$ and rows indicate $K$, number of studies. Monte carlo error was at most 0.003.}
\label{tab:mortSims_Ridge}
\end{table}

\begin{table}[H]
  \centering
\begin{minipage}[t]{0.75\linewidth} \centering
\begin{tabular}{rr|rrrr}
\toprule
\toprule
\multicolumn{2}{c}{\footnotesize\bfseries ~~~Parameters}  \vline  & 
     \multicolumn{1}{c}{\footnotesize\bfseries $\mathbf{OEC^{\text{S}}}$} 
     &
     \multicolumn{1}{c}{\footnotesize\bfseries $\mathbf{MSS^{\text{S}}}$} &
     \multicolumn{1}{c}{\footnotesize\bfseries $\mathbf{OEC^{\text{SN}}}$} 
      &
      \multicolumn{1}{c}{\footnotesize\bfseries $\mathbf{MSS^{\text{SN}}}$ } \\
  \multicolumn{1}{c}{\footnotesize\bfseries $K$} &
   \multicolumn{1}{c}{\footnotesize\bfseries ~~~~~$\sigma^2_{\theta}$}  \vline &
\multicolumn{4}{c}{\footnotesize  ~~~~~vs. Study-Specific Model   ~~~~~} \\
   
\midrule
\midrule
2 & 0.25 & 0.91 & 0.96 & 0.82 & 0.85\\
2 & 0.50 & 0.96 & 0.98 & 0.89 & 0.97\\
2 & 1.00 & 0.96 & 0.98 & 0.96 & 1.05\\
2 & 2.00 & 0.94 & 0.97 & 0.92 & 1.04\\
2 & 4.00 & 0.96 & 0.98 & 1.06 & 1.24\\
\addlinespace
5 & 0.25 & 0.89 & 0.95 & 0.80 & 0.79\\
5 & 0.50 & 0.89 & 0.96 & 0.80 & 0.80\\
5 & 1.00 & 0.89 & 0.95 & 0.83 & 0.83\\
5 & 2.00 & 0.92 & 0.95 & 0.87 & 0.87\\
5 & 4.00 & 0.94 & 0.95 & 0.93 & 0.95\\
\addlinespace
8 & 0.25 & 0.86 & 0.94 & 0.80 & 0.80\\
8 & 0.50 & 0.86 & 0.93 & 0.77 & 0.77\\
8 & 1.00 & 0.88 & 0.94 & 0.81 & 0.82\\
8 & 2.00 & 0.87 & 0.93 & 0.83 & 0.87\\
8 & 4.00 & 0.90 & 0.92 & 0.89 & 0.93\\
\addlinespace
25 & 0.25 & 0.84 & 0.90 & 0.80 & 0.80\\
25 & 0.50 & 0.80 & 0.89 & 0.77 & 0.77\\
25 & 1.00 & 0.80 & 0.86 & 0.78 & 0.78\\
25 & 2.00 & 0.80 & 0.85 & 0.79 & 0.81\\
25 & 4.00 & 0.84 & 0.85 & 0.83 & 0.85\\
\addlinespace
30 & 0.25 & 0.83 & 0.92 & 0.81 & 0.81\\
30 & 0.50 & 0.85 & 0.91 & 0.83 & 0.83\\
30 & 1.00 & 0.85 & 0.90 & 0.84 & 0.84\\
30 & 2.00 & 0.86 & 0.88 & 0.85 & 0.88\\
30 & 4.00 & 0.90 & 0.89 & 0.89 & 0.91\\
\bottomrule
\end{tabular}
\end{minipage}
\caption{Simulation performance without regularization at different $K$ and varying degrees of $\sigma^2_{\theta}$. Performance of each method is relative to a country-specific model (i.e., a model fit only on the target study, $RMSE_{OEC^{\text{S}}} / RMSE_{SSM}$) Monte Carlo error was at most 0.007. 
}
\label{table:mortSims2_zero_full}
\end{table}

\begin{table}[H]
  \centering
\begin{minipage}[t]{0.75\linewidth} \centering
\begin{tabular}{rr|rrrr}
\toprule
\toprule
\multicolumn{2}{c}{\bfseries\footnotesize ~~~Parameters}  \vline  & 
     \multicolumn{1}{c}{\footnotesize\bfseries $\mathbf{OEC^{S}}$} 
     &
     \multicolumn{1}{c}{\footnotesize\bfseries $\mathbf{MSS^{\text{S}}}$} &
     \multicolumn{1}{c}{\footnotesize\bfseries $\mathbf{OEC^{\text{SN}}}$} 
      &
      \multicolumn{1}{c}{\footnotesize\bfseries $\mathbf{MSS^{\text{SN}}}$} \\
  \multicolumn{1}{c}{\footnotesize\bfseries $K$} &
   \multicolumn{1}{c}{\footnotesize\bfseries ~~~~~$\sigma^2_{\theta}$}  \vline &
\multicolumn{4}{c}{\footnotesize  ~~~~~vs. Study-Specific Model   ~~~~~} \\
   
\midrule
\midrule
2 & 0.25 & 0.91 & 0.94 & 0.81 & 0.84\\
2 & 0.50 & 0.91 & 0.96 & 0.79 & 0.83\\
2 & 1.00 & 0.94 & 0.98 & 0.90 & 0.99\\
2 & 2.00 & 0.98 & 0.99 & 1.03 & 1.13\\
2 & 4.00 & 0.96 & 1.00 & 1.02 & 1.23\\
\addlinespace
5 & 0.25 & 0.91 & 0.95 & 0.80 & 0.80\\
5 & 0.50 & 0.90 & 0.96 & 0.82 & 0.82\\
5 & 1.00 & 0.91 & 0.96 & 0.84 & 0.84\\
5 & 2.00 & 0.92 & 0.95 & 0.88 & 0.88\\
5 & 4.00 & 0.94 & 0.97 & 0.93 & 0.96\\
\addlinespace
8 & 0.25 & 0.85 & 0.90 & 0.77 & 0.77\\
8 & 0.50 & 0.85 & 0.91 & 0.79 & 0.80\\
8 & 1.00 & 0.86 & 0.92 & 0.82 & 0.83\\
8 & 2.00 & 0.90 & 0.93 & 0.86 & 0.91\\
8 & 4.00 & 0.95 & 0.94 & 0.93 & 0.98\\
\addlinespace
25 & 0.25 & 0.81 & 0.88 & 0.79 & 0.80\\
25 & 0.50 & 0.82 & 0.88 & 0.81 & 0.81\\
25 & 1.00 & 0.84 & 0.88 & 0.83 & 0.83\\
25 & 2.00 & 0.87 & 0.89 & 0.87 & 0.86\\
25 & 4.00 & 0.89 & 0.89 & 0.89 & 0.92\\
\addlinespace
30 & 0.25 & 0.80 & 0.86 & 0.79 & 0.79\\
30 & 0.50 & 0.81 & 0.87 & 0.80 & 0.81\\
30 & 1.00 & 0.83 & 0.86 & 0.83 & 0.84\\
30 & 2.00 & 0.85 & 0.87 & 0.85 & 0.87\\
30 & 4.00 & 0.90 & 0.89 & 0.90 & 0.90\\
\bottomrule
\end{tabular}
\end{minipage}
\caption{Simulation performance with regularization at different $K$ and varying degrees of $\sigma^2_{\theta}$. Performance of each method is relative to a country-specific model (i.e., a model fit only on the target study): $RMSE_{OEC^{\text{S}}} / RMSE_{SSM}$. Monte Carlo error was at most 0.007. 
}
\label{table:mortSims2_ridge_full}
\end{table}

\section{Supplementary Material: General Simulations}\label{sims23_supplement}


\begin{figure}[H]
	\centering
	\begin{subfigure}[t]{1\textwidth}
		\centering
		\includegraphics[width=1.0\linewidth]{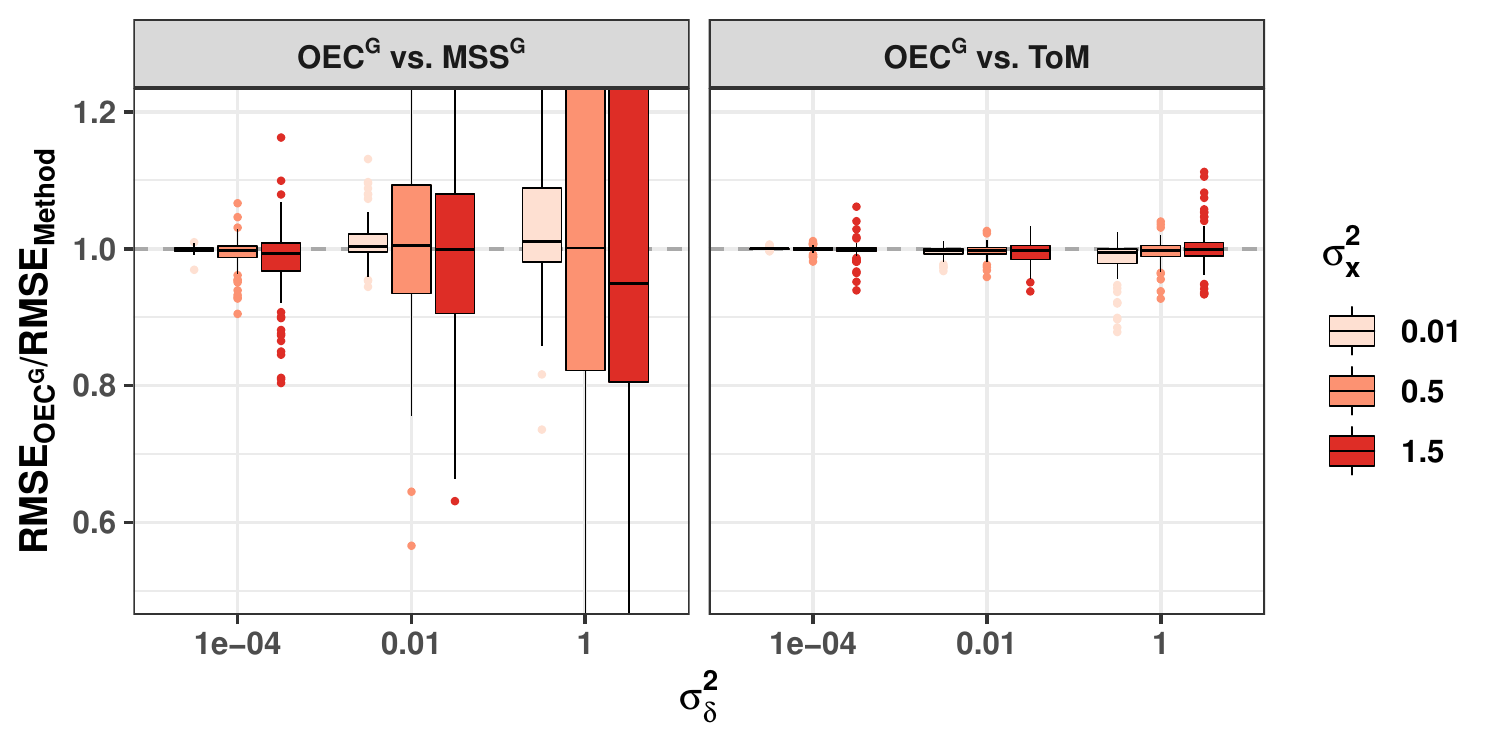}
		\caption{No clusters}
	\end{subfigure}
	\hfill
	\begin{subfigure}[t]{1\textwidth}
		\centering
		\includegraphics[width=1.0\linewidth]{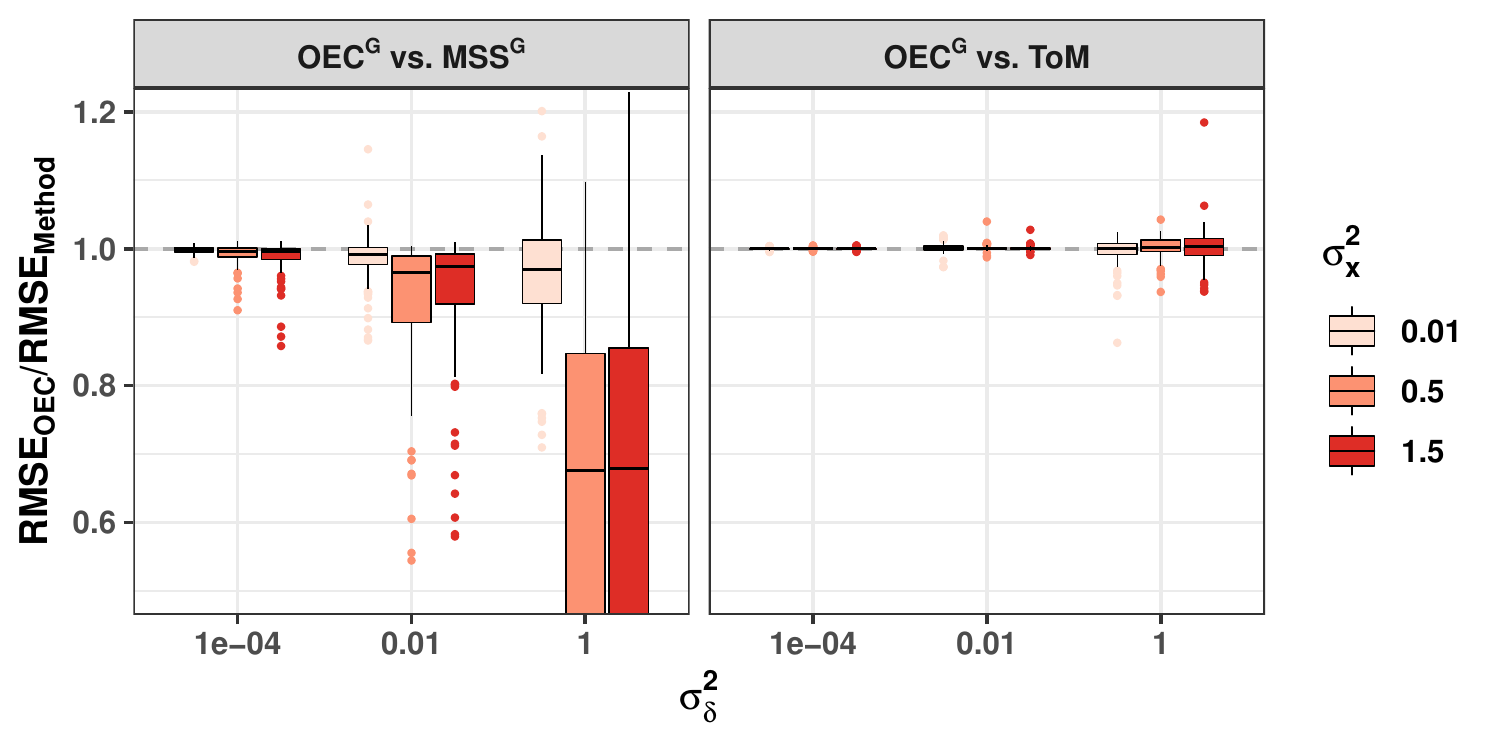}
		\caption{Clusters}
	\end{subfigure}
	\caption{Performance of the OEC$^{\text{G}}$ compared to the MSS$^{\text{G}}$ and the ToM algorithm using no study-specific Ridge penalties. Figures are zoomed in to easily visualize differences.}
	\label{fig:sims23_generalist_zero}
\end{figure}

\begin{figure}[H]
	\centering
	\begin{subfigure}[t]{1\textwidth}
		\centering
		\includegraphics[width=1.0\linewidth]{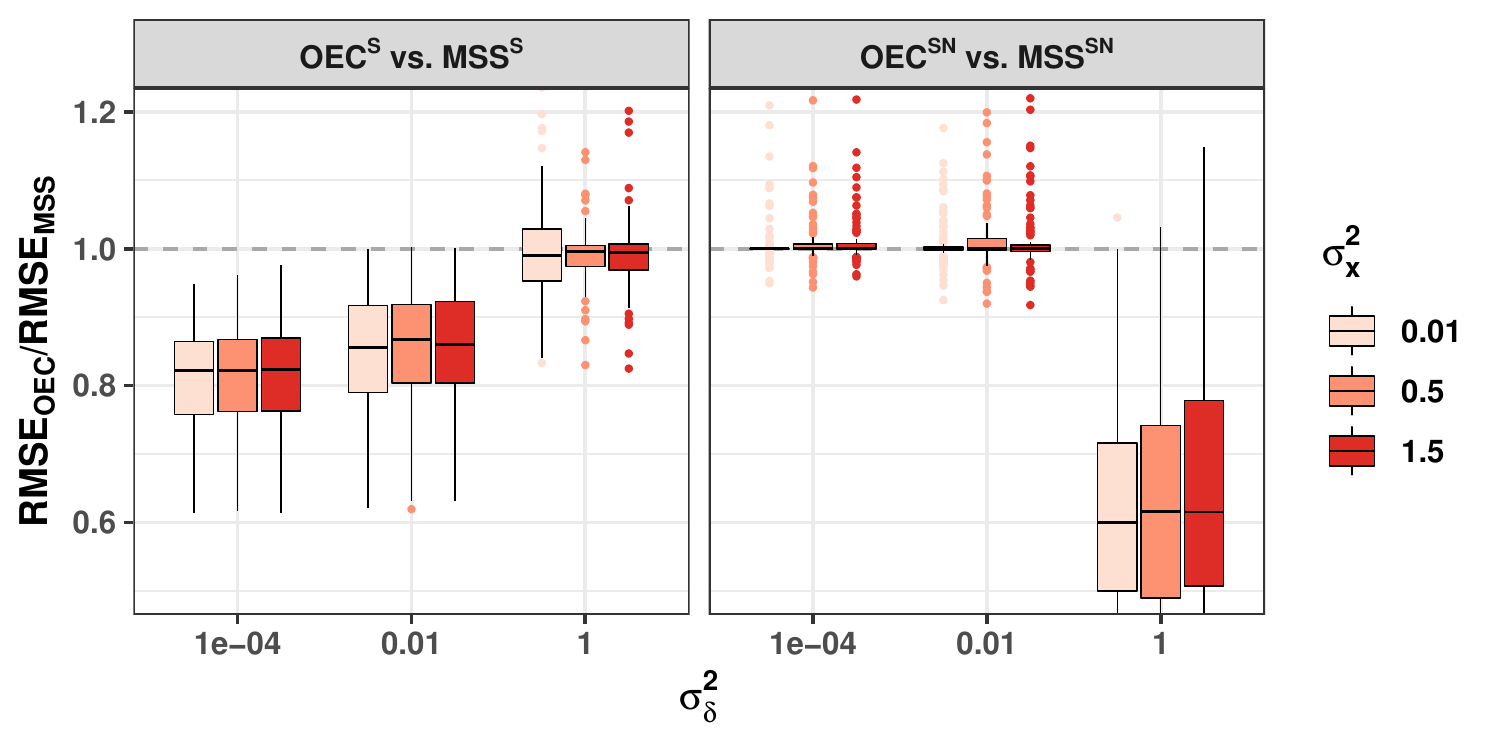}
		\caption{No clusters}
	\end{subfigure}
	\hfill
	\begin{subfigure}[t]{1\textwidth}
		\centering
		\includegraphics[width=1.0\linewidth]{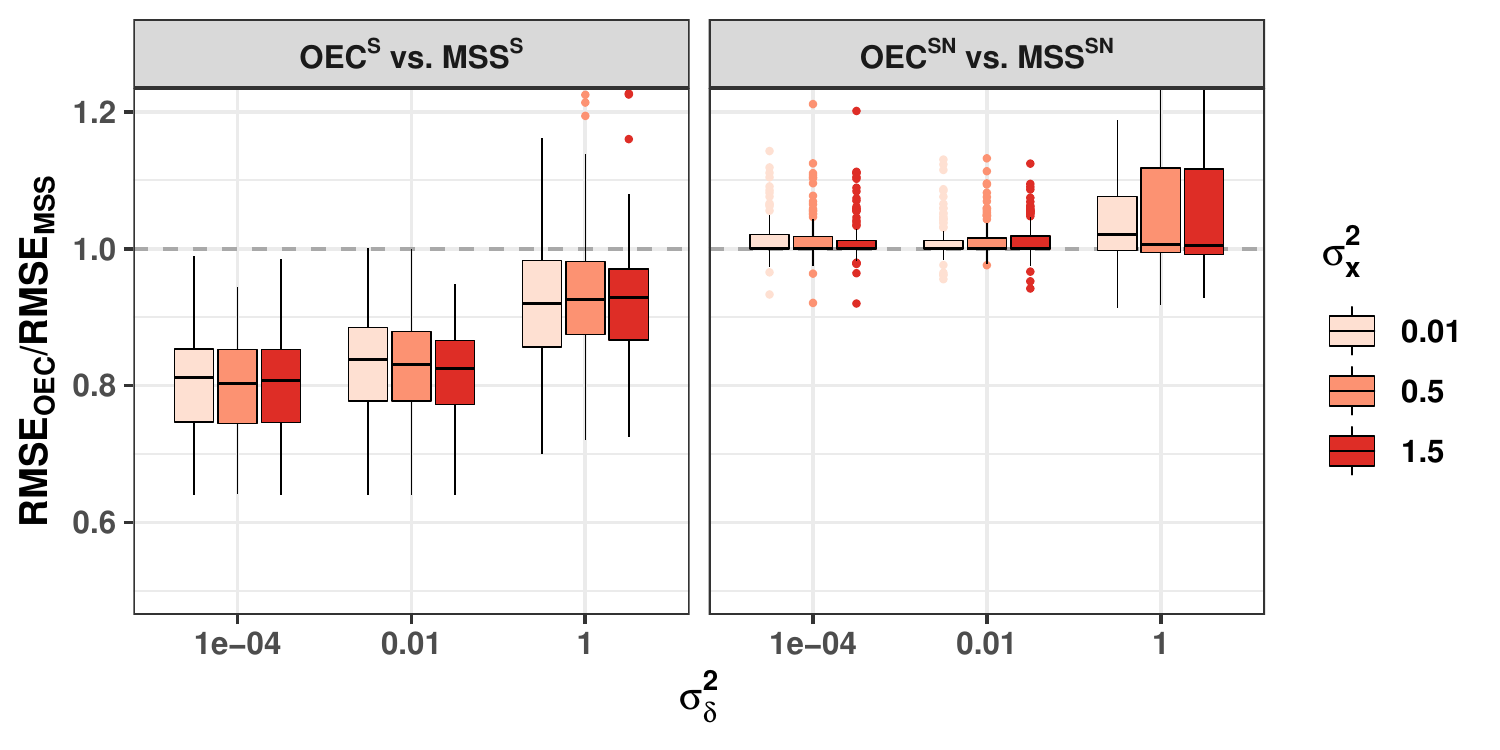}
		\caption{Clusters}
	\end{subfigure}
	\caption{Performance of the OEC$^{\text{S}}$ and OEC$^{\text{SN}}$ compared to their MSS counterparts using no study-specific Ridge penalties. Figures are zoomed in to easily visualize differences.}
	\label{fig:sims23_specialist_zero}
\end{figure}
\begin{figure}[H]
	\centering
	\begin{subfigure}[t]{1\textwidth}
		\centering
		\includegraphics[width=1.0\linewidth]{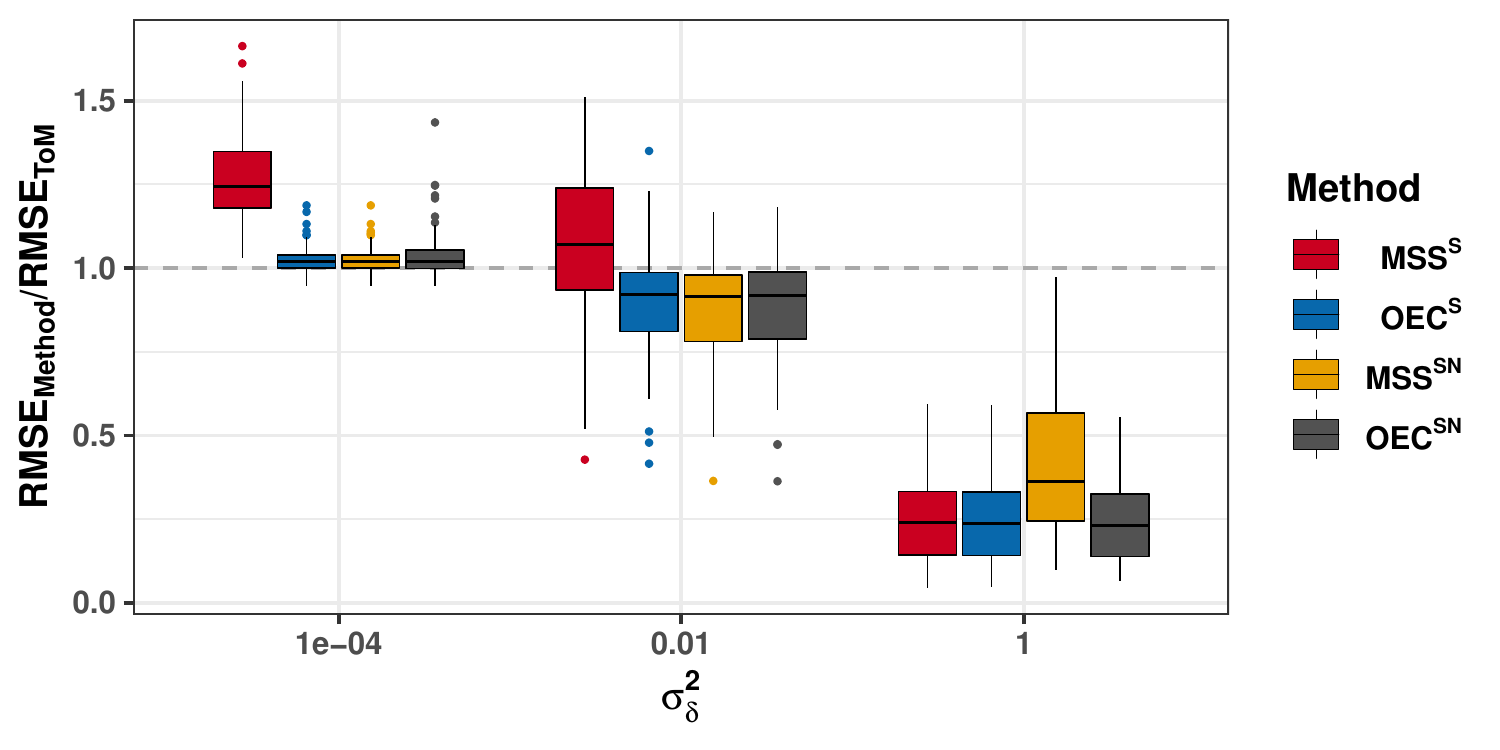}
		\caption{No clusters}
	\end{subfigure}
	\hfill
	\begin{subfigure}[t]{1\textwidth}
		\centering
		\includegraphics[width=1.0\linewidth]{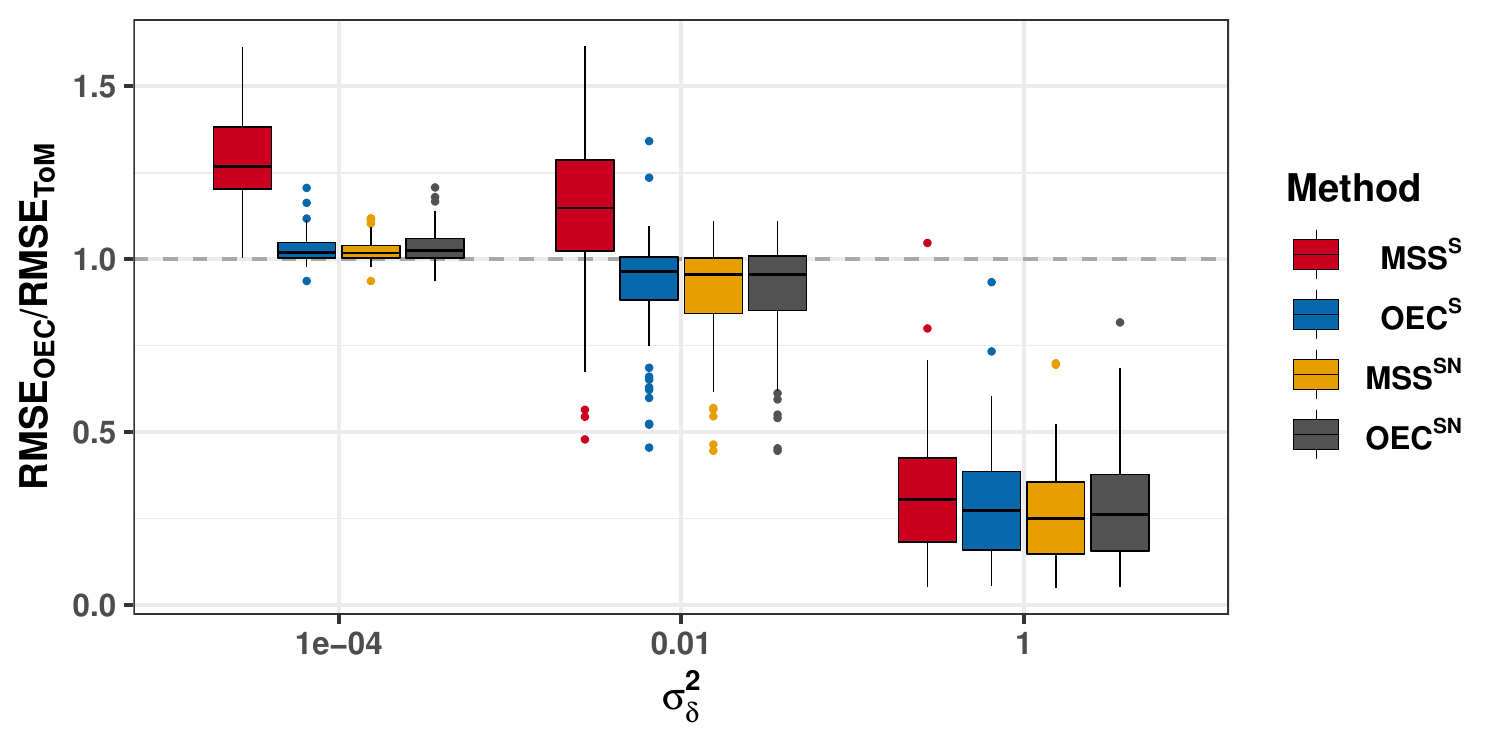}
		\caption{Clusters}
	\end{subfigure}
	\caption{Performance of all methods together ($\sigma^2_X = 0.01$) and the ToM algorithm without study-specific Ridge penalties. Figures are zoomed in to easily visualize differences.}
	\label{fig:sims23_generalistTogether_zero}
\end{figure}

\begin{figure}[H]
	\centering
	\begin{subfigure}[t]{1\textwidth}
		\centering
		\includegraphics[width=1.0\linewidth]{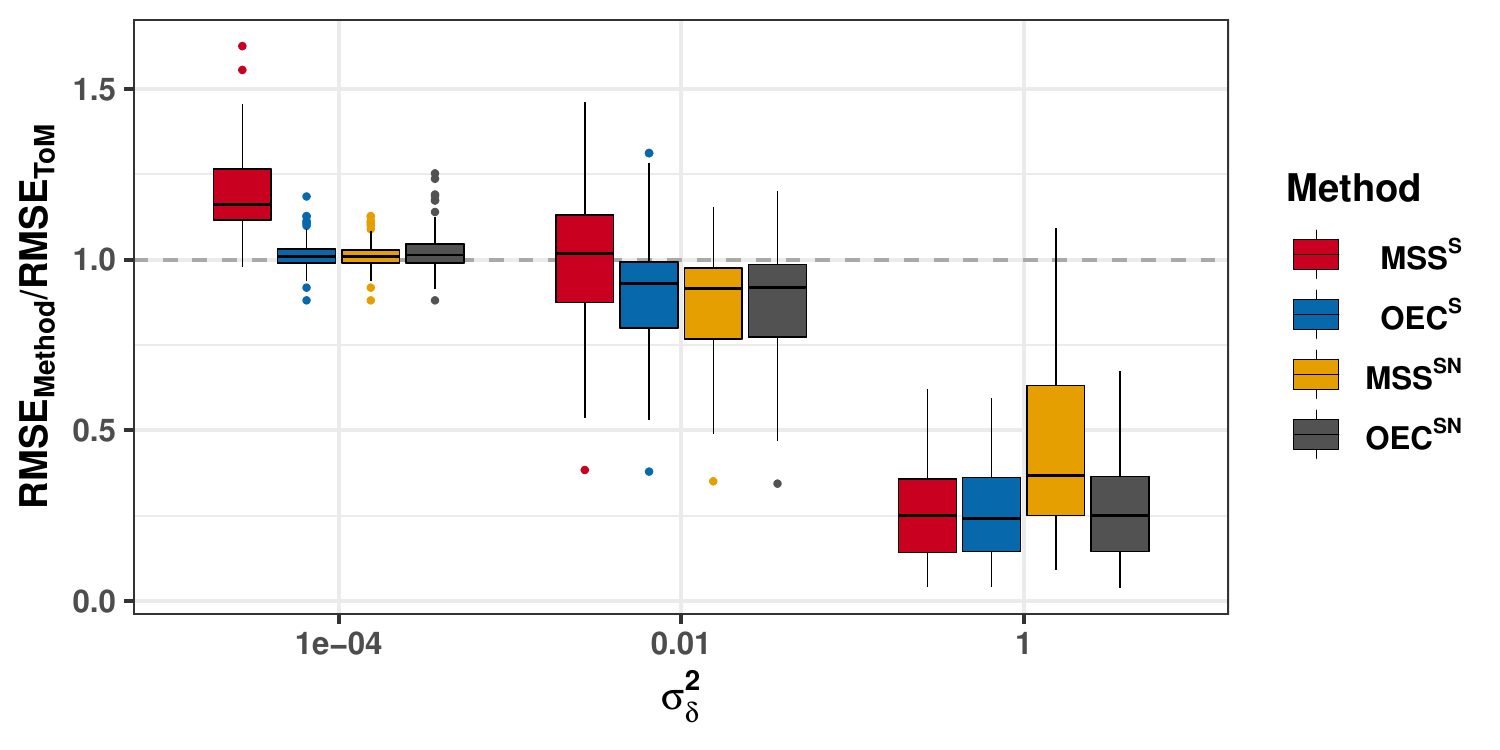}
		\caption{No clusters}
	\end{subfigure}
	\hfill
	\begin{subfigure}[t]{1\textwidth}
		\centering
		\includegraphics[width=1.0\linewidth]{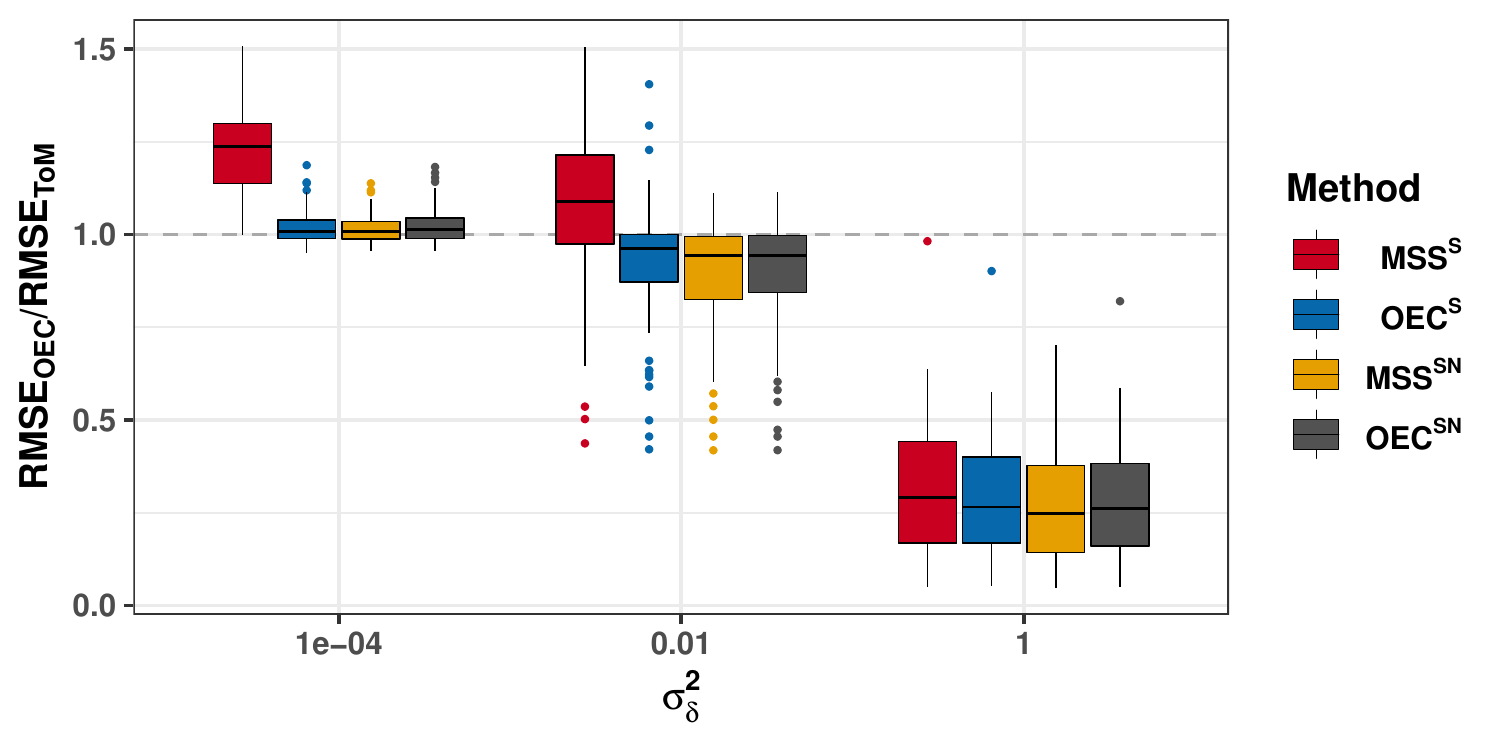}
		\caption{Clusters}
	\end{subfigure}
	\caption{Performance of all methods together ($\sigma^2_X = 0.01$) and the ToM algorithm with study-specific Ridge penalties. Figures are zoomed in to easily visualize differences.}
	\label{fig:sims23_generalistTogether_cvCF}
\end{figure}
\FloatBarrier
\begin{table}[H]
   \centering
\begin{minipage}[t]{1\linewidth}\centering
\begin{tabular}{rrr|rr|rr|r}
\toprule
\toprule
\multicolumn{3}{c}{\footnotesize\bfseries Parameters} \vline &

    \multicolumn{1}{c}{\footnotesize\bfseries OEC$^{\text{G}}$} &
     \multicolumn{1}{c}{\footnotesize\bfseries ToM} 
\vline &
     \multicolumn{1}{c}{\footnotesize\bfseries OEC$^{\text{S}}$} &      \multicolumn{1}{c}{\footnotesize\bfseries
     SSM} 
     \vline &
\multicolumn{1}{c}{\footnotesize\bfseries  OEC$^{\text{SN}}$} \\
 \multicolumn{1}{c}{ $C$} &
  \multicolumn{1}{c}{\footnotesize\bfseries $\sigma^2_X$} &
   \multicolumn{1}{c}{\footnotesize\bfseries $\sigma^2_{\delta}$} \vline  &
    \multicolumn{2}{c}{\footnotesize vs. MSS$^{\text{G}}$} \vline &  \multicolumn{2}{c}{\footnotesize vs. MSS$^{\text{S}}$} \vline  &

      \multicolumn{1}{c}{\footnotesize vs. MSS$^{\text{SN}}$ } \\
\midrule
\midrule
3 & 0.01 & 0.00 & 1.00 & 1.00 & 0.80 & 1.00 & 1.01\\
3 & 0.50 & 0.00 & 0.99 & 0.99 & 0.80 & 1.00 & 1.01\\
3 & 1.50 & 0.00 & 0.99 & 0.99 & 0.80 & 1.00 & 1.01\\
3 & 0.01 & 0.01 & 0.99 & 0.99 & 0.83 & 1.00 & 1.01\\
3 & 0.50 & 0.01 & 0.92 & 0.92 & 0.82 & 1.00 & 1.01\\
3 & 1.50 & 0.01 & 0.93 & 0.93 & 0.82 & 1.00 & 1.01\\
3 & 0.01 & 1.00 & 1.02 & 1.03 & 0.91 & 1.00 & 1.05\\
3 & 0.50 & 1.00 & 0.67 & 0.67 & 0.94 & 1.00 & 1.08\\
3 & 1.50 & 1.00 & 0.66 & 0.65 & 0.98 & 1.00 & 1.15\\
\addlinespace
6 & 0.01 & 0.00 & 1.00 & 1.00 & 0.81 & 1.00 & 1.01\\
6 & 0.50 & 0.00 & 0.99 & 0.99 & 0.81 & 1.00 & 1.01\\
6 & 1.50 & 0.00 & 0.98 & 0.98 & 0.82 & 1.00 & 1.01\\
6 & 0.01 & 0.01 & 1.01 & 1.01 & 0.85 & 1.00 & 1.01\\
6 & 0.50 & 0.01 & 1.02 & 1.03 & 0.86 & 1.00 & 1.01\\
6 & 1.50 & 0.01 & 1.02 & 1.02 & 0.85 & 1.00 & 1.01\\
6 & 0.01 & 1.00 & 1.09 & 1.11 & 1.00 & 1.00 & 0.62\\
6 & 0.50 & 1.00 & 1.12 & 1.15 & 1.01 & 1.00 & 0.63\\
6 & 1.50 & 1.00 & 1.11 & 1.13 & 1.06 & 1.00 & 0.66\\
\bottomrule
\end{tabular}
\end{minipage}
\caption{Simulation performance with ($C = 3$) and without ($C = 6$)  clustering at varying degrees of covariate-shift ($\sigma^2_{X}$) and variance of random effects ($\sigma^2_{\delta}$). No Ridge penalty was included. Each column indicates the performance of a method relative to a corresponding multi-study stacking method (e.g., OEC$^{\text{G}}$ vs. MSS$^{\text{G}}$ indicates $RMSE_{OEC^{\text{G}}} / RMSE_{MSS^{\text{G}}}$. Monte carlo error was at most 0.0024.}  
\label{table:generalSims_full_zero}
\end{table}
\begin{table}[H]
  \centering
\begin{minipage}[t]{1\linewidth} \centering
\begin{tabular}{rrr|rr|rrrr}
\toprule
\toprule
\multicolumn{3}{c}{\footnotesize\bfseries ~~~Parameters} \vline &
\multicolumn{1}{c}{\footnotesize\bfseries $\mathbf{OEC^{\text{G}}}$} &
    \multicolumn{1}{c}{\footnotesize\bfseries $\mathbf{MSS^{\text{G}}}$} \vline   &
     \multicolumn{1}{c}{\footnotesize\bfseries $\mathbf{OEC^{\text{S}}}$} 
     &
     \multicolumn{1}{c}{\footnotesize\bfseries $\mathbf{MSS^{\text{S}}}$} &
     \multicolumn{1}{c}{\footnotesize\bfseries
     $\mathbf{OEC^{\text{SN}}}$} 
      &
      \multicolumn{1}{c}{\footnotesize\bfseries $\mathbf{MSS^{\text{SN}}}$ } \\
 \multicolumn{1}{c}{ $C$} &
  \multicolumn{1}{c}{\footnotesize\bfseries $\sigma^2_X$} &
   \multicolumn{1}{c}{\footnotesize\bfseries $\sigma^2_{\delta}$}  \vline &
      \multicolumn{2}{c}{\footnotesize vs. ToM~~~~~}
\vline &
\multicolumn{4}{c}{\footnotesize  ~~~~~vs. Study-Specific Model   ~~~~~} \\

\midrule
\midrule
3 & 0.01 & 0.00 & 1.00 & 1.00 & 0.80 & 1.00 & 0.81 & 0.80\\
3 & 0.50 & 0.00 & 1.00 & 1.01 & 0.80 & 1.00 & 0.81 & 0.80\\
3 & 1.50 & 0.00 & 1.00 & 1.01 & 0.80 & 1.00 & 0.81 & 0.80\\
3 & 0.01 & 0.01 & 1.00 & 1.01 & 0.83 & 1.00 & 0.82 & 0.81\\
3 & 0.50 & 0.01 & 1.00 & 1.10 & 0.82 & 1.00 & 0.82 & 0.81\\
3 & 1.50 & 0.01 & 1.00 & 1.09 & 0.82 & 1.00 & 0.82 & 0.81\\
3 & 0.01 & 1.00 & 1.00 & 1.02 & 0.91 & 1.00 & 0.87 & 0.83\\
3 & 0.50 & 1.00 & 1.00 & 1.76 & 0.94 & 1.00 & 0.90 & 0.83\\
3 & 1.50 & 1.00 & 1.02 & 1.89 & 0.98 & 1.00 & 0.96 & 0.83\\
\addlinespace
6 & 0.01 & 0.00 & 1.00 & 1.00 & 0.81 & 1.00 & 0.82 & 0.81\\
6 & 0.50 & 0.00 & 1.00 & 1.01 & 0.81 & 1.00 & 0.82 & 0.81\\
6 & 1.50 & 0.00 & 1.00 & 1.02 & 0.82 & 1.00 & 0.82 & 0.81\\
6 & 0.01 & 0.01 & 1.00 & 0.99 & 0.85 & 1.00 & 0.84 & 0.83\\
6 & 0.50 & 0.01 & 1.00 & 1.00 & 0.86 & 1.00 & 0.84 & 0.83\\
6 & 1.50 & 0.01 & 0.99 & 1.02 & 0.85 & 1.00 & 0.84 & 0.83\\
6 & 0.01 & 1.00 & 0.98 & 0.94 & 1.00 & 1.00 & 1.00 & 1.72\\
6 & 0.50 & 1.00 & 0.99 & 1.06 & 1.01 & 1.00 & 1.01 & 1.72\\
6 & 1.50 & 1.00 & 1.00 & 1.12 & 1.06 & 1.00 & 1.07 & 1.72\\
\bottomrule
\end{tabular}
\end{minipage}
\caption{Simulation performance with ($C = 3$) and without ($C = 6$)  clustering at varying degrees of covariate-shift ($\sigma^2_{X}$) and variance of random effects ($\sigma^2_{\delta}$). No Ridge penalty was included. Each section indicates the performance of the method (e.g., OEC$^{\text{G}}$) relative to a baseline of the ToM algorithm or a study-specific model (e.g., $RMSE_{OEC^{\text{G}}} / RMSE_{ToM}$). Monte carlo error was at most 0.0033. 
}
\label{table:generalSims_v2_supplement_zero}
\end{table}
\FloatBarrier
\begin{table}[H]
   \centering
\begin{minipage}[t]{1\linewidth}\centering
\begin{tabular}{rrr|rr|rr|r}
\toprule
\toprule
\multicolumn{3}{c}{\footnotesize\bfseries Parameters} \vline &

    \multicolumn{1}{c}{\footnotesize\bfseries OEC$^{\text{G}}$} &
     \multicolumn{1}{c}{\footnotesize\bfseries ToM} 
\vline &
     \multicolumn{1}{c}{\footnotesize\bfseries OEC$^{\text{S}}$} &      \multicolumn{1}{c}{\footnotesize\bfseries
     SSM} 
     \vline &
\multicolumn{1}{c}{\footnotesize\bfseries  OEC$^{\text{SN}}$} \\
 \multicolumn{1}{c}{ $C$} &
  \multicolumn{1}{c}{\footnotesize\bfseries $\sigma^2_X$} &
   \multicolumn{1}{c}{\footnotesize\bfseries $\sigma^2_{\delta}$} \vline  &
    \multicolumn{2}{c}{\footnotesize vs. MSS$^{\text{G}}$} \vline &  \multicolumn{2}{c}{\footnotesize vs. MSS$^{\text{S}}$} \vline  &

      \multicolumn{1}{c}{\footnotesize vs. MSS$^{\text{SN}}$ } \\
\midrule
\midrule
3 & 0.01 & 0.00 & 1.00 & 1.01 & 0.84 & 1.01 & 1.01\\
3 & 0.50 & 0.00 & 0.99 & 1.00 & 0.84 & 1.01 & 1.01\\
3 & 1.50 & 0.00 & 0.98 & 1.02 & 0.85 & 1.01 & 1.02\\
3 & 0.01 & 0.01 & 0.99 & 1.00 & 0.88 & 1.01 & 1.01\\
3 & 0.50 & 0.01 & 0.91 & 0.93 & 0.87 & 1.01 & 1.01\\
3 & 1.50 & 0.01 & 0.92 & 0.95 & 0.86 & 1.01 & 1.02\\
3 & 0.01 & 1.00 & 1.03 & 1.04 & 0.95 & 1.01 & 1.05\\
3 & 0.50 & 1.00 & 0.67 & 0.69 & 0.97 & 1.01 & 1.07\\
3 & 1.50 & 1.00 & 0.66 & 0.66 & 1.02 & 1.01 & 1.16\\
\addlinespace
6 & 0.01 & 0.00 & 1.00 & 1.01 & 0.86 & 1.03 & 1.01\\
6 & 0.50 & 0.00 & 0.99 & 1.03 & 0.86 & 1.03 & 1.01\\
6 & 1.50 & 0.00 & 0.97 & 1.15 & 0.86 & 1.03 & 1.01\\
6 & 0.01 & 0.01 & 1.01 & 1.01 & 0.91 & 1.02 & 1.01\\
6 & 0.50 & 0.01 & 1.02 & 1.02 & 0.91 & 1.02 & 1.01\\
6 & 1.50 & 0.01 & 1.00 & 1.07 & 0.91 & 1.02 & 1.02\\
6 & 0.01 & 1.00 & 1.09 & 1.03 & 1.02 & 1.01 & 0.63\\
6 & 0.50 & 1.00 & 1.14 & 1.02 & 1.02 & 1.01 & 0.63\\
6 & 1.50 & 1.00 & 1.16 & 1.05 & 1.03 & 1.01 & 0.65\\
\bottomrule
\end{tabular}
\end{minipage}
\caption{Simulation performance with ($C = 3$) and without ($C = 6$)  clustering at varying degrees of covariate-shift ($\sigma^2_{X}$) and variance of random effects ($\sigma^2_{\delta}$). A Ridge penalty was included. Each column indicates the performance of a method relative to a corresponding multi-study stacking method (e.g., OEC$^{\text{G}}$ vs. MSS$^{\text{G}}$ indicates $RMSE_{OEC^{\text{G}}} / RMSE_{MSS^{\text{G}}}$. Monte carlo error was at most 0.003.}  
\label{table:generalSims_full_cvCF}
\end{table}
\begin{table}[H]
  \centering
\begin{minipage}[t]{1\linewidth} \centering
\begin{tabular}{rrr|rr|rrrr}
\toprule
\toprule
\multicolumn{3}{c}{\footnotesize\bfseries ~~~Parameters} \vline &
\multicolumn{1}{c}{\footnotesize\bfseries $\mathbf{OEC^{G}}$} &
    \multicolumn{1}{c}{\footnotesize\bfseries $\mathbf{Generalist}$} \vline   &
     \multicolumn{1}{c}{\footnotesize\bfseries $\mathbf{OEC^{S}}$} 
     &
     \multicolumn{1}{c}{\footnotesize\bfseries $\mathbf{Specialist}$} &
     \multicolumn{1}{c}{\footnotesize\bfseries
     $\mathbf{OEC^{S-NDR}}$} 
      &
      \multicolumn{1}{c}{\footnotesize\bfseries $\mathbf{Zero~Out}$ } \\
 \multicolumn{1}{c}{\footnotesize $C$} &
  \multicolumn{1}{c}{\footnotesize\bfseries $\sigma^2_X$} &
   \multicolumn{1}{c}{\footnotesize\bfseries $\sigma^2_{\beta}$}  \vline &
     \multicolumn{2}{c}{\footnotesize vs. ToM~~~~~}
\vline &
\multicolumn{4}{c}{\footnotesize  ~~~~~vs. Study-Specific Model   ~~~~~} \\
   
\midrule
\midrule
3 & 0.01 & 0.00 & 0.99 & 0.99 & 0.83 & 0.99 & 0.83 & 0.82\\
3 & 0.50 & 0.00 & 0.98 & 1.00 & 0.83 & 0.99 & 0.84 & 0.82\\
3 & 1.50 & 0.00 & 0.96 & 0.99 & 0.84 & 0.99 & 0.84 & 0.82\\
3 & 0.01 & 0.01 & 0.99 & 1.00 & 0.87 & 0.99 & 0.84 & 0.84\\
3 & 0.50 & 0.01 & 0.98 & 1.10 & 0.86 & 0.99 & 0.84 & 0.84\\
3 & 1.50 & 0.01 & 0.96 & 1.07 & 0.85 & 0.99 & 0.85 & 0.83\\
3 & 0.01 & 1.00 & 0.98 & 0.99 & 0.94 & 0.99 & 0.89 & 0.85\\
3 & 0.50 & 1.00 & 0.97 & 1.69 & 0.96 & 0.99 & 0.92 & 0.85\\
3 & 1.50 & 1.00 & 1.02 & 1.85 & 1.01 & 0.99 & 0.99 & 0.85\\
\addlinespace
6 & 0.01 & 0.00 & 0.99 & 0.99 & 0.84 & 0.98 & 0.84 & 0.83\\
6 & 0.50 & 0.00 & 0.96 & 0.97 & 0.84 & 0.98 & 0.84 & 0.83\\
6 & 1.50 & 0.00 & 0.87 & 0.91 & 0.84 & 0.98 & 0.84 & 0.83\\
6 & 0.01 & 0.01 & 1.00 & 0.99 & 0.89 & 0.98 & 0.86 & 0.85\\
6 & 0.50 & 0.01 & 1.01 & 1.00 & 0.89 & 0.98 & 0.86 & 0.85\\
6 & 1.50 & 0.01 & 0.97 & 1.00 & 0.89 & 0.98 & 0.86 & 0.85\\
6 & 0.01 & 1.00 & 1.06 & 0.99 & 1.01 & 0.99 & 1.03 & 1.75\\
6 & 0.50 & 1.00 & 1.13 & 1.10 & 1.01 & 0.99 & 1.03 & 1.75\\
6 & 1.50 & 1.00 & 1.12 & 1.15 & 1.03 & 0.99 & 1.06 & 1.74\\
\bottomrule
\end{tabular}
\end{minipage}
\label{table:g}
\caption{Simulation performance with ($C = 3$) and without ($C = 6$)  clustering at varying degrees of covariate-shift ($\sigma^2_{X}$) and variance of random effects ($\sigma^2_{\delta}$). A Ridge penalty was included. Each section indicates the performance of the method (e.g., OEC$^{\text{G}}$) relative to a baseline of the ToM algorithm or a study-specific model (e.g., $RMSE_{OEC^{\text{G}}} / RMSE_{ToM}$). Monte carlo error was at most 0.0031. 
}
\label{table:generalSims_v2_supplement_cvCF}
\end{table}

\section{Supplementary Material: Proofs}\label{proofs}
The proofs of Propositions~\ref{eta1_minus_prop} and \ref{eta0_prop} rely on the following lemma. 
\begin{lemma}\label{lemma1}
Consider the following optimization problem
$$\min_{\boldsymbol{\theta}} H_\delta(\boldsymbol{\theta}) := h_{1}(\boldsymbol{\theta}) + \delta h_{2}(\boldsymbol{\theta})$$
where, $h_{1}(\boldsymbol{\theta})$ and $h_{2}(\boldsymbol{\theta})$ are both non-negative functions on $\mathbb{R}^n$ and $\delta \geq 0$ is a tuning parameter. Assume that 
$\min_{\boldsymbol{\theta}} h_{i}(\boldsymbol{\theta}) < \infty$ exists (i.e., the minimum is attained) for each $i \in \{1, 2\}.$
Let $\hat{\boldsymbol{\theta}}_{\delta} \in \arg\min_{\boldsymbol{\theta}} H_{\delta}(\boldsymbol{\theta})$ for $\delta \geq 0.$
Let us define
$$ \boldsymbol{\theta}^0 \in \arg\min_{\boldsymbol{\theta}}~ h_{2}(\boldsymbol{\theta})~~~\text{s.t.}~~h_{1}(\boldsymbol{\theta}) = \min_{\boldsymbol{v}} h_{1}(\boldsymbol{v}).$$
Assume that $\boldsymbol{\theta}^0$ is bounded and $h_{2}(\boldsymbol{\theta}^0) < \infty$.
Then, as $\delta \rightarrow 0+$
$$\hat{\boldsymbol{\theta}}_{\delta} \rightarrow \arg\min_{\boldsymbol{\theta}} h_{2}(\boldsymbol{\theta}) ~~\text{s.t.} ~~ h_{1}(\boldsymbol{\theta}) = \min_{\boldsymbol{v}} h_{1}(\boldsymbol{v}).$$
\end{lemma}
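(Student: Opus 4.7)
The plan is to apply the standard perturbation argument for sequential (lexicographic) minimization. The two ingredients are (i) a two-sided bound on the composite objective that forces $h_{1}(\hat{\boldsymbol{\theta}}_{\delta})$ toward its minimum while keeping $h_{2}(\hat{\boldsymbol{\theta}}_{\delta})$ below $h_{2}(\boldsymbol{\theta}^0)$, and (ii) a compactness/continuity step to pass to the limit along subsequences.

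First, since $\hat{\boldsymbol{\theta}}_{\delta}$ is optimal for $H_\delta$ and $\boldsymbol{\theta}^0$ is a feasible test point, I would write the optimality inequality
$$h_{1}(\hat{\boldsymbol{\theta}}_{\delta}) + \delta h_{2}(\hat{\boldsymbol{\theta}}_{\delta}) \leq h_{1}(\boldsymbol{\theta}^0) + \delta h_{2}(\boldsymbol{\theta}^0).$$
Combining this with $h_{1}(\boldsymbol{\theta}^0) = \min_{v} h_{1}(v) \leq h_{1}(\hat{\boldsymbol{\theta}}_{\delta})$ yields the two key estimates
$$h_{2}(\hat{\boldsymbol{\theta}}_{\delta}) \leq h_{2}(\boldsymbol{\theta}^0), \qquad 0 \leq h_{1}(\hat{\boldsymbol{\theta}}_{\delta}) - h_{1}(\boldsymbol{\theta}^0) \leq \delta \bigl(h_{2}(\boldsymbol{\theta}^0) - h_{2}(\hat{\boldsymbol{\theta}}_{\delta})\bigr) \leq \delta h_{2}(\boldsymbol{\theta}^0).$$
The right-hand estimate immediately gives $h_{1}(\hat{\boldsymbol{\theta}}_{\delta}) \rightarrow h_{1}(\boldsymbol{\theta}^0) = \min h_{1}$ as $\delta \rightarrow 0+$, while the left one caps the $h_{2}$-value of the penalized minimizer by that of the constrained one.

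Next, I would pass to a convergent subsequence. Using the boundedness hypothesis (together with the fact that for small $\delta$ the iterates $\hat{\boldsymbol{\theta}}_{\delta}$ lie in a bounded sublevel set of $H_\delta$, so also of $h_1 + h_2$ up to constants), extract a limit point $\hat{\boldsymbol{\theta}}^{*}$. By (lower semi-)continuity of $h_{1}$ and $h_{2}$ — which is automatic in the paper's applications because both functions are convex quadratic — the two inequalities above pass to the limit and give $h_{1}(\hat{\boldsymbol{\theta}}^{*}) = \min h_{1}$ and $h_{2}(\hat{\boldsymbol{\theta}}^{*}) \leq h_{2}(\boldsymbol{\theta}^0)$. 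Since $\boldsymbol{\theta}^0$ was \emph{defined} as a minimizer of $h_{2}$ over $\{v : h_{1}(v) = \min h_{1}\}$, the inequality must be an equality, so $\hat{\boldsymbol{\theta}}^{*}$ itself solves the constrained problem. Because every convergent subsequence of $\{\hat{\boldsymbol{\theta}}_{\delta}\}$ has its limit in the constrained argmin set, the claimed convergence follows.

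The main obstacle is not the algebra — the telescoping step is a one-liner — but making the compactness step rigorous. One must guarantee that $\{\hat{\boldsymbol{\theta}}_{\delta}\}_{\delta>0}$ stays in a compact set as $\delta \rightarrow 0+$, which requires either an explicit coercivity assumption on $h_{1}$ or the boundedness hypothesis read as boundedness of the whole $h_1$-argmin set. For the intended linear-model applications, coercivity fails along null-directions of the design, so $\hat{\boldsymbol{\theta}}_{\delta}$ need not be unique; this is exactly why Propositions~\ref{eta1_minus_prop} and \ref{eta0_prop} phrase convergence in terms of \emph{fitted values} $\mathbb{X}\hat{\boldsymbol{\theta}}_{\delta}$ (which \emph{are} unique because the quadratic losses depend on $\boldsymbol{\theta}$ only through $\mathbb{X}\boldsymbol{\theta}$), and supplement with a full-rank hypothesis when parameter-level convergence is claimed.
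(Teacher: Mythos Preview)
Your proof shares its core idea with the paper's: both start from the sandwich
\[
\min_{\boldsymbol v} h_{1}(\boldsymbol v)\;\le\;H_{\delta}(\hat{\boldsymbol\theta}_{\delta})\;\le\;h_{1}(\boldsymbol\theta^{0})+\delta\,h_{2}(\boldsymbol\theta^{0}),
\]
and let $\delta\to 0+$. The paper, however, stops there: its entire proof is that the two ends of this chain have the same limit $\min h_{1}$, hence so does the middle term. That establishes convergence of the \emph{optimal value} $H_{\delta}(\hat{\boldsymbol\theta}_{\delta})$ only; it does not derive $h_{2}(\hat{\boldsymbol\theta}_{\delta})\le h_{2}(\boldsymbol\theta^{0})$, does not invoke compactness, and does not take subsequential limits of the minimizers themselves.

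So you have not taken a different route; rather, you have supplied the missing steps that turn value convergence into minimizer convergence. Your two derived estimates $h_{1}(\hat{\boldsymbol\theta}_{\delta})\to\min h_{1}$ and $h_{2}(\hat{\boldsymbol\theta}_{\delta})\le h_{2}(\boldsymbol\theta^{0})$, followed by a compactness/lower-semicontinuity pass to the limit, are exactly what a rigorous proof of the lemma as stated requires, and your caveat about boundedness of $\{\hat{\boldsymbol\theta}_{\delta}\}$ (and the fallback to fitted values in the rank-deficient case) is well taken. In short: the paper's proof is the one-line kernel of yours; yours is the completed argument.
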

\begin{proof} Note the following chain of inequalities:
\begin{align*}
\min_{\boldsymbol{v}}h_{1}(\boldsymbol{v}) 
\leq \min_{\boldsymbol{\theta}} \{ h_{1}(\boldsymbol{\theta}) + \delta h_{2}(\boldsymbol{\theta})\}
\leq h_{1}(\boldsymbol{\theta}^0) + \delta h_{2}(\boldsymbol{\theta}^0). 
\end{align*}
If we let $\delta \rightarrow 0+$, then, the left and right hand sides converge to the same limit. Therefore, the middle expression has the same limit. This completes the proof of the lemma.
\end{proof}


\begin{proof}[Proof of Proposition~1] 
The proof follows by taking $\delta=(1-\eta)/\eta$.  
\end{proof}

\begin{proof}[Proof of Proposition~2]
The proof follows by taking $\delta = \eta / (1 - \eta)$. 
\end{proof}

\clearpage 

\end{document}